\numberwithin{equation}{section} \theoremstyle{plain}
\newtheorem{lemma}{Lemma}[section]
\newtheorem{corollary}{Corollary}[section]
\newtheorem{proposition}{Proposition}[section]
\newtheorem{remark}{Remark}[section]
\begin{document}

\newcommand{\gai}[1]{{#1}}

\makeatletter
\def\ps@pprintTitle{%
  \let\@oddhead\@empty
  \let\@evenhead\@empty
  \let\@oddfoot\@empty
  \let\@evenfoot\@oddfoot
}
\makeatother

\newcommand\tabfig[1]{\vskip5mm \centerline{\textsc{Insert #1 around here}}  \vskip5mm}
\vskip2cm

\title{Early-stopping for Transformer model training}
\author{\quad Jing He\thanks{Co-first Author, School of Mathematics, Shandong University, PR China, (hjing@mail.sdu.edu.cn).}
	\quad Hua Jiang\thanks{Co-first Author, School of Mathematics, Shandong University, PR China, (jh@mail.sdu.edu.cn).}
    \quad Cheng Li \thanks{Huawei Technologies Ltd. PR China, (cheng.li8@huawei.com). }
	\quad Siqian Xin\thanks{Shandong University-Zhong Tai Securities Institute for Financial Studies, Shandong University, PR China, (xinsiqian@mail.sdu.edu.cn).}
	\quad Shuzhen Yang\thanks{Corresponding author, Shandong University-Zhong Tai Securities Institute for Financial Studies, Shandong University, PR China, (yangsz@sdu.edu.cn).}
}
\date{}
\maketitle
\begin{abstract}
This work, based on Random Matrix Theory (RMT), introduces a novel early-stopping strategy for Transformer training dynamics. Utilizing the Power Law (PL) fit to tansformer attention matrices as a probe, we demarcate training into three stages: structural exploration, heavy-tailed structure stabilization, and convergence saturation. Empirically, we observe that the spectral density of the shallow self-attention matrix $V$ consistently evolves into a heavy-tailed distribution. Crucially, we propose two consistent and validation-set-free criteria: a quantitative metric for heavy-tailed dynamics and a novel spectral signature indicative of convergence. The strong alignment between these criteria highlights the utility of RMT for monitoring and diagnosing the progression of Transformer model training.
\end{abstract}

\noindent Keywords: Transformer; Early stopping; Heavy-Tailed; Power-Law fitting
\newpage
\section{Introduction}
Behind the remarkable success of large language models (LLMs), exemplified by the Transformer architecture, lie substantial training costs and persistent risks of overfitting \citep{fournier2023practical,varis2021sequence}.
According to the well-established scaling law, the performance of an LLM is closely tied to its number of parameters, the quantity of training data, and the computational budget-factors that collectively drive the trend toward ever-larger models \citep{kaplan2020scaling}.
Within this context, early stopping serves as a crucial regularization technique for conserving computational resources and mitigating overfitting once the validation performance saturates\citep{kuken2025early,wu2025benefits,favero2025bigger}.

The classical early stopping approach relies on validation set performance \citep{prechelt2002early, muhammad2022early}.
However, this strategy exhibits clear limitations, particularly when the training and validation distributions diverge \citep{taori2020measuring} or when high-quality validation data are unavailable \citep{yuan2024early}.
\citet{heckel2021early} further revealed the double descent phenomenon, suggesting that naive validation monitoring may fail to capture the true optimal stopping point.
Consequently, recent research on early stopping methods has increasingly explored alternative approaches aimed at reducing or even eliminating the reliance on validation sets \citep{laaouach2025halt,miseta2024surpassing, ameryan2023limit}. One promising direction is to focus on intrinsic model signals that more accurately reflect a model’s generalization capability.
Among these, the spectral distribution of weight matrices-analyzed through the lens of Random Matrix Theory (RMT) has emerged as a powerful analytical tool.

Integrating RMT with regularization techniques enables model compression through spectral pruning without sacrificing accuracy, thereby demonstrating RMT’s potential to enhance deep learning performance \citep{berlyand2024enhancing, berlyand2025pruning}.
This principle of separating signal from noise provides a theoretical foundation for understanding and optimizing neural architectures.
For many state-of-the-art deep networks, the spectra of weight matrices exhibit heavy-tailed (HT) behavior, well-described by a power-law distribution \citep{meng2023impact, martin2021implicit}.
This phenomenon is not coincidental but reflects an implicit regularization mechanism that emerges during training—known as the Heavy-Tailed Self-Regularization (HT-SR) theory \citep{martin2021implicit}.
A key corollary of this theory is that ``shape metrics'' derived from spectral properties can directly predict a model’s generalization ability without access to training or test data.
Subsequent studies have confirmed the effectiveness of these metrics for model selection \citep{yang2023test} and linked them to complex learning behaviors such as ``Grokking'', underscoring the potential of spectral analysis \citep{prakash2025grokking}.

From a dynamic viewpoint, HT-SR theory characterizes deep neural network (DNN) training as an evolving process of spectral shape transformation.
\citet{martin2021implicit} proposed a ``5+1 phase'' framework to describe the progression of implicit self-regularization from weak to strong, while \citet{meng2023impact} later distilled this into a more practical three-stage model, revealing its close relationship to task difficulty.
This insight motivates a new type of early stopping criterion: DNN training can be terminated by monitoring changes in the spectral distribution shape.

Despite the Transformer’s widespread adoption \citep{vaswani2017attention}, most existing spectral studies have analyzed only the overall weight matrices of Transformer layers \citep{yang2023test, martin2021predicting}.
Few have investigated the internal spectral dynamics of the Transformer’s core component the attention mechanism specifically its Query (Q), Key (K), and Value (V) matrices.
Current Transformer-oriented early stopping methods either continue to rely on validation-based monitoring \citep{muhammad2022early} or focus on dynamic early-exit mechanisms for accelerating inference \citep{kuken2025early}.The Q, K, and V matrices play pivotal roles in information processing and representation learning within Transformer architectures \citep{anonymous2025from, borji2023key}. Their spectral dynamics are inherently linked to the model’s learning trajectory and ultimate generalization performance.
Building upon this observation, we propose a new research direction for early stopping: leveraging the spectral distributions and heavy-tailed characteristics of Q, K, and V matrices to infer the training state and derive a novel early stopping criterion.

We summarize our contributions as follows:
\begin{enumerate}
  \item Our analysis and empirical validation reveal that the self-attention V matrix from the first encoder layer (\texttt{en.0.s.a.V}) serves as a stable and sufficient proxy for monitoring the overall training dynamics of Transformer models.
Its spectral evolution exhibits both robustness and representativeness, obviating the need to track multiple matrices.
\item Based on the dynamics of \texttt{en.0.s.a.V} matrix, we conceptualize Transformer training as a novel three-phase process:
(i) Structural Exploration and Drastic Adjustment,
(ii) Heavy-Tailed Structure Formation and Stabilization, and
(iii) Performance Saturation and Convergence Plateau.
\item By fitting the spectral tail with a power-law (PL) distribution, we design a quantitative indicator that tracks the emergence of heavy-tailed structures and propose a spectral-based criterion for early stopping without requiring a validation set.
This criterion effectively identifies the optimal early stopping window during the late stabilization phase.

\end{enumerate}

The remainder of this paper is organized as follows.
Section \ref{sec:Formulation} introduces the formulation of the proposed spectral early stopping criterion.
Section \ref{sec:main section} describes the selection of the core matrix, the identification of three distinct training phases based on PL fitting, the derivation of the heavy-tailed recognition metric, and the final early stopping criterion.
Section \ref{sec:conclusion} concludes the paper, and the \nameref{Appendix} provides additional experimental details and theoretical analyses.

\section{Formulation}\label{sec:Formulation}
This paper aims to identify the training stage of Transformer models by determining the most representative core matrix (\texttt{en.0.s.a.V}) and analyzing its heavy-tailed characteristics. Based on this analysis, we establish a spectral-based early stopping criterion that guides the training process, deepens the understanding of the model’s internal dynamics, and effectively reduces training time.

To identify and analyze the most representative core matrix of the Transformer model, we begin with its fundamental components-the Query ($\mathbf{Q}$), Key ($\mathbf{K}$), and Value ($\mathbf{V}$) matrices.
Taking the Query matrix $\mathbf{Q}$ as an example (the same analysis applies to $\mathbf{K}$ and $\mathbf{V}$), we assume it has dimensions $N \times M$, where $N \geq M$; otherwise, we analyze its transpose.
The correlation matrix is defined as $\mathbf{X} = \mathbf{Q}^\top \mathbf{Q}$.
Let the eigenvalues of $\mathbf{X}$ be denoted by $\{\lambda_j\}_{j=1}^M$, where each $\lambda_j$ equals the square of the corresponding singular value $\sigma_j$ of $\mathbf{Q}$, i.e., $\lambda_j = \sigma_j^2$.
The Empirical Spectral Density (ESD) of $\mathbf{Q}$ is then defined as the empirical distribution of the eigenvalues of $\mathbf{X}$.
\begin{align}
\label{eq:ESD}
F^{Q}(x) \overset{\triangle}{=} \frac{1}{N}\sum_{i=1}^{N}\mathbf{I}(\lambda_i \le x),\quad x \in \mathbb{R},
\end{align}
where $\mathbf{I}(\cdot)$ is the indicator function.

Since the power law (PL) effectively characterizes the spectral tails of weight matrices exhibiting heavy-tailed behavior \citep{meng2023impact, martin2021implicit}, we perform PL fitting on all $\mathbf{Q}$, $\mathbf{K}$, and $\mathbf{V}$ matrices to evaluate the stability and consistency of their heavy-tailed evolution. The results indicate that the $\mathbf{V}$ matrix of the first self-attention layer (\texttt{en.0.s.a.V}) is the most representative, providing empirical validation for our hypothesis.

We further analyze the evolution of PL fitting parameters for the \texttt{en.0.s.a.V} matrix over 205 training epochs under three parameter settings. The spectral parameters exhibit highly consistent and distinct three-phase dynamics:
\begin{enumerate}
\item \textbf{Structural Exploration and Drastic Adjustment:} The model transitions from random initialization, during which the weight matrix undergoes large-scale restructuring.
\item \textbf{Heavy-Tailed Structure Formation and Stabilization:} The matrix develops a stable heavy-tailed spectral structure.
\item \textbf{Performance Saturation and Convergence Plateau:} The model’s primary feature-learning process is largely complete.
\end{enumerate}

Using the Kolmogorov-Smirnov (KS) statistic, we show that the convergence rate of the distance between the empirical cumulative distribution function (ECDF) of the \texttt{en.0.s.a.V} matrix and the theoretical cumulative distribution function (CDF) of the PL fit is $O_p \left( \frac{1}{\sqrt{n_{\text{tail}}}}\right)$. Consequently, when the KS distance $\tilde{d}$ between the ESD of \texttt{en.0.s.a.V} and the fitted power-law distribution falls below the threshold $d^* = \frac{C}{\sqrt{n_{\text{tail}}}}$, where $C$ is a critical constant estimated via Monte Carlo simulation, the ESD exhibits heavy-tailed behavior.
From this result, we derive the early stopping criterion
\begin{align*}
 \max_{\text{epoch}}{\{d^* - \tilde{d}\}},
\end{align*}
 where training should be terminated when $d^* - \tilde{d}$ reaches its maximum. At this point, the model achieves the strongest implicit self-regularization and optimal generalization capacity.

By integrating the Heavy-Tailed Self-Regularization (HT-SR) framework with spectral analysis of the Transformer’s core matrix \texttt{en.0.s.a.V}, we propose a robust early stopping criterion driven solely by intrinsic model signals. This approach provides deeper insight into the internal dynamics of large language models and facilitates improved model efficiency and generalization.
\section{Early Stopping Mechanism Based on PL Fitting}\label{sec:main section}
\subsection{Models setting and Datasets destails}
To investigate the evolution of the Transformer’s core matrices during training, we construct three Transformer variants with distinct architectural configurations, denoted as T1, T2, and T3 models. Except for these structural variations, all other hyperparameters remain identical across the models. The complete configuration details are provided in Table~\ref{tab:model_config}.
\begin{table}[htbp]
	\centering
    \small
	\caption{Model Architecture Configuration}
	\label{tab:model_config}
	\begin{tabular}{lccccc}
			\toprule
			\textbf{Model} & \textbf{Param} & \textbf{n\_layers} & \textbf{d\_model} & \textbf{d\_ff} & \textbf{n\_heads} \\
			\midrule
			T1 & 9.60M & 3 & 400 & 800  & 4 \\
			T2 & 9.62M & 2 & 400 & 1800 & 4 \\
			T3 & 19.20M & 6 & 400 & 800  & 4 \\
			\bottomrule
	\end{tabular}
\end{table}
To investigate the properties of the weight matrices in the Transformer model, we conduct experiments on the English-to-Chinese (En-Ch) machine translation task. The dataset statistics and relevant details are summarized in Table~\ref{tab:dataset_stats}. The translation corpus is obtained from a publicly available parallel dataset.\footnote{The English–Chinese dataset (CWMT) is available on Baidu AI Studio: \url{https://aistudio.baidu.com/datasetdetail/161162}}
In the subsequent experiments, the data are divided into training and validation sets with a 9:1 ratio.
\begin{table}[t]
	\centering
	\caption{Experimental Dataset Statistics}
	\label{tab:dataset_stats}
	\begin{tabular}{llcc}
		\toprule
		\textbf{Task} & \textbf{Language} & \textbf{Tokens (Millions)} & \textbf{Vocab Size} \\
		\midrule
		\multirow{2}{*}{En-Ch} & Chinese & 11.1 & 16,011 \\
		& English & 12.7 & 6,524 \\
		\bottomrule
	\end{tabular}
\end{table}
All experiments were conducted on the same high-performance computing platform.
To eliminate the influence of hardware variability, all models were trained using identical equipment.
During training, the models were optimized with the Adam optimizer, using a learning rate (\texttt{lr}) of 0.0001 and a batch size of 448.\footnote{Due to hardware constraints, we empirically found that a batch size of 448 minimizes training time while maintaining stable convergence.}
The maximum sequence lengths for both the encoder and decoder (\texttt{src\_len} and \texttt{tgt\_len}) were set to 30.
To further enhance generalization, a dropout rate of 0.2 was applied during training.

\subsection{Selection of Core Matrix by Power Law distribution for Transformer}\label{sec: choose and alanyis en.0.s.a.v}

For Transformer models, the attention matrices $\mathbf{Q}$, $\mathbf{K}$, and $\mathbf{V}$ are continuously optimized during training, making them central components of the model’s parameter space. However, synchronously tracking and analyzing all $\mathbf{Q}$, $\mathbf{K}$, and $\mathbf{V}$ matrices across every attention layer in large-scale models is computationally prohibitive. Moreover, not all matrices exhibit stable and consistent correlations with model performance throughout training. Therefore, it is necessary to identify the most representative matrix as an observation window to gain deeper insight into the internal dynamics of model training. The key question then arises: what criteria should guide the selection of such a matrix?

According to Random Matrix Theory (RMT), the spectral distribution of an untrained random matrix theoretically follows the Marchenko–Pastur (MP) distribution. However, our experiments reveal that even in the early stages of training, the weight matrices of Transformer models deviate substantially from the MP distribution. Instead, their empirical spectral densities (ESDs) exhibit prominent heavy-tailed behavior. This observation suggests that the MP distribution is not an adequate descriptor of weight-matrix properties in large language models \citep{prakash2025grokking}. To better capture this intrinsic heavy-tailed nature, we fit the ESDs of key matrices using a power-law distribution and use this fit as the criterion for identifying representative matrices. Specifically, this study evaluates the $\mathbf{Q}$, $\mathbf{K}$, and $\mathbf{V}$ matrices along two dimensions: \textbf{Significance of heavy-tailed characteristics}: Identify the matrix that exhibits the strongest heavy-tailed signature via spectral analysis, thereby justifying the use of the power-law model for quantitative characterization. \textbf{Stability of dynamic evolution}: Select the matrix whose power-law parameters evolve most smoothly and consistently during training. 

\citet{gurbuzbalaban2021heavy} established that the heavy-tailed phenomenon observed in the eigenvalue spectra of deep neural network weight matrices reflect an intrinsic regularization effect of gradient-descent-based optimization. \citet{song2024unraveling} further elucidated the distinct optimization dynamics among different weight matrices in Transformer models under quadratic loss. Building upon these theoretical foundations, we propose Proposition~\ref{prop:wv_convergence} (see \nameref{Appendix} \ref{sec: Choose V matrix}), demonstrating that analogous conclusions extend to machine translation tasks. Specifically, in a single-layer Transformer under over-parameterization (i.e., sufficiently large embedding dimension D) and proper initialization, when $\mathbf{Q}$ and $\mathbf{K}$ remain fixed while only $\mathbf{V}$ undergoes gradient descent updates, the cross-entropy loss converges linearly to the global optimum. This behavior originates from the softmax kernel in the attention mechanism: while the optimization trajectories of $\mathbf{Q}$ and $\mathbf{K}$ are nonlinearly coupled with the kernel, the dynamics of $\mathbf{V}$ remain approximately linear, thus providing a cleaner observational window.

Guided by this theoretical insight, we advance the hypothesis that the spectral distribution of $\mathbf{V}$ more reliably reflects the model’s implicit regularization state. To empirically validate this hypothesis and determine which attention matrix exhibits the most pronounced heavy-tailed behavior, we design a systematic comparative analysis framework.

For a given attention matrix \(\mathbf{W}\), the fitting procedure is implemented as follows:

\textbf{Step 1: Eigenvalue Selection}
\par All eigenvalues of the correlation matrix \(\mathbf{X} = \mathbf{W}^\top\mathbf{W}\) are computed. The Kolmogorov-Smirnov (KS) test is then applied to determine the optimal lower bound for model fitting, denoted as \(x_{\min}\). Subsequent analysis focuses on the eigenvalue tail, comprising all eigenvalues satisfying  \(\geq x_{\min}\).

\textbf{Step 2: Model Fitting}
\par Two candidate distributions are fitted to this eigenvalue tail. Using maximum likelihood estimation (MLE), we estimate the optimal exponent \(\alpha\) for the power-law model and the decay rate \(\lambda\) for the exponential model.

Two candidate distributions are fitted to this eigenvalue tail. Using maximum likelihood estimation (MLE), we estimate the optimal exponent $\alpha$ for the power-law model and the decay rate $\lambda$ for the exponential model, respectively.

For the continuous power-law probability density function,

\begin{equation}\label{eq:PL PDF}
    p(x; \alpha, x_{\min}) = \frac{\alpha-1}{x_{\min}}\left(\frac{x}{x_{\min}}\right)^{-\alpha},
\end{equation}
where \(\alpha\) represents the scaling parameter (or Power-Law exponent), and \(x_{\min}\) denotes the lower threshold above which power-law behavior emerges. The MLE for \(\alpha\) is given by:
\begin{equation}
    \hat{\alpha} = 1 + n \left[ \sum_{i=1}^{n}\ln\frac{x_i}{x_{\min}} \right]^{-1}.
\end{equation}

For the continuous exponential probability density function,
\begin{equation}
	p(x; \lambda, x_{\min}) = \lambda e^{-\lambda (x - x_{\min})}, \label{eq:exp_pdf}
\end{equation}
where $\lambda$ represents the decay rate, and $x_{\min}$ denotes the lower threshold as defined above. The MLE for $\lambda$ is given by:
\begin{equation}
	\hat{\lambda} = n \left[ \sum_{i=1}^{n} (x_i - x_{\min}) \right]^{-1}. \label{eq:exp_mle}
\end{equation}

Following \citet{alstott2014powerlaw}, we posit that if a power-law model does not significantly outperform an exponential model in fitting empirical data, there is insufficient evidence to classify the distribution as heavy-tailed. Consequently, we employ the exponential distribution as a competitive baseline and use log-likelihood ratio testing to determine which model more accurately captures heavy-tailed phenomena in the spectral data.If \(\mathcal{R} > 0\), the data are more likely to follow a power-law distribution than an exponential one, providing evidence of heavy-tailed behavior. Otherwise, heavy-tailedness cannot be inferred.To assess the statistical significance of this result, we adopt the procedure proposed by \citet{clauset2009power} to compute a  \(p\)-value. If \(p < 0.1\) (a commonly used significance threshold), the empirical data are considered significantly more consistent with a power-law distribution, indicating that the matrix exhibits statistically significant heavy-tailed spectral characteristics.  

We summarize the procedure in Algorithm \ref{alg:test_power_law_superiority}. Due to space constraints, detailed fitting results are deferred to \nameref{Appendix}~\ref{sec:choose en.0.s.a.v}. Crucially, these results reveal that the \texttt{en.0.s.a.v} matrix consistently exhibits pronounced heavy-tailed characteristics throughout the training process. This empirical evidence fulfills our objective of identifying the matrix with \textbf{the strongest heavy-tailed signature}.

\begin{algorithm}[t]
	\caption{Test for Power-Law Superiority}
	\label{alg:test_power_law_superiority}
	\begin{algorithmic}
		\STATE {\bfseries Input:} Weight matrix $\mathbf{W}$, Significance threshold $p_{\text{thres}} = 0.1$
		\STATE {\bfseries Output:} $\text{is\_heavy\_tailed}$, $x_{\min}$, $\hat{\alpha}$, $\hat{\lambda}$, $\mathcal{R}$, $p\text{-value}$
		
		\STATE \textbf{Step 1: Eigenvalue Selection}
		\STATE \quad $\mathbf{X} = \mathbf{W}^\top \mathbf{W}$
		\STATE \quad $\text{Eigval} = \text{compute\_eigval}(\mathbf{X})$
		\STATE \quad $x_{\min} = \text{ks\_test}(\text{Eigval}, \text{target=``power\_law''})$
		\STATE \quad $\text{Eigval\_tail} = \{\xi \in \text{Eigval} \mid \xi \geq x_{\min}\}$
		
		\STATE \textbf{Step 2: Model Fitting}
		\STATE \quad $\hat{\alpha} = \text{mle\_pl}(\text{Eigval\_tail}, x_{\min})$
		\STATE \quad $\hat{\lambda} = \text{mle\_exp}(\text{Eigval\_tail})$
		
		\STATE \textbf{Step 3: Model Comparison and Heavy-Tailedness Test}
		\STATE \quad $\log\mathcal{L}_{\text{PL}} = \sum_{\xi \in \text{Eigval\_tail}} \ln\left(\text{PL}_{\text{PDF}}(\xi, \hat{\alpha}, x_{\min})\right)$
		\STATE \quad $\ln\mathcal{L}_{\text{Exp}} = \sum_{\xi \in \text{Eigval\_tail}} \ln\left(\text{EXP}_{\text{PDF}}(\xi, \hat{\lambda})\right)$
		\STATE \quad $\mathcal{R} = \ln\mathcal{L}_{\text{PL}} - \ln\mathcal{L}_{\text{Exp}}$
		\STATE \quad $p\text{-value} = \text{clauset2009\_p}(\text{Eigval\_tail}, \hat{\alpha}, x_{\min}, \hat{\lambda})$
		\STATE \quad $\text{is\_heavy\_tailed} = (\mathcal{R} > 0) \land (p\text{-value} < p_{\text{thres}})$
		
		\STATE \textbf{Return} $\text{is\_heavy\_tailed}, x_{\min}, \hat{\alpha}, \hat{\lambda}, \mathcal{R}, p\text{-value}$
	\end{algorithmic}
\end{algorithm}

Furthermore, we performed power-law parameter estimation on the \(\mathbf{Q}\), \(\mathbf{K}\), and \(\mathbf{V}\) matrices across all attention layers of the Transformer models (T1\_En-Ch9.60M, T2\_En-Ch9.62M). The corresponding results regarding the \textbf{stability of dynamic evolution} are detailed in \nameref{Appendix}~\ref{sec:en.0.s.a.v more stable}.

Comprehensive experimental analysis reveals a consistent pattern: the \textbf{\(\mathbf{V}\) matrix in the first self-attention layer of the encoder} (\texttt{en.0.s.a.v}) exhibits the most pronounced and stable heavy-tailed behavior, with clearly evolving spectral parameters and strong cross-model consistency.

\subsection{Three Training Phases for Transformer model training}\label{sec:Three training phases}
As discussed in Section~\ref{sec: choose and alanyis en.0.s.a.v}, the \texttt{en.0.s.a.v} matrix exhibits a more pronounced heavy-tailed phenomenon, and its power-law (PL) fitting demonstrates a highly stable evolutionary pattern. This predictable evolution makes the \texttt{en.0.s.a.v} matrix an ideal observation window for uncovering the intrinsic relationship between the dynamics of this layer and the overall convergence state of the model. Based on this finding, we select the \texttt{en.0.s.a.v} matrix as the primary object of analysis to further investigate the evolution of its spectral parameters throughout training.

The analysis in \nameref{Appendix} \ref{sec: heavy-tail}  has shown that the fitted parameters of the model begin to stabilize after approximately 50 epochs. To capture this transition precisely—from an initial fluctuating phase to a convergent state—we analyze 1-55,100-105,150-155,200-205 training epochs of this layer, focusing on the evolution of its PL fitting parameters. 
The detailed trajectories of the fitted parameters ($\alpha$ and $x_{min}$) for the \texttt{en.0.s.a.v} matrix, tracked under the experimental configuration \textbf{T3\_En-Ch19.20M} as a function of training epochs, are shown in Figure~\ref{fig:alpha xmin for en.0.s.a.v}; the corresponding plots for \textbf{T1\_En-Ch9.60M} and \textbf{T2\_En-Ch9.62M}\footnote{\textbf{T1\_En-Ch9.60M:} T1 model (En-Ch dataset). \textbf{T2\_En-Ch9.62M:} T2 model (En-Ch dataset). \textbf{T3\_En-Ch19.20M:} T3 model (En-Ch dataset).}
 are provided in Figure~\ref{fig:alpha xmin for en.0.s.a.v in appendix} from \nameref{Appendix} \ref{supplementary_three_phases}.

\begin{figure}[h]
	\centering \includegraphics[width=1\linewidth]{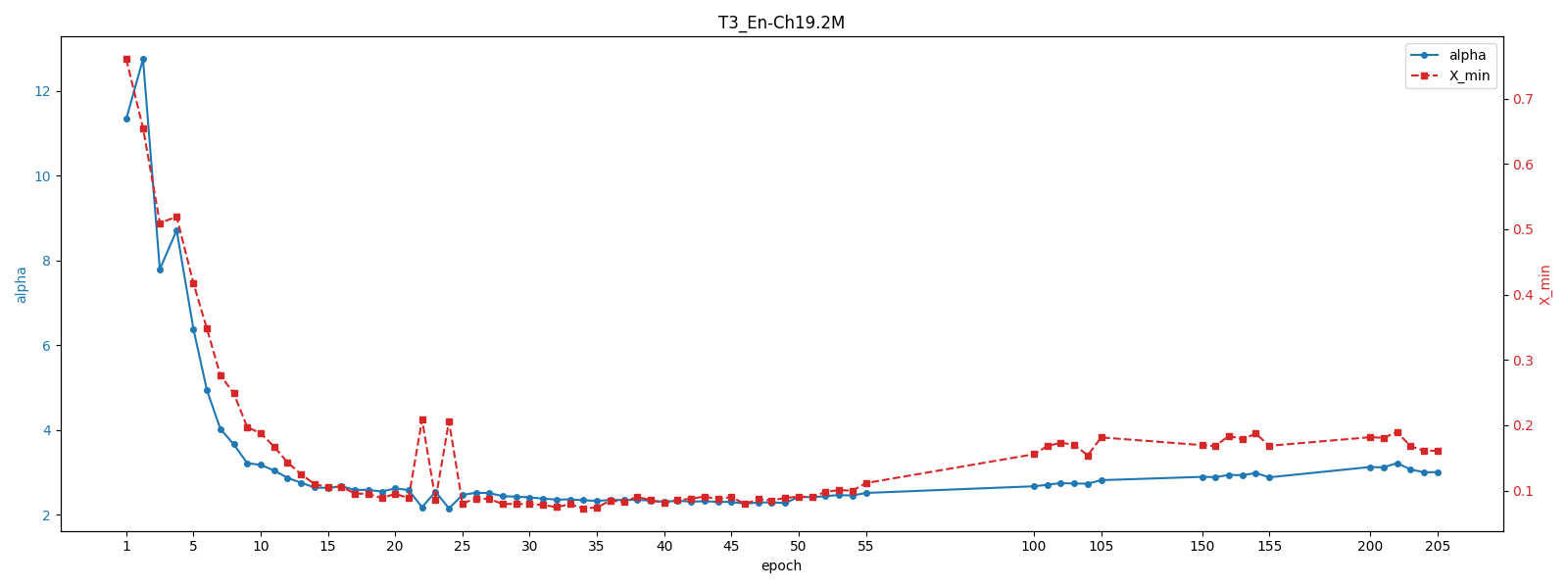}
	\label{fig:alpha xmin for en.0.s.a.v in T1_En-Ch 9.62M}
	\caption{Evolution of $\alpha$ and $x_{\text{min}}$ for \texttt{en.0.s.a.v}  matrix of \textbf{T2\_En-Ch9.62M}. The blue solid line (left Y-axis) represents the parameter $\alpha$, while the red dashed line (right Y-axis) corresponds to the PL lower bound parameter $x_{\text{min}}$.}
	\label{fig:alpha xmin for en.0.s.a.v}
\end{figure}

The power-law (PL) fitting parameter $\alpha$ is a key metric for evaluating model training stability and feature learning quality \citet{martin2021predicting}. To investigate the consistency of model training evolution under different parameter settings, we conducted a density statistical analysis on the $\alpha$ values derived from PL fitting of the \texttt{en.0.s.a.v} matrices across three groups of experiments (T1, T2, T3), with the corresponding density histograms shown in Figure \ref{fig:nofix}. The results indicate that the $\alpha$ values of the three experiments are mainly concentrated in the range of 2–4. Among them, the density histograms of Experiments T1 and T2 (with identical parameter settings) exhibit a high degree of overall similarity, demonstrating that under consistent parameter configurations, the model's training evolution process possesses extremely high reproducibility and similarity. In contrast, during the training of Model T3 (with different parameter settings), obvious outliers of $\alpha$ values in the range of 10–13 were observed, these outliers correspond to the initial stage of model training.The T3 model with a larger parameter scale requires significant global structural reorganization to complete the initial construction of an effective feature space, a process that directly leads to the abnormal deviation of $\alpha$ values.

The optimal fitting threshold \(x_{min}\) in the power-law (PL) fitting process varies with the dynamic update of the \texttt{en.0.s.a.v} matrix, while in the later stages of model training, \(x_{min}\) exhibits reduced fluctuations and tends to stabilize. To eliminate the interference of $x_{min}$ fluctuations on the statistical results of $\alpha$ and ensure the consistency of the analysis, we fixed $x_{min} = 0.1$ and re-performed PL fitting and density statistics. At this point, the PL exponents ($\alpha$) of the \texttt{en.0.s.a.v} matrices for the three models are mainly concentrated between 1.8 and 2.6, showing a more centralized distribution, with the corresponding density distribution diagrams presented in Figure \ref{fig:fix}.
\begin{figure}[htbp]
	\centering
	\begin{subfigure}[b]{0.45\linewidth}
		\centering
  \includegraphics[width=1\linewidth]{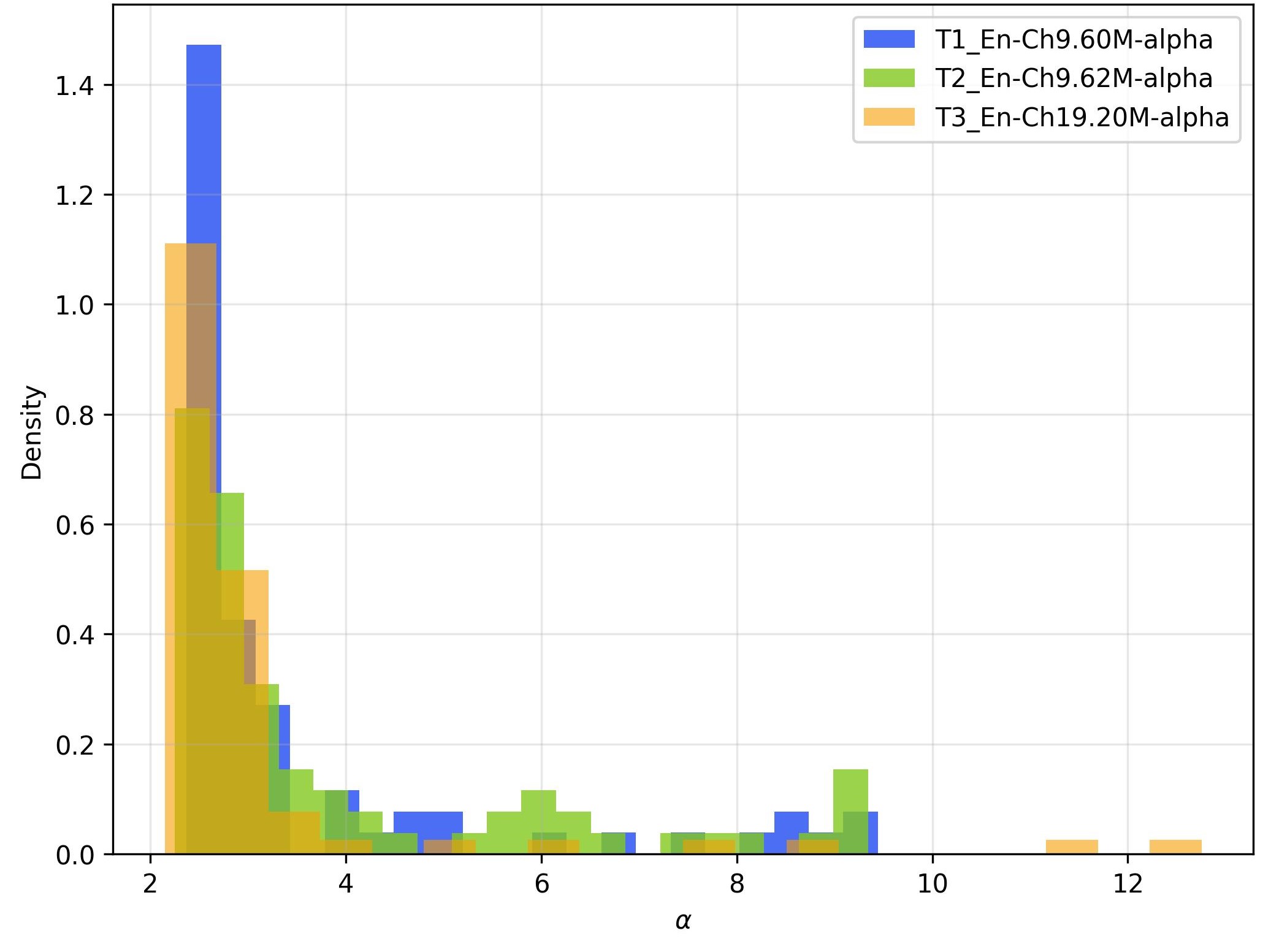}
  \caption{No fix $x_{min}$}\label{fig:nofix}
	\end{subfigure}
	\begin{subfigure}[b]{0.45\linewidth}
		\centering
  \includegraphics[width=0.97\linewidth]{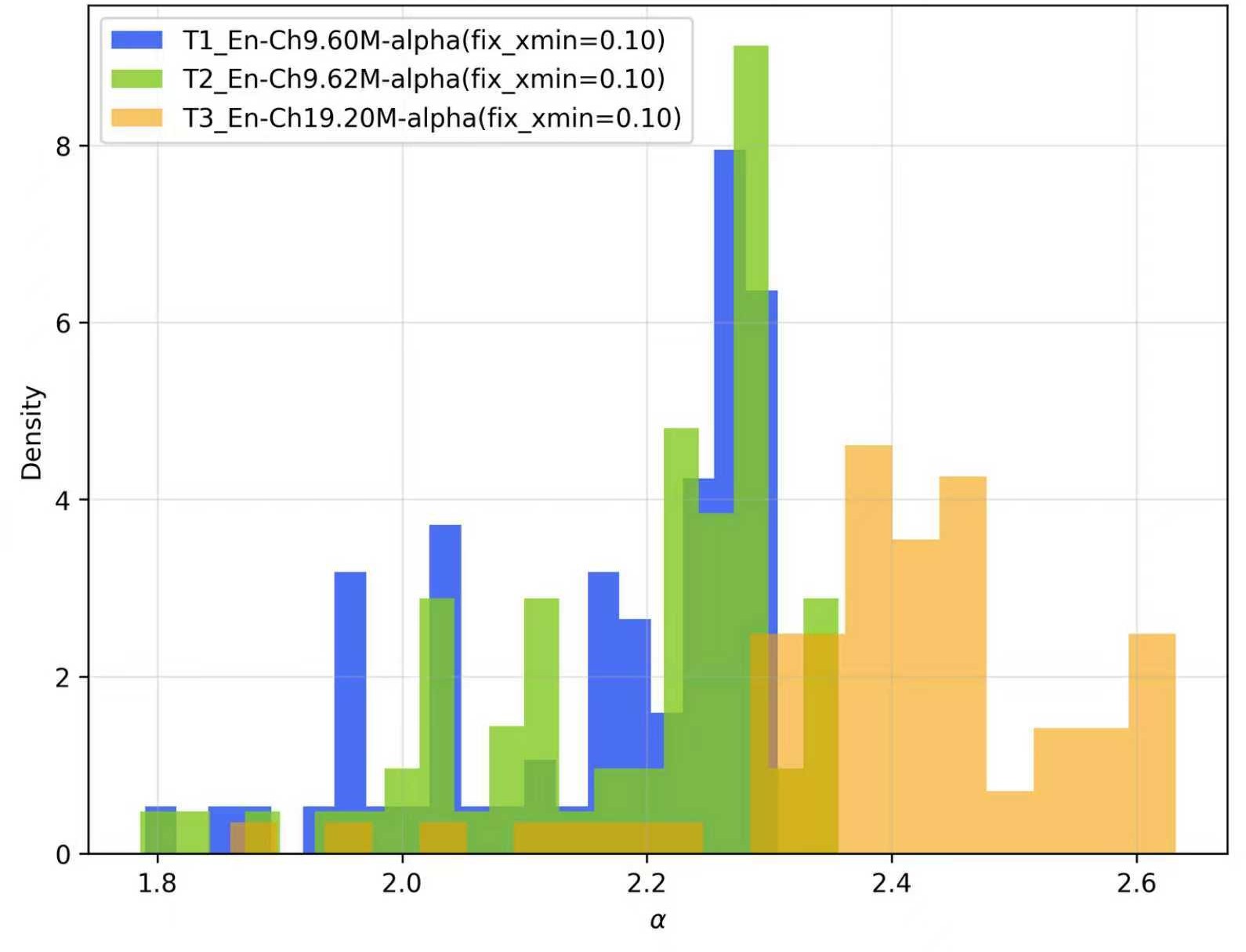}
  \caption{Fix $x_{min} = 0.1$}\label{fig:fix}
	\end{subfigure}
    \caption{PL epxonents density histograms of T1, T2, and T3 models}
\end{figure}

From the statistical results after fixing $x_{min} = 0.1$ (selected based on the approximate empirical values from the stable heavy-tailed phase of the three experimental groups), the $\alpha$ distributions of Models T1 and T2 (with the same parameter settings) highly overlap, indicating that the feature extraction logic and effects of models with identical parameters have good consistency. In contrast, the $\alpha$ values of Model T3 are generally larger, showing a significant right-shift feature in its distribution, but the distribution patterns of the three models still maintain a certain degree of similarity. The above results confirm that although there are slight differences in feature extraction accuracy and the extent of generalization ability improvement among models with different parameters, the evolutionary laws of the core feature extraction process have inherent consistency. This empirical result provides critical support for the rationality and generality of the following three stages of model training and the unified early stopping guidance framework proposed in this paper.

The spectral parameters of the \texttt{en.0.s.a.v}  matrix exhibit a highly consistent and distinct three-phase evolutionary dynamic across all three experimental setups.

\textbf{Phase I: Structural Exploration and Drastic Adjustment.}

During the initial training epochs, the spectral parameters undergo the most substantial fluctuations. Both the PL exponent $\alpha$ and the minimum eigenvalue threshold $x_{\min}$ remain elevated and exhibit considerable volatility. This behavior indicates that the model is transitioning away from its random initialization, undergoing large-scale global structural reorganization to rapidly explore and locate an effective feature space.

\textbf{Phase II: Heavy-Tailed Structure Formation and Stabilization.}
Following the initial exploration, the system enters a prolonged phase of stable evolution. The hallmark of this phase is the convergence of $\alpha$ to a low, stable value (consistently $\alpha < 3$, clustering around $2.5$), which is maintained over hundreds of epochs. This stabilization signals the formation of a robust heavy-tailed spectral structure within the weight matrices—a key mechanism linked to \textit{Implicit Regularization} in deep learning. Concurrently, $x_{\min}$ decreases and stabilizes at a low magnitude (approximately $0.1$).

\textbf{Phase III: Performance Saturation and Convergence Plateau.}
In the later training stages, the model enters a saturation phase. The most notable parametric signature is a gradual upward trend in $x_{\min}$, which correlates with the diminishing returns in performance gains. This suggests that the primary feature-learning process is largely complete; subsequent training serves primarily to fine-tune the model, indicating a convergence plateau.

Based on the evolutionary patterns illustrated in Figure~\ref{fig:alpha xmin for en.0.s.a.v}, the training processes for the three experimental configurations can be quantitatively segmented into these distinct phases. The corresponding phase durations and characteristics are summarized in Table~\ref{tab:4}

\begin{table}[htbp]
	\centering
	\caption{Epoch-Based Division of Training Stages for Each Model}
	\label{tab:4}
	\begin{tabular}{c|ccc}
		\hline \begin{tabular}[c]{@{}c@{}}\textbf{Model}\end{tabular} &
		\begin{tabular}[c]{@{}c@{}}\textbf{Phase I} \end{tabular} &
		\begin{tabular}[c]{@{}c@{}}\textbf{Phase II} \end{tabular} &
		\begin{tabular}[c]{@{}c@{}}\textbf{Phase III} \end{tabular} \\ \hline
		T1 &  1--25 &  26--105 &  150--205 \\
		T2 &  1--27 &  28--155 &  200--205 \\
		T3 &  1--24 &  25--55 &  100--205 \\ \hline
	\end{tabular}
\end{table}

To visually examine the distinct spectral distribution characteristics across the three phases, we use the T3\_En-Ch19.20M experiment as a representative case, selecting the 1st, 50th, and 200th epochs as representative time points. Figure~\ref{fig:esd_phases_mp} presents the Empirical Spectral Density (ESD) of the \texttt{en.0.s.a.v} matrix along with its corresponding Power-Law (PL) fit at these epochs, including a magnified view of the tail to clearly illustrate the typical spectral morphology at different training stages.
\begin{figure}[htbp]
	\centering
	\begin{subfigure}[b]{0.32\linewidth}
		\centering \includegraphics[width=1\linewidth]{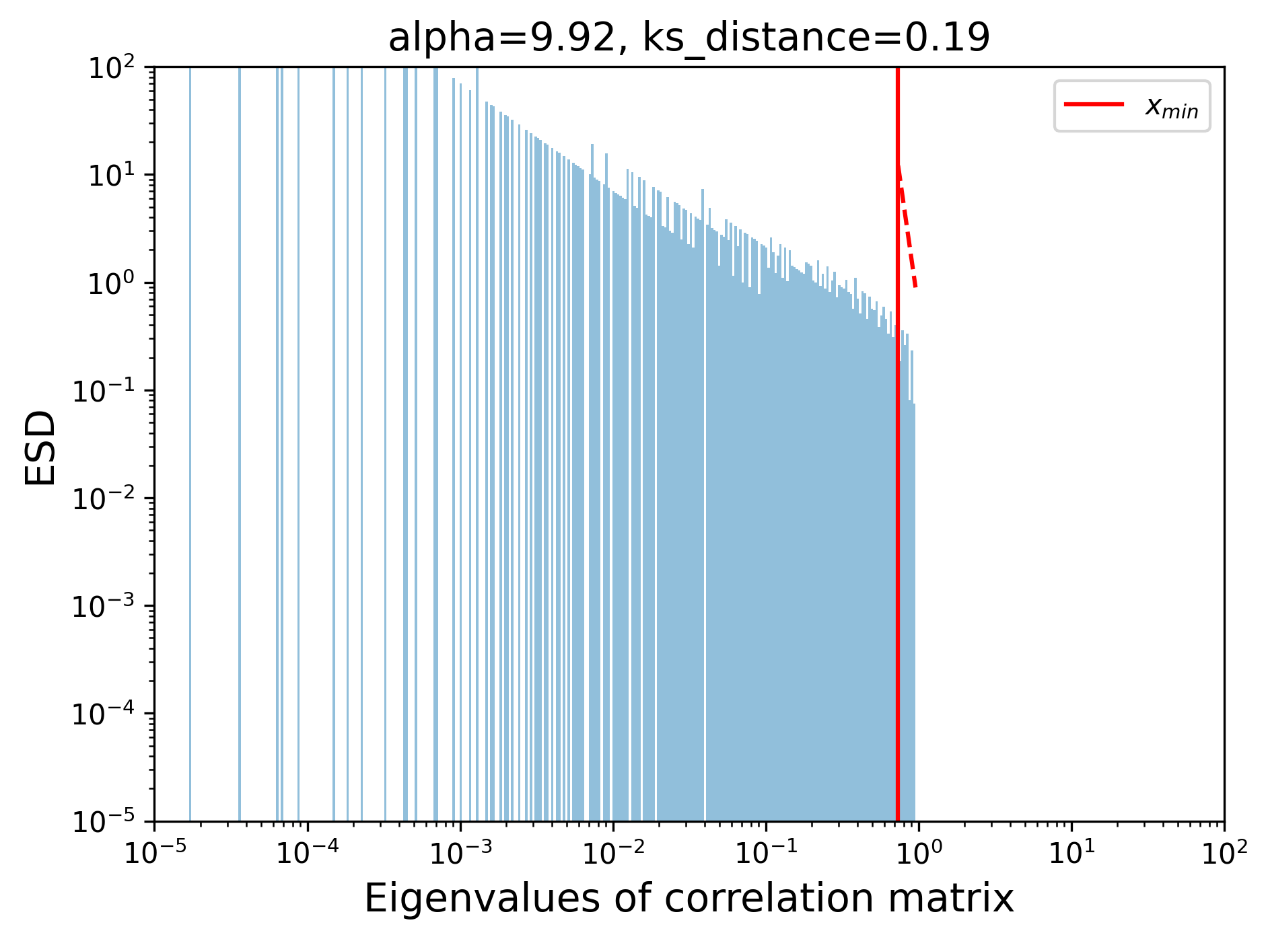}
		\caption{}
		\label{fig:esd1}
	\end{subfigure}
	\begin{subfigure}[b]{0.32\linewidth}
		\centering \includegraphics[width=1\linewidth]{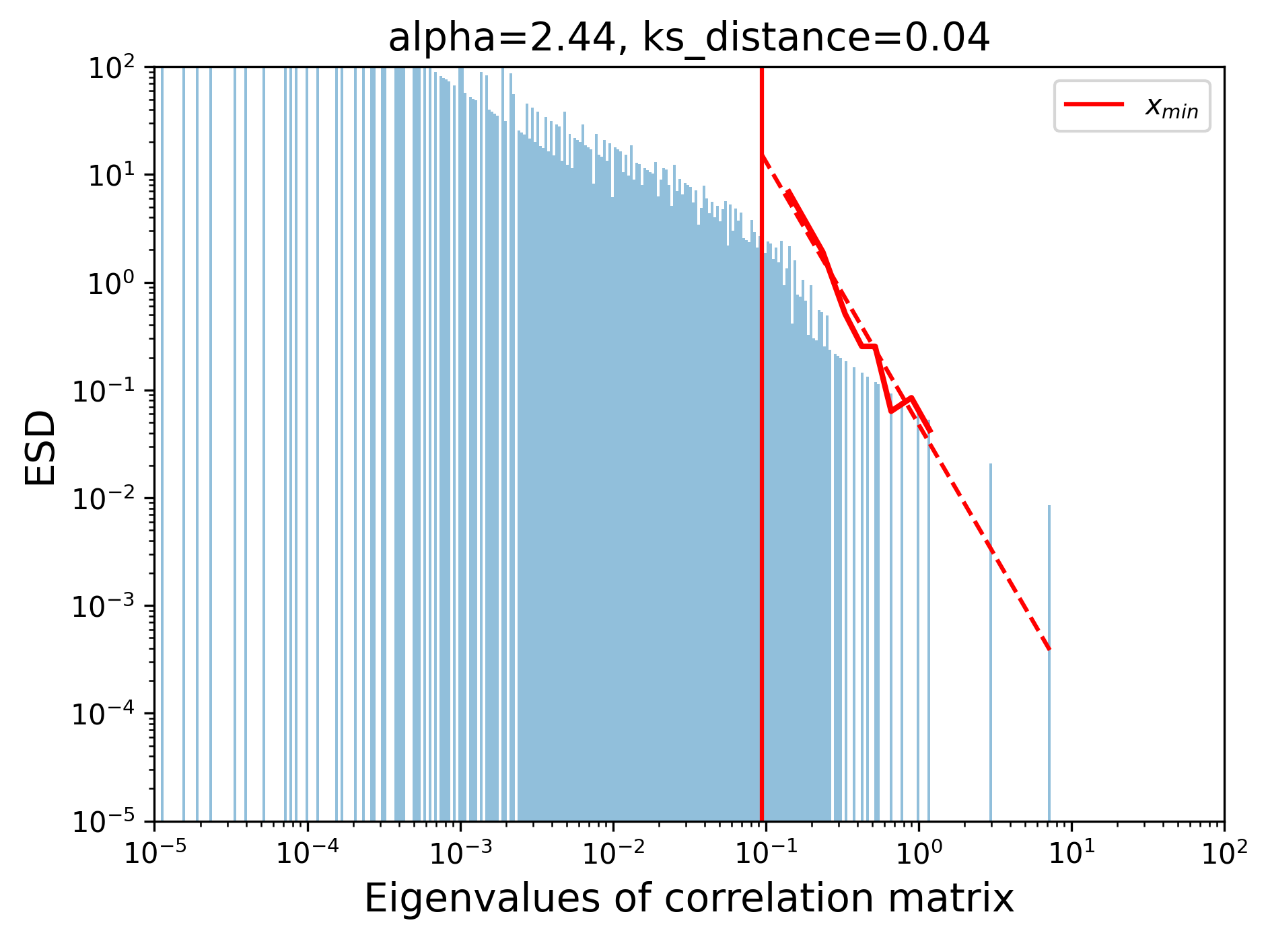}
		\caption{}
		\label{fig:esd2}
	\end{subfigure}
	\begin{subfigure}[b]{0.32\linewidth}
		\centering \includegraphics[width=1\linewidth]{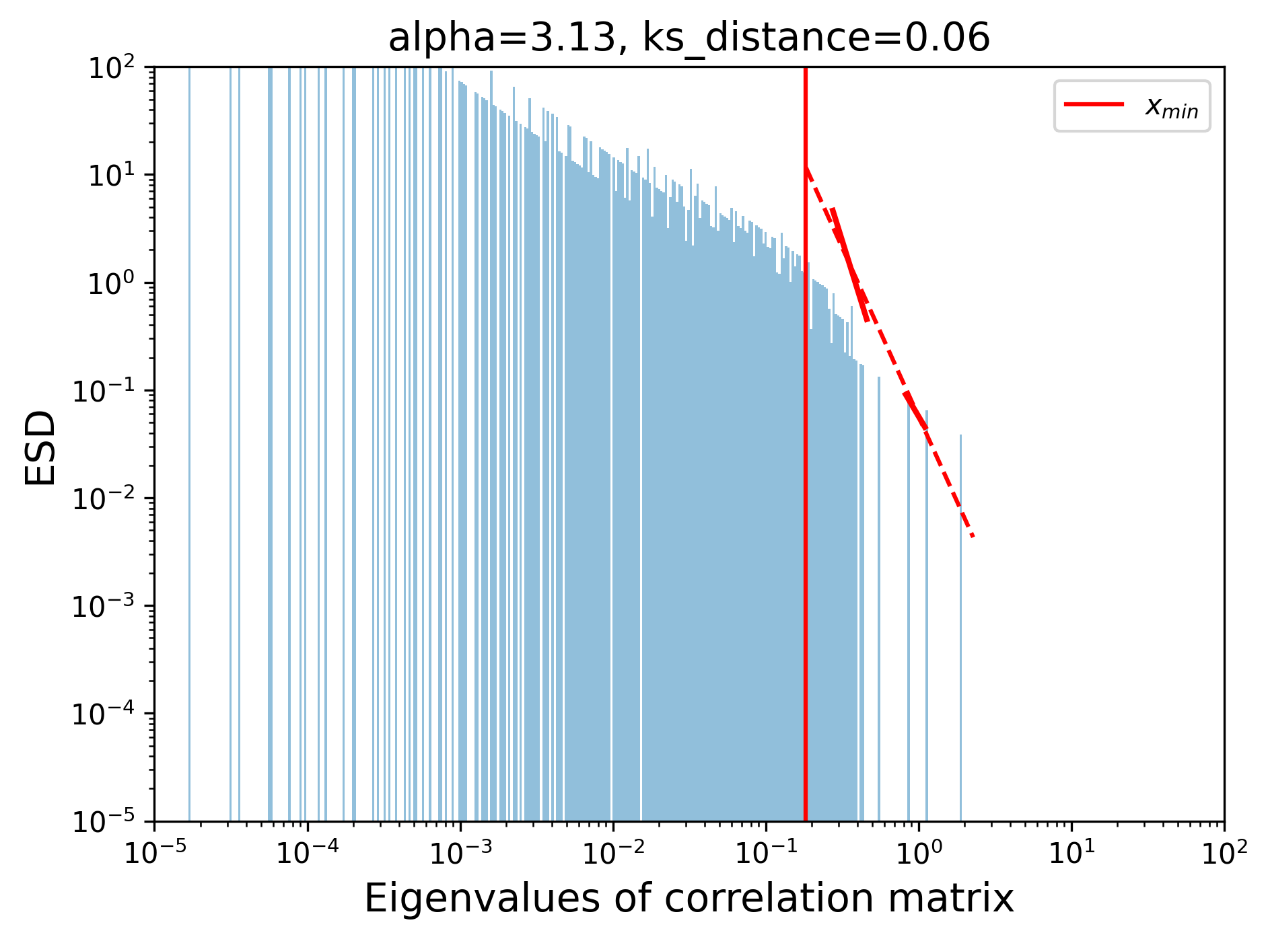}
		\caption{}
		\label{fig:esd3}
	\end{subfigure}
	\caption{ESD and PL fits for the \texttt{en.0.s.a.v} matrix from the T3\_En-Ch19.20M experiment at three representative training phases(1 st, 50th and 200th epochs).}
	\label{fig:esd_phases_mp}
\end{figure}

In Phase I (1st epoch, Fig.~\ref{fig:esd1}), the spectral tail is steep and short. The PL fit yields a very high $\alpha$ value of 9.92, indicating that the weight matrix has not yet developed heavy-tailed characteristics.

Upon entering Phase II (50th epoch, Fig.~\ref{fig:esd2}), the spectrum exhibits pronounced heavy-tailed behavior, reflecting a strong HT regime. The tail extends smoothly toward larger eigenvalues and displays high dispersion. The fitted $\alpha$ value drops sharply to 2.44, marking the formation of a strong heavy-tailed distribution.

By Phase III (200th epoch, Fig.~\ref{fig:esd3}), the heavy-tailed structure persists but is attenuated relative to Phase II, representing a weaker HT regime. Notably, the starting point of the PL fit, 
$x_{\min}$, shifts rightward, while the $\alpha$ value rebounds to 3.13. This contraction of the spectral tail, with fewer and smaller extreme eigenvalues, coincides with the model entering its performance saturation phase.

In \nameref{Appendix} \ref{supplementary_three_phases}, we compare the three divided training phases with the variations of loss values and accuracy during the model training process, thereby further verifying the effectiveness of our phase division criteria.

Through the above analysis, we can conclude that, for Transformer models, the \textbf{optimal early-stopping window} lies between the late portion of Phase II and the early portion of Phase III—after the heavy-tailed structure has stabilized but before the model reaches full performance saturation.
\subsection{Construction of a Power-Law-Based Heavy-Tailed Metric: Key Theoretical Details}

This section formulates a quantitative spectral metric based on the goodness-of-fit of the Power-Law distribution, establishing a statistical threshold to rigorously distinguish heavy-tailed characteristics.

We focus on tail characteristics of \texttt{en.0.s.a.v}, specifically, the portion where the eigenvalues \(x \geq x_{\min}\). All \(n_{\text{tail}}\) data points satisfying this condition can be regarded as being independently and identically drawn (i.i.d.) from an ideal PL distribution.
Based on Eq \eqref{eq:PL PDF}, the Cumulative Distribution Function (CDF) of this idealized power-law distribution is defined as follows,
\[
F(x; \alpha, x_{\min}) = \int_{x_{\min}}^{x} p(t) dt = 1 - \left(\frac{x}{x_{\min}}\right)^{-\alpha+1}.
\]
The Empirical Cumulative Distribution Function (ECDF) is
\[
\hat{F}_{n_{\text{tail}}}(x) = \frac{1}{n_{\text{tail}}}\sum_{i=1}^{n_{\text{tail}}} I(x_i \leq x),
\]
where \(I(\cdot)\) is the indicator function. To quantify the agreement between the empirical eigenvalues distribution and the theoretical distribution, we employ the Kolmogorov-Smirnov (KS) statistic. The KS statistic \(d\) is defined as the maximum vertical distance between the ECDF and the theoretical CDF over the interval \(x \geq x_{\min}\),
\begin{equation}\label{eq:KS formula}
	d = \sup_{x \geq x_{\min}} |\hat{F}_{n_{\text{tail}}}(x) - F(x; \alpha, x_{\min})|.
\end{equation}
Under the null hypothesis that the data points are drawn from a power-law distribution, \( d \)  converges to zero at a well-defined rate.

\begin{lemma}
	\label{lem:covergence of d}
	Suppose \( \{X_i\}_{i=1}^n \) are generated independently from \( p(x; \alpha, x_{\min}) = \frac{\alpha - 1}{x_{\min}}\left(\frac{x}{x_{\min}}\right)^{-\alpha} \), then the distance in Eq \eqref{eq:KS formula} satisfies
	\begin{equation}
		d = \sup_{x \geq x_{\min}} |\hat{F}_{n_{\text{tail}}}(x) - F(x; \alpha, x_{\min})| = O_p\left(\frac{1}{\sqrt{n_{\text{tail}}}} \right).
	\end{equation}
	(Here \( O_p \) denotes the boundedness in probability.)
\end{lemma}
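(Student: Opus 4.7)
The plan is to reduce the statement to the classical Kolmogorov--Smirnov theorem for the uniform empirical process by applying the probability integral transform, and then to invoke the Dvoretzky--Kiefer--Wolfowitz (DKW) inequality to obtain the $O_p(1/\sqrt{n_{\text{tail}}})$ rate. The whole argument is standard once the setup is written down correctly; the main bookkeeping step is to verify that the supremum over the tail $\{x \ge x_{\min}\}$ is in fact the one-sided Kolmogorov supremum on $[0,1]$.

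First, I would observe that the continuous power-law CDF $F(x;\alpha,x_{\min}) = 1 - (x/x_{\min})^{-\alpha+1}$ is continuous and strictly increasing on $[x_{\min},\infty)$, with $F(x_{\min})=0$ and $F(\infty)=1$. Hence the probability integral transform $U_i := F(X_i;\alpha,x_{\min})$ produces i.i.d.\ $\mathrm{Uniform}(0,1)$ random variables, and $F(X_i)\le F(x)$ if and only if $X_i\le x$. Therefore
\[
 d \;=\; \sup_{x \ge x_{\min}} \bigl|\hat F_{n_{\text{tail}}}(x) - F(x;\alpha,x_{\min})\bigr|
 \;=\; \sup_{u \in [0,1]} \bigl|\hat G_{n_{\text{tail}}}(u) - u\bigr|,
\]
where $\hat G_{n_{\text{tail}}}$ denotes the empirical CDF of $U_1,\dots,U_{n_{\text{tail}}}$. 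This converts the problem into the canonical setting of a uniform empirical process.

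Next, I would apply the DKW inequality in the Massart form,
\[
 \mathbb{P}\!\left(\sup_{u\in[0,1]}\bigl|\hat G_{n_{\text{tail}}}(u) - u\bigr| > t\right) \;\le\; 2\exp\bigl(-2\,n_{\text{tail}}\,t^2\bigr), \qquad t>0.
\]
For any $\varepsilon>0$, pick $C_\varepsilon$ so that $2\exp(-2C_\varepsilon^2) < \varepsilon$, and set $t = C_\varepsilon/\sqrt{n_{\text{tail}}}$; this yields $\mathbb{P}\bigl(\sqrt{n_{\text{tail}}}\,d > C_\varepsilon\bigr) < \varepsilon$ uniformly in $n_{\text{tail}}$, which is precisely the definition of $d = O_p(1/\sqrt{n_{\text{tail}}})$. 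Equivalently, Donsker's theorem shows that $\sqrt{n_{\text{tail}}}(\hat G_{n_{\text{tail}}} - \mathrm{id})$ converges in distribution in $\ell^\infty([0,1])$ to a Brownian bridge $B$, and the continuous mapping theorem gives $\sqrt{n_{\text{tail}}}\,d \Rightarrow \sup_{u\in[0,1]} |B(u)|$, whose law is the Kolmogorov distribution; this sharper statement also provides a principled way to calibrate the critical constant $C$ appearing later in the threshold $d^* = C/\sqrt{n_{\text{tail}}}$ by Monte Carlo simulation.

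There is no genuinely hard step here, so the main care is conceptual rather than technical: one must confirm that (i) $F$ is continuous, so that the probability integral transform indeed produces uniform variables and ties in the sample occur with probability zero, and (ii) the restriction $x\ge x_{\min}$ plays no role beyond being the support of $F$, so the tail KS statistic coincides with the standard one-sample KS statistic against $F$. Once these two points are noted, the DKW inequality (or Donsker's theorem) delivers the claimed distribution-free $O_p(1/\sqrt{n_{\text{tail}}})$ rate immediately.
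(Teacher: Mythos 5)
Your proposal is correct and rests on the same key tool as the paper's own proof, namely the Dvoretzky--Kiefer--Wolfowitz inequality applied at $t = C/\sqrt{n_{\text{tail}}}$ to conclude $d = O_p(1/\sqrt{n_{\text{tail}}})$; the probability integral transform and the Donsker remark are harmless extras that the paper omits (DKW applies directly to any continuous $F$). Your version is slightly more careful in spelling out the $O_p$ definition via the choice of $C_\varepsilon$ for each $\varepsilon$, but the argument is essentially identical.
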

The proof of the Lemma is given in Appendix \ref{pro:lemma}.
In practical applications, the PL exponent \(\alpha\) is typically unknown and must be estimated from sample data, denoted as \(\hat{\alpha}\). The CDF yields the empirical KS distance is
\begin{equation}\label{eq:MLE of KS formula}
	\hat{d} = \sup_{x \geq x_{\min}} |\hat{F}_{n_{\text{tail}}}(x) - F(x; \hat{\alpha}, x_{\min})|.
\end{equation}

The following proposition guarantees a convergence rate for the estimator \( \hat{d} \),
\begin{proposition}
	\label{prop:5.2}
	For the estimated distance \( \hat{d} \) in Eq \eqref{eq:MLE of KS formula}, we have
	\begin{equation}
		\hat{d} = O_p\left(\frac{1}{\sqrt{n_{\text{tail}}}}\right).
	\end{equation}
\end{proposition}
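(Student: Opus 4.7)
The plan is to reduce Proposition~\ref{prop:5.2} to Lemma~\ref{lem:covergence of d} via a triangle inequality, absorbing the extra randomness introduced by plugging in the MLE $\hat{\alpha}$. Specifically, I would write
\begin{equation*}
\hat{d} \;\le\; \sup_{x\ge x_{\min}}\bigl|\hat{F}_{n_{\text{tail}}}(x) - F(x;\alpha,x_{\min})\bigr| \;+\; \sup_{x\ge x_{\min}}\bigl|F(x;\hat{\alpha},x_{\min}) - F(x;\alpha,x_{\min})\bigr|,
\end{equation*}
and treat the two terms separately. The first term is exactly the object controlled by Lemma~\ref{lem:covergence of d} and is therefore $O_p(1/\sqrt{n_{\text{tail}}})$. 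So the whole task reduces to showing that the second term, which is purely deterministic in $x$ once $\hat{\alpha}$ is fixed, is also $O_p(1/\sqrt{n_{\text{tail}}})$.

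For the second term I would combine two ingredients: (i) the standard MLE rate $\hat{\alpha}-\alpha = O_p(1/\sqrt{n_{\text{tail}}})$, which follows from the classical asymptotic theory for the continuous power law (the log-likelihood is smooth and strictly concave in $\alpha$, giving asymptotic normality of $\hat{\alpha}$ with variance $(\alpha-1)^2/n_{\text{tail}}$, as in Clauset et al. 2009); and (ii) a uniform Lipschitz bound of $F(\cdot;\alpha,x_{\min})$ in the $\alpha$ parameter. For (ii), setting $u=\ln(x/x_{\min})\ge 0$, a direct differentiation gives
\begin{equation*}
\frac{\partial F}{\partial\alpha}(x;\alpha,x_{\min}) \;=\; u\,e^{-(\alpha-1)u},
\end{equation*}
whose maximum over $u\ge0$ is $[(\alpha-1)e]^{-1}$, finite for any $\alpha>1$. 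A first-order Taylor expansion around $\alpha$ then yields
\begin{equation*}
\sup_{x\ge x_{\min}}\bigl|F(x;\hat{\alpha},x_{\min}) - F(x;\alpha,x_{\min})\bigr| \;\le\; \frac{|\hat{\alpha}-\alpha|}{(\tilde{\alpha}-1)\,e}
\end{equation*}
for some $\tilde\alpha$ between $\alpha$ and $\hat\alpha$; since $\hat\alpha\xrightarrow{p}\alpha>1$, the prefactor $[(\tilde\alpha-1)e]^{-1}$ is $O_p(1)$, and multiplying by $|\hat{\alpha}-\alpha| = O_p(1/\sqrt{n_{\text{tail}}})$ gives the desired rate.

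Adding the two $O_p(1/\sqrt{n_{\text{tail}}})$ bounds then yields $\hat{d}=O_p(1/\sqrt{n_{\text{tail}}})$. The main obstacle I anticipate is rigorously justifying that the DKW step of Lemma~\ref{lem:covergence of d} can still be applied when the reference CDF uses a data-dependent $\hat\alpha$; the clean way around this is precisely the triangle inequality above, which decouples the empirical-process part (handled by DKW at the true $\alpha$) from the parametric-estimation part (handled by MLE consistency plus the uniform Lipschitz bound on the CDF in $\alpha$). A secondary subtlety is ensuring the derivative bound is truly uniform in $x$: the argument above works because the envelope $u e^{-(\alpha-1)u}$ is globally bounded on $[0,\infty)$, so no truncation of the tail eigenvalues is required.
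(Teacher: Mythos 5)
Your proposal follows essentially the same route as the paper's proof: the same triangle-inequality decomposition, with the first term controlled by Lemma~\ref{lem:covergence of d} and the second by the $O_p(1/\sqrt{n_{\text{tail}}})$ rate of the MLE $\hat{\alpha}$. If anything, your treatment of the second term is more careful than the paper's: the paper bounds $(x/x_{\min})^{-\alpha+1}$ and $(x/x_{\min})^{-(\hat{\alpha}-\alpha)}-1$ separately (and the latter factor is not uniformly small in $x$, since it behaves like $|\hat{\alpha}-\alpha|\ln(x/x_{\min})$), whereas your explicit envelope $\sup_{u\ge 0} u\,e^{-(\alpha-1)u}=[(\alpha-1)e]^{-1}$ makes the uniformity in $x$ fully rigorous.
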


The proof of the proposition is given in Appendix \ref{pro:prop}.
By Proposition~\ref{prop:5.2}, under the null hypothesis that the ESD of the \texttt{en.0.s.a.v} matrix follows a Power-Law distribution (i.e., exhibits Heavy-Tailed characteristics), the KS distance $\hat{d}$ between the empirical spectral distribution and the fitted Power-Law distribution converges to 0. By defining the corresponding rejection region for the null hypothesis, we can conclude that if the ESD of the weight matrix deviates from the Power-Law form - i.e., in the absence of Heavy-Tailed characteristics -
$\hat{d}$ will no longer tend to 0. This observation leads directly to the following corollary.
\begin{corollary}\label{corollary: crition of HT}
Empirical distance $\tilde{d}$ is the KS distance between the ESD of \texttt{en.0.s.a.v} and the fitted Power-Law distribution, i.e.,
\begin{align*}
	\tilde{d} = \sup_{x \geq x_{\min}} |\hat{F}_{n_{\text{tail}}}(x) - F(x; \hat{\alpha}, x_{\min})|.
\end{align*}
If $\tilde{d} \le d^*$,  the ESD of \texttt{en.0.s.a.v} matrix exhibits a Heavy-Tailed property; otherwise, it does not, where threshold distance $d^*$ is
  \begin{equation}
	d^* = \frac{C}{\sqrt{n_{\text{tail}}}}.
 \end{equation}
 (C is a critical constant)
\end{corollary}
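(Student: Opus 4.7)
The plan is to view Corollary~\ref{corollary: crition of HT} as the hypothesis-testing consequence of Proposition~\ref{prop:5.2}, converting the asymptotic rate $\hat{d}=O_p(1/\sqrt{n_{\text{tail}}})$ into a concrete acceptance region of the form $\tilde{d}\le C/\sqrt{n_{\text{tail}}}$. First I would frame the null hypothesis $H_0$: the $n_{\text{tail}}$ eigenvalues of \texttt{en.0.s.a.v} beyond $x_{\min}$ are i.i.d.\ draws from $p(x;\alpha,x_{\min})$. Under $H_0$, the empirical KS statistic $\tilde{d}$ coincides with the quantity $\hat{d}$ studied in Proposition~\ref{prop:5.2}, so its order is controlled by $1/\sqrt{n_{\text{tail}}}$.

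Next I would promote this rate into a one-sided probability bound. Combining the DKW step used in Lemma~\ref{lem:covergence of d} with the Taylor-expansion bound $\sup_{x\ge x_{\min}}|F(x;\alpha,x_{\min})-F(x;\hat{\alpha},x_{\min})|=O_p(1/\sqrt{n_{\text{tail}}})$ from the proof of Proposition~\ref{prop:5.2}, I obtain a tail inequality of the form
\begin{equation*}
P_{H_0}\!\left(\tilde{d}>\tfrac{C}{\sqrt{n_{\text{tail}}}}\right)\le g(C),
\end{equation*}
where $g$ is strictly decreasing with $g(C)\to 0$ as $C\to\infty$. Fixing a significance level $\beta$ (the paper already suggests $\beta=0.1$) and solving $g(C)=\beta$ defines the critical constant, and setting $d^{*}=C/\sqrt{n_{\text{tail}}}$ yields the threshold asserted in the corollary.

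The corollary is then the direct contrapositive of this test. If $\tilde{d}\le d^{*}$, the observed deviation lies within the fluctuation band predicted under $H_0$, so the data are statistically consistent with a Power-Law tail and the ESD is declared heavy-tailed. If $\tilde{d}>d^{*}$, the deviation is too large to be produced by a genuine Power-Law sample of size $n_{\text{tail}}$, so heavy-tailedness is rejected. Because $\tilde{d}$ uses the MLE $\hat{\alpha}$ rather than the unknown true exponent, the limiting distribution of $\sqrt{n_{\text{tail}}}\tilde{d}$ is not exactly Kolmogorov's, and I would therefore calibrate $C$ by the Monte Carlo procedure already referenced in Section~\ref{sec:Formulation}: draw many synthetic samples of size $n_{\text{tail}}$ from $p(\,\cdot\,;\hat{\alpha},x_{\min})$, recompute $\tilde{d}$ on each, and take the empirical $(1-\beta)$-quantile.

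I expect the main obstacle to be less the inequality chain, which is short once Proposition~\ref{prop:5.2} is available, than the \emph{uniformity} of the calibration across training epochs. During training, $\hat{\alpha}$, $x_{\min}$, and $n_{\text{tail}}$ all drift, so I would need to argue that the simulated critical constant $C$ is sufficiently stable in $(\hat{\alpha},x_{\min})$ for the single threshold $d^{*}=C/\sqrt{n_{\text{tail}}}$ to serve as a valid decision rule throughout the training trajectory, and not merely at a single epoch. Addressing this is what justifies using the criterion as a training-stage monitor rather than a one-shot test.
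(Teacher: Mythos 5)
Your proposal is correct and follows essentially the same route as the paper: the corollary is presented there as the direct hypothesis-testing consequence of Proposition~\ref{prop:5.2} (the $O_p(1/\sqrt{n_{\text{tail}}})$ rate of $\hat{d}$ under the power-law null defines the acceptance region $\tilde{d}\le C/\sqrt{n_{\text{tail}}}$), with $C$ then calibrated by the Monte Carlo procedure of Section~\ref{sec:Monte Carlo simulation} exactly as you describe. Your added caveats --- that the limiting law of $\sqrt{n_{\text{tail}}}\,\tilde{d}$ with estimated $\hat{\alpha}$ is not Kolmogorov's, and that the calibration must be stable as $(\hat{\alpha},x_{\min},n_{\text{tail}})$ drift across epochs --- are refinements the paper handles only implicitly via its grid of simulated $(\alpha_0,n_{\text{tail}})$ values, not a different argument.
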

We determined the value of C = 2 via Monte Carlo simulation, as detailed in the \nameref{Appendix}~\ref{sec:Monte Carlo simulation}.

\subsection{Early Stopping criterion and its Efficacy}
In this section, we aim to construct a novel spectral criterion to identify the optimal early-stopping epoch for guiding Transformer model training. We find that the PL fitting-based early-stopping criterion aligns well with the phase-partitioning-based early-stopping window described in Section~\ref{sec:Three training phases}.

According to Corollary~\ref{corollary: crition of HT}, with the tail threshold fixed at $x_{\min} = 0.1$ for our experiments, the Heavy-Tailed (HT) indicator, defined as the difference $ d^* - \tilde{d}$ between the threshold distance and the empirical KS distance, can effectively monitor the dynamic evolution of model training and track the formation of Heavy-Tailed spectral structures. The transition of this indicator from negative to positive values signals the entry of the spectral distribution into the heavy-tailed regime.

We propose an \textbf{early-stopping criterion} based on $\mathbf{\max_{epoch}\{d^* - \tilde{d}\}}$, where the HT indicator \( d^* - d \) reaches its maximum value. This epoch is considered the optimal early-stopping point, at which the model achieves the highest degree of implicit self-regularization and the strongest generalization capability. At this point, the model’s objectives of performance optimization and training time minimization are optimally balanced.
\begin{figure}[htbp]
	\centering
	\begin{subfigure}[b]{0.49\linewidth}
		\centering
		\includegraphics[width=1\linewidth]{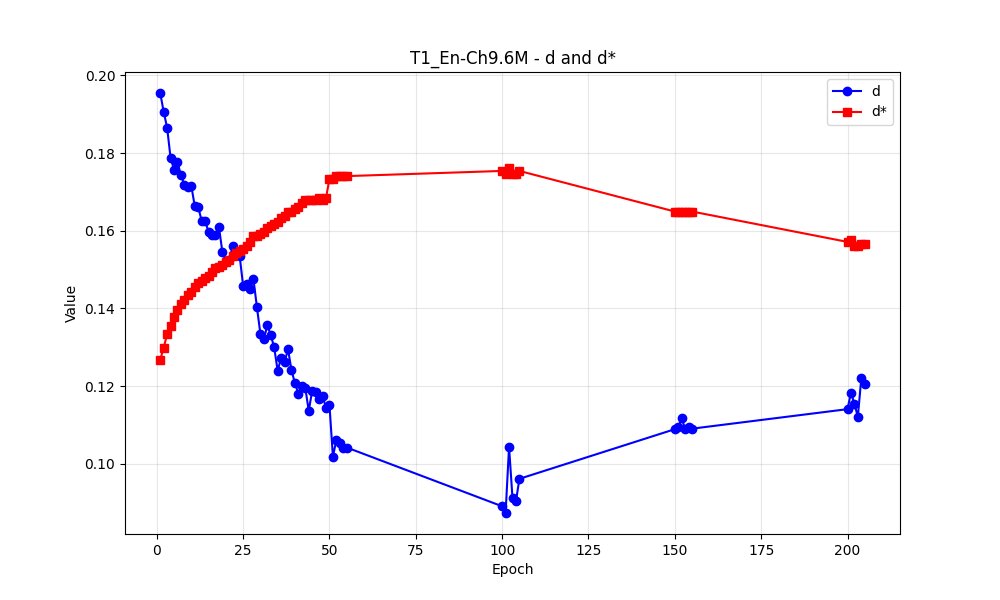}
		\caption{}
	\end{subfigure}
	\begin{subfigure}[b]{0.49\linewidth}
		\centering
		\includegraphics[width=1\linewidth]{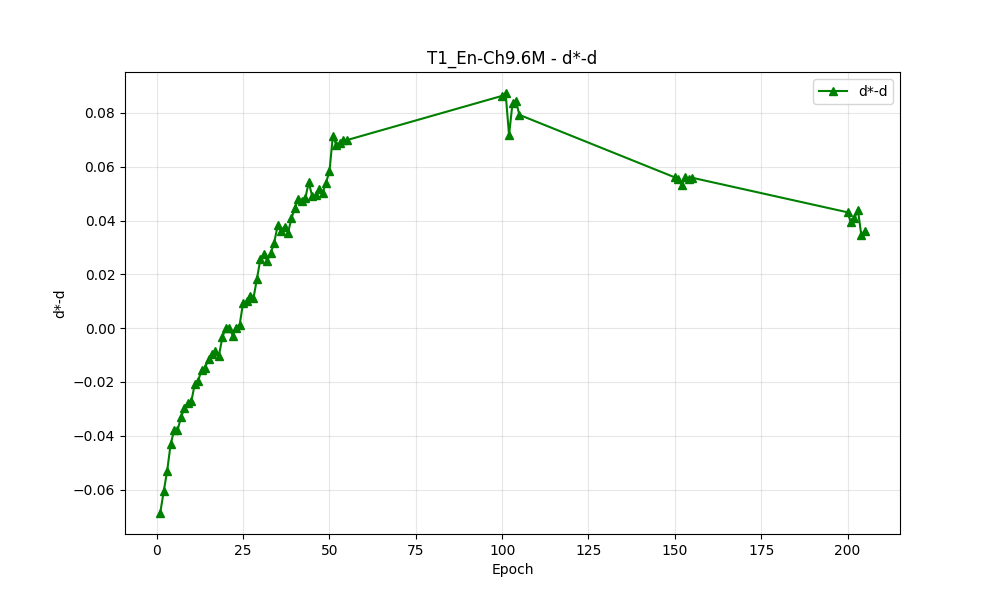}
		\caption{}
	\end{subfigure}
	\caption{Evolutionary Trajectory of the Early-Stopping Metric for \textbf{T1\_En-Ch 9.60M}. (a) describes the variation of d and d* with training epochs for the three models, respectively; while (b) illustrates the changes in the difference between d* and d across training epochs for these three models, respectively.}
	\label{fig:11}
\end{figure}

The evolutionary trajectories of HT indicator across the experimental configuration T1\_En-Ch 9.60M are illustrated in the figure~\ref{fig:11}, the corresponding plots for T2\_En-Ch 9.62M and T3\_En-Ch 19.20M are provided in \nameref{Appendix} \ref{sec:Evolutionary_Trajectory}.
Observation of the evolutionary trajectory of the HT indicator \( d^* - d \)
in Figure~\ref{fig:11} shows that, after an initial period of intense adjustment, the indicator rises rapidly, signaling the model’s entry into the Heavy-Tailed phase. Subsequently, the indicator reaches its peak and may exhibit minor fluctuations, suggesting that the Heavy-Tailed characteristics, after attaining their maximum intensity, can slightly diminish during the later stages of training.
\begin{table}[htbp]
	\centering
	\caption{Early-Stopping Loss, Final Loss (Epoch 205) \& Relative Difference Across Models}
	\label{tab:early_stopping_validation}
	\begin{tabular}{lcccccc}
		\toprule
		\textbf{Model} & \multicolumn{1}{c}{\textbf{ES Loss}} & \multicolumn{1}{c}{\textbf{Final Loss}} & \multicolumn{1}{c}{\textbf{Re.Diff Loss}}\\
		\midrule
		T1 & 2.3812 &  2.3465 & 1.479\%  \\
		T2 & 2.4360 & 2.4033  & 1.361\% \\
		T3 & 2.1039 &  2.0248  & 3.907\% \\
		\bottomrule
	\end{tabular}
\end{table}

Applying the proposed early-stopping criterion identifies the optimal epochs for the three experimental configurations as 100, 100, and 50, respectively. As detailed in Table~\ref{tab:early_stopping_validation}, which compares validation performance at these check-points against the final epoch, the strategy proves highly effective: for experiments T1\_En-Ch9.60M and T2\_En-Ch9.62M, continuing training for an additional 105 epochs yields a marginal loss reduction of less than 1.5\%, while for T3\_En-Ch19.20M, extending training by 155 epochs results in less than a 4\% improvement. These findings demonstrate that the spectral distance-based criterion can substantially reduce training time and computational costs while preserving near-final model performance with negligible degradation.

\section{Conclusion}\label{sec:conclusion}
The spectral distribution of \texttt{en.0.s.a.v} of Transformer models is profoundly influenced by training dynamics and exhibits persistent Heavy-Tailed (HT) characteristics during the prolonged phase of stable evolution. 
We investigate these phenomena from multiple perspectives. We analyze the Power-Law (PL) fitting parameters and their dynamic evolution, revealing that changes in these parameters effectively reflect the progress of model training.
 We partition the training process into three phases and demonstrate that the strong HT phase (Phase II) constitutes a critical period for effective fine-tuning in Transformer models. This finding offers a novel perspective for understanding the internal training mechanisms of LLMs. We propose a quantitative criterion for recognizing HT behavior via PL fitting, which can track the formation dynamics of HT structures and provide a spectral-based early-stopping criterion for Transformer models. Notably, this approach enables stopping decisions without reliance on a separate validation set.

These findings also raise several compelling questions for future exploration. For example, an important direction is to evaluate whether the proposed spectral early-stopping criterion generalizes to other LLMs such as LLaMA-3 and DeepSeek-MoE. Systematic assessment of the method’s generalizability and limitations represents a crucial avenue for subsequent research. Investigating these underlying mechanisms and extending the criterion will provide significant directions for future work.
\bibliography{example_paper}
\newpage
\appendix
\onecolumn
\section*{Appendix}\label{Appendix}
\section{Theoretical Analysis of the Optimization Dynamics of the Value Matrix $W^V$}
\label{sec: Choose V matrix}

Recent studies \citep{gurbuzbalaban2021heavy} have demonstrated that heavy-tailed behavior in deep neural networks is closely linked to the optimization dynamics induced by gradient descent. In particular, \citet{song2024unraveling} proved that in a single-layer Transformer, optimizing only the value matrix ($W^V$) is sufficient to achieve the global minimum. This finding underscores the dominant and simplifying role of $W^V$ during model training.In practical Transformer training, however, all three attention matrices ($W^Q, W^K, W^V$) are updated jointly. Due to the non-convexity introduced by the softmax operation, the optimization landscapes associated with $W^Q$ and $W^K$ are considerably more intricate than that of $W^V$. To support our empirical observation that the dynamics of $W^V$ exhibit smooth, nearly linear convergence alongside heavy-tailed statistical behavior, we extend existing convergence analyses—originally established for quadratic losses—to the cross-entropy loss employed in our tasks. Specifically, we analyze the optimization dynamics of $W^V$ while holding $W^Q$ and $W^K$ fixed.

Our objective is not to establish global convergence for the entire Transformer model, but rather to demonstrate that, under mild assumptions, optimizing $W^V$ alone follows provably stable gradient dynamics with sublinear convergence rates.We formalize this theoretical guarantee in the following proposition:

\begin{proposition}
	\label{prop:wv_convergence}
	
	Under the setting where only the Value matrix $W^V$ is optimized, the loss function $f(W^V)$ satisfies:
	\begin{enumerate}
		\item \textbf{Convexity:} $f$ is convex with respect to $W^V$.
		\item \textbf{Smoothness:} $\nabla_{W^V} f$ is $L$-Lipschitz continuous for some finite constant $L > 0$.
		\item \textbf{Global Convergence:} For a learning rate $\eta \le 1/L$, gradient descent converges sublinearly:
		\begin{equation}
			f(W^V_t) - f(W^{V*}) \le \frac{\|W^V_0 - W^{V*}\|_F^2}{2\eta t} = \mathcal{O}(1/t),
		\end{equation}
		where $W^{V*}$ is an optimal solution with $f(W^{V*}) = 0$.
	\end{enumerate}
\end{proposition}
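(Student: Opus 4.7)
The plan is to exploit the observation that, once $W^Q$ and $W^K$ are frozen, the self-attention block is affine in $W^V$. Writing $A(X) = \mathrm{softmax}\!\bigl(X W^Q (W^K)^\top X^\top / \sqrt{d_k}\bigr)$, which is now independent of $W^V$, the attention output $A(X)\,X\,W^V$ depends linearly on the entries of $W^V$, so composing with a fixed output projection produces token-level logits of the form $z_i = M_i\,\mathrm{vec}(W^V) + b_i$ for input-dependent matrices $M_i$. Hence the cross-entropy loss reduces to the classical multinomial logistic regression objective
\begin{equation*}
f(W^V) = \sum_i \Bigl[\log\!\sum_j e^{z_{i,j}} - y_i^\top z_i\Bigr],
\end{equation*}
which is the standard workhorse of convex classification and for which all three parts of the proposition are now amenable to textbook convex-analysis tools.

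First I would verify convexity: log-sum-exp is convex in its argument, $z_i$ is affine in $W^V$, and finite sums preserve convexity. Next I would establish $L$-smoothness by bounding the Hessian. The log-sum-exp Hessian at $z$ equals $\mathrm{diag}(p) - p p^\top$ with $p = \mathrm{softmax}(z)$ and has spectral norm at most $1$, since it is the covariance matrix of a one-hot categorical random vector. The chain rule then gives
\begin{equation*}
\nabla^2_{W^V} f = \sum_i M_i^\top\bigl(\mathrm{diag}(p_i) - p_i p_i^\top\bigr) M_i,
\end{equation*}
and under the mild assumption that the inputs and the fixed output projection are norm-bounded (the attention rows $A(X)$ are already stochastic, hence automatically bounded), one obtains $\|\nabla^2 f\|_2 \le \sum_i \|M_i\|_2^2 =: L < \infty$. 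With convexity and $L$-smoothness in hand, the $O(1/t)$ rate follows from the descent lemma: for $\eta \le 1/L$ the inequality $f(W^V_{t+1}) \le f(W^V_t) - \tfrac{\eta}{2}\|\nabla f(W^V_t)\|_F^2$ combined with the convexity estimate $f(W^V_t) - f(W^{V*}) \le \langle \nabla f(W^V_t),\, W^V_t - W^{V*}\rangle$ telescopes to $f(W^V_t) - f(W^{V*}) \le \|W^V_0 - W^{V*}\|_F^2 / (2\eta t)$.

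The main obstacle is the postulate $f(W^{V*}) = 0$. Softmax cross-entropy typically has $\inf f = 0$ but this infimum is not attained at any finite $W^V$ whenever the induced features are separable with respect to the targets; in that regime the minimizer recedes to infinity. I would handle this either by restricting attention to the sublevel set $\{W^V : f(W^V) \le f(W^V_0)\}$, on which -- under a margin condition or mild $\ell_2$-regularization -- a bounded near-minimizer exists, or by restating the bound relative to an arbitrary reference $W^{V*}$ and replacing $f(W^{V*})$ by $\inf f$ on the left-hand side. Neither patch alters the qualitative $O(1/t)$ rate, but making the interpretation of $W^{V*}$ explicit is the one place where the argument requires genuine care rather than routine invocation of convex-optimization boilerplate.
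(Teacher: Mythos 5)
Your proposal follows essentially the same route as the paper's own proof: exploit the linearity of the map $W^V \mapsto z$ once $W^Q$ and $W^K$ are frozen, derive convexity from the positive semidefiniteness of the softmax Hessian $\mathrm{diag}(p) - p p^\top$, bound that Hessian to obtain $L$-smoothness of the composition with the linear operator, and telescope the descent lemma to get the $O(1/t)$ rate. Your added caveat that $f(W^{V*})=0$ is generally an infimum not attained at any finite $W^V$ under separability is well taken --- the paper's proof simply asserts the existence of such a minimizer without justification --- but this observation refines rather than alters the structure of the argument.
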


In the following subsections, we provide a detailed analysis to support Proposition~\ref{prop:A1}. We begin by formalizing the problem setup in Section~A.1. Subsequently, we verify the three claims of the proposition sequentially: Section~A.2 establishes the \textbf{convexity}, Section~A.3 proves the \textbf{smoothness} condition, and Section~A.4 derives the \textbf{global convergence rate}.

\subsection{Setup and Notation}

We consider a single-layer attention block with the standard cross-entropy loss:
\begin{equation}
	f(W^V) = \frac{1}{N} \sum_{i=1}^{N} \ell(z_i, y_i),
	\quad 
	\ell(z_i, y_i) = -\log(\mathrm{softmax}(z_i)_{y_i}),
\end{equation}
where $z_i \in \mathbb{R}^V$ are the logits corresponding to the $i$-th token, and $y_i$ is the ground-truth label.

For each input sequence $X$, the attention mechanism produces the logits:
\begin{equation}
	z_i = \mathcal{A}_i(W^V) = \mathrm{softmax}\left( \frac{Q_i K^\top}{\sqrt{d}} \right) V_i,
\end{equation}
where $Q_i = X_i W^Q$, $K = X W^K$, $V = X W^V$.  
When $W^Q$ and $W^K$ are fixed, the mapping $W^V \mapsto z_i$ is linear.  
We denote this linear operator by $\mathcal{A}: W^V \mapsto z$.

Thus the full objective can be written compactly as:
\begin{equation}
	f(W^V) = \mathcal{L}_{CE}(\mathcal{A}(W^V), y),
\end{equation}
where $\mathcal{L}_{CE}$ is convex in its first argument.

The update rule under gradient descent is:
\begin{equation}
	W^V_{t+1} = W^V_t - \eta \nabla_{W^V} f(W^V_t),
\end{equation}
where $\eta > 0$ is the learning rate.

\subsection{Convexity Proof}

We first establish that $f(W^V)$ is convex with respect to $W^V$.
\begin{lemma}
	(Convexity of Cross-Entropy) For a single example with logits $z$ and label $y$, the loss
	\[
	\ell(z, y) = -\log(\mathrm{softmax}(z)_y)
	\]
	is convex in $z$.
\end{lemma}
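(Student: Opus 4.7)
The plan is to reduce the claim to convexity of the log-sum-exp function. First I would rewrite
\[
\ell(z,y) = -\log\frac{e^{z_y}}{\sum_k e^{z_k}} = -z_y + \log\!\Bigl(\sum_k e^{z_k}\Bigr),
\]
separating a linear (and hence convex) term in $z$ from the log-normalizer. Since convexity is preserved under addition, the task reduces to showing that $g(z):=\log\sum_k e^{z_k}$ is convex in $z$; convexity of $\ell(\cdot,y)$ for every fixed label $y$ then follows immediately.

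Next I would verify convexity of $g$ by a direct Hessian computation. Differentiating once gives $\partial g/\partial z_i = p_i$, and a second differentiation yields
\[
\nabla^2 g(z) = \mathrm{diag}(p) - p p^\top,
\]
where $p_i := e^{z_i}/\sum_j e^{z_j}$ are the softmax probabilities. To conclude positive semi-definiteness, I would evaluate the quadratic form against an arbitrary vector $u$ and recognize it as a variance under the discrete distribution $p$:
\[
u^\top \nabla^2 g(z)\, u \;=\; \sum_k p_k u_k^2 - \Bigl(\sum_k p_k u_k\Bigr)^2 \;=\; \mathrm{Var}_{K\sim p}[u_K] \;\ge\; 0.
\]
Hence $\nabla^2 g \succeq 0$ everywhere, giving convexity of $g$ and therefore of $\ell(\cdot,y)$.

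Because this lemma is essentially textbook, I do not expect a genuine obstacle. The only step requiring care is the PSD verification: framing the quadratic form as a variance (equivalently, a one-line application of Jensen's inequality) is cleaner and less error-prone than invoking Cauchy--Schwarz directly. As an independent sanity check I would note that $g$ also admits the Fenchel representation $g(z) = \sup_{q\in\Delta}\{\langle q,z\rangle + H(q)\}$ over the probability simplex, with $H$ the Shannon entropy; convexity then follows for free, since $g$ is realized as the pointwise supremum of a family of affine functions of $z$. Either route yields the lemma without further technical overhead, and the derivation integrates seamlessly into the downstream smoothness and convergence arguments of Proposition~\ref{prop:wv_convergence}.
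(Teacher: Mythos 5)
Your proposal is correct and follows essentially the same route as the paper: both arrive at the Hessian $\mathrm{diag}(p)-pp^\top$ and verify positive semi-definiteness via the quadratic form $\sum_k p_k v_k^2-\bigl(\sum_k p_k v_k\bigr)^2\ge 0$, your variance/Jensen framing being equivalent to the paper's Cauchy--Schwarz step. The preliminary split into a linear term plus log-sum-exp and the Fenchel-representation sanity check are pleasant but cosmetic additions.
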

\textit{Proof.}
Defining that  $p_k = \mathrm{softmax}(z)_k = \frac{e^{z_k}}{\sum_j e^{z_j}}$.  
The gradient is
\[
\frac{\partial \ell}{\partial z_k} = p_k - \delta_{k y},
\]
and the Hessian is
\[
H_{kj} = \frac{\partial^2 \ell}{\partial z_k \partial z_j}
= p_k (\delta_{kj} - p_j)
= \mathrm{diag}(p) - p p^\top.
\]
For any vector $v$,
\[
v^\top H v = \sum_k p_k v_k^2 - \Big(\sum_k p_k v_k\Big)^2 \ge 0,
\]
by the Cauchy–Schwarz inequality. Hence, $H$ is positive semidefinite and $\ell(z, y)$ is convex in $z$.  
\hfill $\square$

\begin{remark}
	The Hessian satisfies $H \mathbf{1} = 0$ (translation invariance), hence $\ell$ is convex but not strongly convex.
\end{remark}

Since $\mathcal{A}$ is a linear operator in $W^V$, the composition $\ell(\mathcal{A}(W^V), y)$ remains convex.  
By summation over samples, $f(W^V)$ is convex in $W^V$.

\subsection{Smoothness Proof}

We next prove that $f(W^V)$ has Lipschitz-continuous gradients.

By the chain rule:
\begin{equation}
	\nabla_{W^V} f(W^V) = \mathcal{A}^*\big(\nabla_z \mathcal{L}_{CE}(z(W^V))\big),
\end{equation}
where $\mathcal{A}^*$ is the adjoint operator (matrix transpose in finite dimensions).

Since $\nabla_z \mathcal{L}_{CE}(z) = p - y_{\mathrm{onehot}}$, the gradient w.r.t. $W^V$ depends linearly on $\mathcal{A}^*$ and the residual $(p - y)$.

\begin{lemma}(Smoothness)
	The function $\mathcal{L}_{CE}(z)$ is 1/4-smooth in $z$, i.e.,
	\[
	\|\nabla_z \mathcal{L}_{CE}(z_1) - \nabla_z \mathcal{L}_{CE}(z_2)\|_2
	\le \tfrac{1}{4}\|z_1 - z_2\|_2.
	\]
\end{lemma}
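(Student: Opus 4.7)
The plan is to pass from a pointwise bound on the Hessian to the gradient Lipschitz estimate via the mean value theorem, exploiting the explicit form $H(z) = \nabla_z^2 \mathcal{L}_{CE}(z) = \mathrm{diag}(p) - pp^\top$ (with $p = \mathrm{softmax}(z)$) already derived in the convexity proof. The fundamental theorem of calculus gives
\[
\nabla_z \mathcal{L}_{CE}(z_1) - \nabla_z \mathcal{L}_{CE}(z_2) = \left(\int_0^1 H\bigl(z_2 + t(z_1-z_2)\bigr)\,dt\right)(z_1 - z_2),
\]
so after taking $\ell_2$ norms and pulling the operator norm inside the integral, the lemma reduces to a uniform spectral bound $\sup_z \|H(z)\|_{\mathrm{op}} \le 1/4$.

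The core of the argument is this spectral estimate, which I would establish through the variance representation already noted in the convexity proof: for any unit vector $v$,
\[
v^\top H(z) v = \sum_k p_k v_k^2 - \Bigl(\sum_k p_k v_k\Bigr)^2 = \mathrm{Var}_p(V),
\]
where $V$ is the discrete random variable taking value $v_k$ with probability $p_k$. Two ingredients then combine. First, since variance is translation-invariant and $H(z)\mathbf{1} = 0$, the supremum is attained on $\mathbf{1}^\perp$ and one may recenter $V$ to have mean zero. Second, Popoviciu's inequality $\mathrm{Var}(V) \le (\max_k v_k - \min_k v_k)^2/4$, paired with the $\ell_2$ unit-norm constraint controlling the diameter of $\{v_k\}$, is designed to deliver the target constant $1/4$.

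The main obstacle is precisely achieving the sharp constant. The crude bound $H(z) \preceq \mathrm{diag}(p)$ yields only $\|H(z)\|_{\mathrm{op}} \le \max_k p_k \le 1$, and a direct binary-case calculation gives $\|H\|_{\mathrm{op}} = 2p_1 p_2 \le 1/2$, so the Popoviciu step must be coupled carefully with the null-space reduction and the unit-norm constraint in order to avoid a loose factor of two. Should the exact $1/4$ bound prove elusive in the general multiclass regime, the weaker $L = 1/2$ constant still delivers the $O(1/t)$ sublinear rate required by Proposition \ref{prop:wv_convergence}, so only downstream constants are affected. Once the spectral bound is in hand, the Lipschitz estimate drops out of the integral identity in a single line.
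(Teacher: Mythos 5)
Your reduction of the lemma to a uniform spectral bound on $H(z)=\mathrm{diag}(p)-pp^\top$ via the integral form of the Hessian is exactly the route the paper takes (its proof sketch simply asserts that all eigenvalues of $H$ lie in $[0,1/4]$ and concludes). However, the "obstacle" you flag at the end is not a difficulty to be engineered around: it is a disproof of the stated constant. At $z_1=z_2$ in the two-class case one has $p=(1/2,1/2)$ and $H=\tfrac14\bigl(\begin{smallmatrix}1&-1\\-1&1\end{smallmatrix}\bigr)$, whose top eigenvalue is $1/2$; the same value is approached for any number of classes by concentrating $p$ on two coordinates. So $\mathcal{L}_{CE}$ is not $1/4$-smooth in the logits, and the paper's appeal to "standard logistic regression analysis" conflates the scalar sigmoid bound $\sigma(z)(1-\sigma(z))\le 1/4$ (one logit) with the softmax Hessian (two or more logits), where the relevant quantity is $2p_1p_2\le 1/2$. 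Your Popoviciu route cannot recover $1/4$ either: with $\|v\|_2=1$ and the max and min of $v$ attained at distinct coordinates, $(\max_k v_k-\min_k v_k)^2\le 2\|v\|_2^2$, so $\mathrm{Var}_p(V)\le \tfrac12\|v\|_2^2$, and this is tight at $v=(1,-1,0,\dots)/\sqrt2$, which already lies in $\mathbf{1}^\perp$, so the null-space reduction buys nothing further.

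The correct statement is therefore $1/2$-smoothness, and your fallback is the right resolution rather than a concession: $\sup_z\|H(z)\|_{\mathrm{op}}=1/2$, giving $L=\tfrac12\|\mathcal{A}\|_{\mathrm{op}}\|\mathcal{A}^*\|_{\mathrm{op}}$ in place of the paper's $\tfrac14\|\mathcal{A}\|_{\mathrm{op}}\|\mathcal{A}^*\|_{\mathrm{op}}$. Everything downstream survives: convexity is untouched, the descent lemma and the $O(1/t)$ rate in the convergence proposition hold verbatim with the larger $L$, and only the admissible step size $\eta\le 1/L$ shrinks by a factor of two. Modulo correcting the constant (in the lemma statement as well as the proof), your argument is complete and in fact more rigorous than the paper's, since you supply both the Hessian-to-Lipschitz step and an honest spectral bound where the paper offers only an (erroneous) citation to folklore.
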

\textit{Proof Sketch.}
The Hessian of $\mathcal{L}_{CE}$ is $H = \mathrm{diag}(p) - p p^\top$.  
All eigenvalues of $H$ lie in $[0, 1/4]$, as proved in standard logistic regression analysis.  
Hence, the gradient is 1/4-Lipschitz.  
\hfill $\square$

By composition with linear operator $\mathcal{A}$,

\begin{align*}
	\|\nabla_{W^V} f(W_1^V) - \nabla_{W^V} f(W_2^V)\|_F
	&= \|\mathcal{A}^*(\nabla_z \mathcal{L}_{CE}(z_1) - \nabla_z \mathcal{L}_{CE}(z_2))\|_F \\
	&\le \|\mathcal{A}^*\|_{\mathrm{op}} \tfrac{1}{4}
	\|\mathcal{A}\|_{\mathrm{op}} \|W_1^V - W_2^V\|_F.
\end{align*}

Thus $f$ is $L$-smooth with
\[
L = \tfrac{1}{4} \|\mathcal{A}\|_{\mathrm{op}} \|\mathcal{A}^*\|_{\mathrm{op}}.
\]

\subsection{Convergence Rate Proof}

With convexity and $L$-smoothness established, we can apply standard convergence results.

\begin{proposition}[Convergence of $W^V$ Optimization]
	\label{prop:WV_convergence}
	Let $f(W^V)$ be convex and $L$-smooth.  
	Then, for learning rate $\eta \le 1/L$, gradient descent satisfies:
	\[
	f(W^V_t) - f(W^{V*})
	\le \frac{\|W^V_0 - W^{V*}\|_F^2}{2\eta t}
	= \mathcal{O}(1/t).
	\]
\end{proposition}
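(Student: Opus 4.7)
The plan is to follow the standard convex, $L$-smooth gradient-descent argument, leveraging the two structural facts already proven earlier in the appendix: (i) $f(W^V)$ is convex in $W^V$, and (ii) $\nabla_{W^V}f$ is $L$-Lipschitz with $L = \tfrac{1}{4}\|\mathcal{A}\|_{\mathrm{op}}\|\mathcal{A}^*\|_{\mathrm{op}}$. No new structural facts about the attention map are needed; everything reduces to the abstract convex--smooth template.

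First, I would invoke the descent lemma. Since $f$ is $L$-smooth, for any $W, W'$ we have
\begin{equation*}
f(W') \le f(W) + \langle \nabla f(W), W'-W\rangle + \tfrac{L}{2}\|W'-W\|_F^2.
\end{equation*}
Plugging in the gradient update $W^V_{t+1} = W^V_t - \eta \nabla f(W^V_t)$ and using $\eta \le 1/L$ yields the per-step decrease
\begin{equation*}
f(W^V_{t+1}) \le f(W^V_t) - \tfrac{\eta}{2}\|\nabla f(W^V_t)\|_F^2,
\end{equation*}
which shows the objective is monotonically non-increasing along the trajectory.

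Next I would track the distance to an optimum $W^{V*}$. Expanding the squared Frobenius norm,
\begin{equation*}
\|W^V_{t+1}-W^{V*}\|_F^2 = \|W^V_t-W^{V*}\|_F^2 - 2\eta \langle \nabla f(W^V_t), W^V_t-W^{V*}\rangle + \eta^2 \|\nabla f(W^V_t)\|_F^2.
\end{equation*}
By convexity, $\langle \nabla f(W^V_t), W^V_t-W^{V*}\rangle \ge f(W^V_t) - f(W^{V*})$, and the descent lemma bounds the $\eta^2\|\nabla f\|_F^2$ term by $2\eta(f(W^V_t)-f(W^V_{t+1}))$. Combining these two inequalities gives
\begin{equation*}
2\eta \bigl(f(W^V_{t+1}) - f(W^{V*})\bigr) \le \|W^V_t-W^{V*}\|_F^2 - \|W^V_{t+1}-W^{V*}\|_F^2.
\end{equation*}
Summing from $s=0$ to $t-1$ telescopes the right-hand side, and monotonicity of $f(W^V_s)$ lets me lower-bound the left-hand side by $2\eta t (f(W^V_t)-f(W^{V*}))$. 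Dividing through by $2\eta t$ delivers the claimed bound $f(W^V_t) - f(W^{V*}) \le \|W^V_0-W^{V*}\|_F^2/(2\eta t)$.

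There is not really a hard conceptual step here; the main thing to be careful about is that the cross-entropy composed with the linear attention map $\mathcal{A}$ is only convex, not strongly convex (the Hessian of the softmax cross-entropy has $\mathbf{1}$ in its kernel, as noted in the earlier remark), so a linear $O(e^{-ct})$ rate is not available and the sublinear $O(1/t)$ rate is the correct conclusion. I would also explicitly note that existence of a minimizer $W^{V*}$ with $f(W^{V*})=0$ is assumed from the proposition statement; if only $\inf f$ is attained in the limit, the same argument yields the analogous bound with $f(W^{V*})$ replaced by $\inf f$, provided $\|W^V_0-W^{V*}\|_F$ is finite for some sequence approaching the infimum.
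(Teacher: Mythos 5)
Your proposal is correct and follows essentially the same route as the paper's proof: the descent lemma for $L$-smooth functions under $\eta \le 1/L$, the convexity inequality $\langle \nabla f(W^V_t), W^V_t - W^{V*}\rangle \ge f(W^V_t)-f(W^{V*})$, and a telescoping sum over the iterates. You simply spell out the intermediate distance-recursion and monotonicity steps that the paper compresses into ``combining and telescoping,'' so there is nothing to add.
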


\textit{Proof.}
The standard descent lemma for $L$-smooth functions gives:
\[
f(W_{t+1}^V) \le f(W_t^V)
- \eta \|\nabla f(W_t^V)\|_F^2
+ \frac{L\eta^2}{2}\|\nabla f(W_t^V)\|_F^2.
\]

For $\eta \le 1/L$,
\[
f(W_{t+1}^V) \le f(W_t^V)
- \frac{\eta}{2}\|\nabla f(W_t^V)\|_F^2.
\]

By convexity, $f(W_t^V) - f(W^{V*}) \le \langle \nabla f(W_t^V), W_t^V - W^{V*} \rangle$.  
Combining and telescoping the inequality yields:
\[
f(W_t^V) - f(W^{V*}) \le \frac{\|W_0^V - W^{V*}\|_F^2}{2\eta t}.
\]
\hfill $\square$

\subsection{Discussion}

This analysis demonstrates that when $W^Q$ and $W^K$ are approximately fixed, the optimization of $W^V$ reduces to a convex and $L$-smooth problem, ensuring an $\mathcal{O}(1/t)$ convergence rate. This explains the empirically observed stability of $W^V$ dynamics and reinforces our conclusion that the heavy-tailed phenomenon manifests most prominently in the $V$ matrix.

\section{Pervasive Heavy-Tailed Features and Analysis of Representative Matrices}
\subsection{Analysis of Representative Matrices}\label{sec: heavy-tail}\label{sec:choose en.0.s.a.v}
In this section, we rigorously verify the heavy-tailed characteristics of Transformer weight matrices by comparing the goodness-of-fit of the Power-Law distribution against alternative hypotheses. As noted by \citet{clauset2009power} and \citet{alstott2014powerlaw}, even when a dataset appears to follow a power-law distribution, alternative distributions such as the exponential or log-normal may provide an equally good or even superior fit. To rigorously test this, one can compare the goodness-of-fit among competing models and compute the p-value associated with the power-law hypothesis \citep{clauset2009power, klaus2011statistical, alstott2014powerlaw}.

We compared the fits of the Power-Law, Log-Normal, and Exponential distributions to the empirical spectral distributions (ESDs) of the Q, K, and V matrices. In the English–Chinese dataset, we frequently observed that the Log-Normal distribution provided a better fit than the Power-Law—for example, fore.g., for ``\texttt{de.0.a.v}'', ``\texttt{de.1.a.v
}'', ``\texttt{de.0.s.a.q}'', ``\texttt{de.0.s.a.v}'', ``\texttt{en.1.s.a.k}'', ``\texttt{en.0.s.a.v}''\footnote{Here, `de' denotes the decoder layer, `en' denotes the encoder layer, `0',`1',`2' indicate the layer indices, `a' represents attention, `s.a' refers to the self-attention layer, and `q', `k', and `v' correspond to the Q, K, V matrix.} at epoch=1. \citet{alstott2014powerlaw} emphasized that when a plausible generative mechanism exists for the log-normal distribution, it often fits empirical data as well as--or better than--the power-law. Similarly, \citet{clauset2009power} observed that, unless the dataset is exceptionally large, no statistical test can reliably distinguish between the log-normal and power-law fits. Nonetheless, prior work \citep{martin2021predicting} has shown that for most weight matrices, both the bulk and the tail of their ESDs can be well described by a power-law distribution. Therefore, we consider the power-law model a reasonable and theoretically grounded choice.

Despite this, we conducted direct comparisons between the power-law and log-normal distributions and obtained results consistent with those of \citet{clauset2009power} and \citet{alstott2014powerlaw}. Detailed results are omitted due to space constraints. Because the exponential distribution serves as a minimal benchmark for identifying heavy-tailed behavior \citep{alstott2014powerlaw}, we further compared the power-law and exponential fits under various parameter settings using the same datasets. By examining the p-values and log-likelihood ratio R, we identified matrices for which the power-law fit significantly outperformed the exponential fit, indicating clear heavy-tailed characteristics. A summary of these results is provided in Table \ref{tab:heavy_tail_stats_wide}.

\begin{table}[H]
	\centering
	\caption{Statistically Significant Heavy-Tailed Matrices}
	\label{tab:heavy_tail_stats_wide}
	\begin{tabular}{@{}ll@{}}
		\toprule
		\textbf{Setup} & \textbf{Significantly Heavy-Tailed Matrices} \\
		\midrule
		\textbf{T1\_En-Ch9.6M} & en.0.s.a.v (50); de.0.s.a.q (153); en.0.s.a.q(50) \\
		\addlinespace	
		\textbf{T2\_En-Ch9.6M} & en.0.s.a.v (50); de.0.s.a.k (100); de.0.s.a.q (200) \\
		\addlinespace
		\textbf{T3\_En-Ch19.2M} & en.0.s.a.v (50); en.0.s.a.q (50); en.1.s.a.q (50) \\
		\bottomrule
	\end{tabular}
\end{table}
The results in Table~\ref{tab:heavy_tail_stats_wide} reveal a particularly consistent pattern: the \texttt{en.0.s.a.v} weight matrix persistently exhibits a superior Power-Law (PL) fit compared to the Exponential fit, regardless of the dataset or architectural configuration. To further investigate this robust heavy-tailed behavior, we tracked the evolution of the p-value and the log-likelihood ratio ($R$) for this matrix across training epochs. The temporal dynamics under different experimental settings are visualized in the figures below. Specifically, we identify the condition where the p-value is $< 0.1$ and the log-likelihood ratio is positive ($R > 0$) as indicative of pronounced heavy-tailed characteristics. Based on this criterion, the trends observed in Figures~\ref{fig:image1}, \ref{fig:image2} lead to the final conclusion that the \texttt{en.0.s.a.v} matrix consistently exhibits significant heavy-tailed characteristics throughout the training process.
\begin{figure}[H] 
	\centering 
	\includegraphics[width=0.7\textwidth]{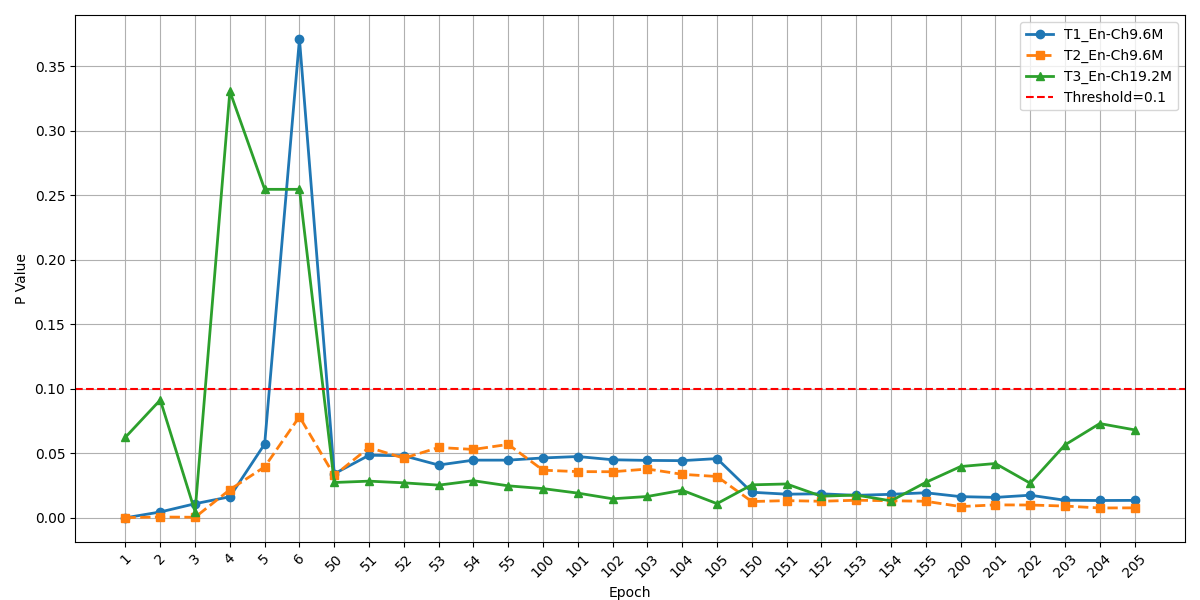}
	\caption{P-value of the power-law fit for the `en.0.s.a.v' matrix versus training epoch across different experimental setups.}
	\label{fig:image1}
\end{figure}
\begin{figure}[h!]
	\centering
	\includegraphics[width=0.7\textwidth]{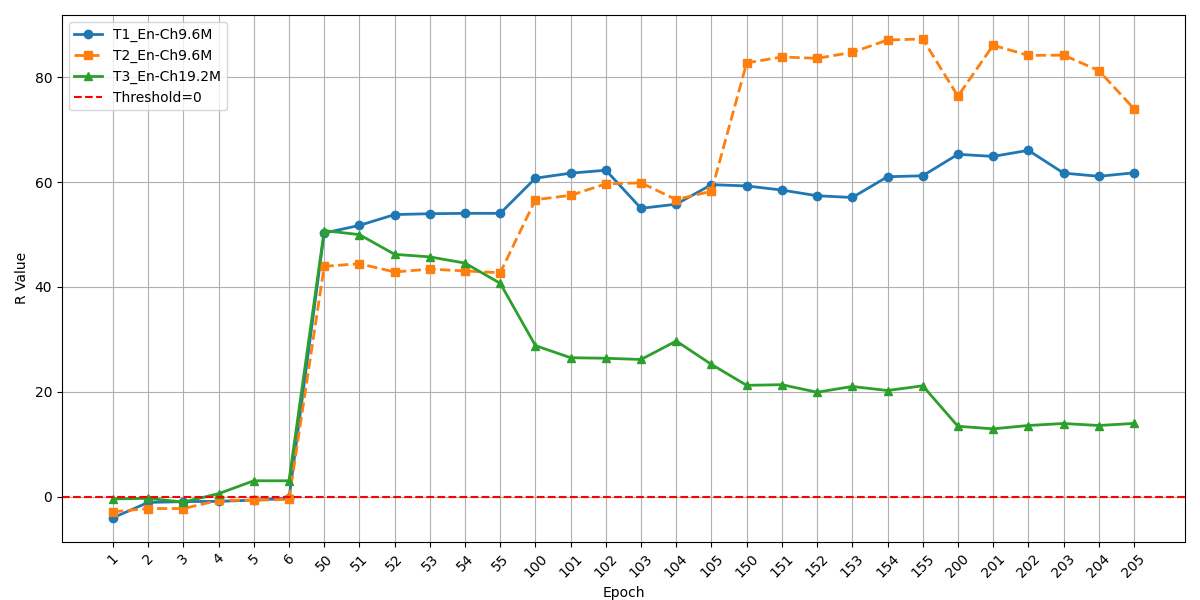}
	\caption{Log-likelihood ratio (R) for the power-law vs. exponential fit for the \texttt{en.0.s.a.v} matrix versus training epoch. }
	\label{fig:image2}
\end{figure}

\subsection{Supplementary experiments with three training phases}\label{supplementary_three_phases}
\begin{figure}[t]
	\centering	
	\begin{subfigure}[b]{1\linewidth}
		\centering
		\includegraphics[width=0.7\linewidth]{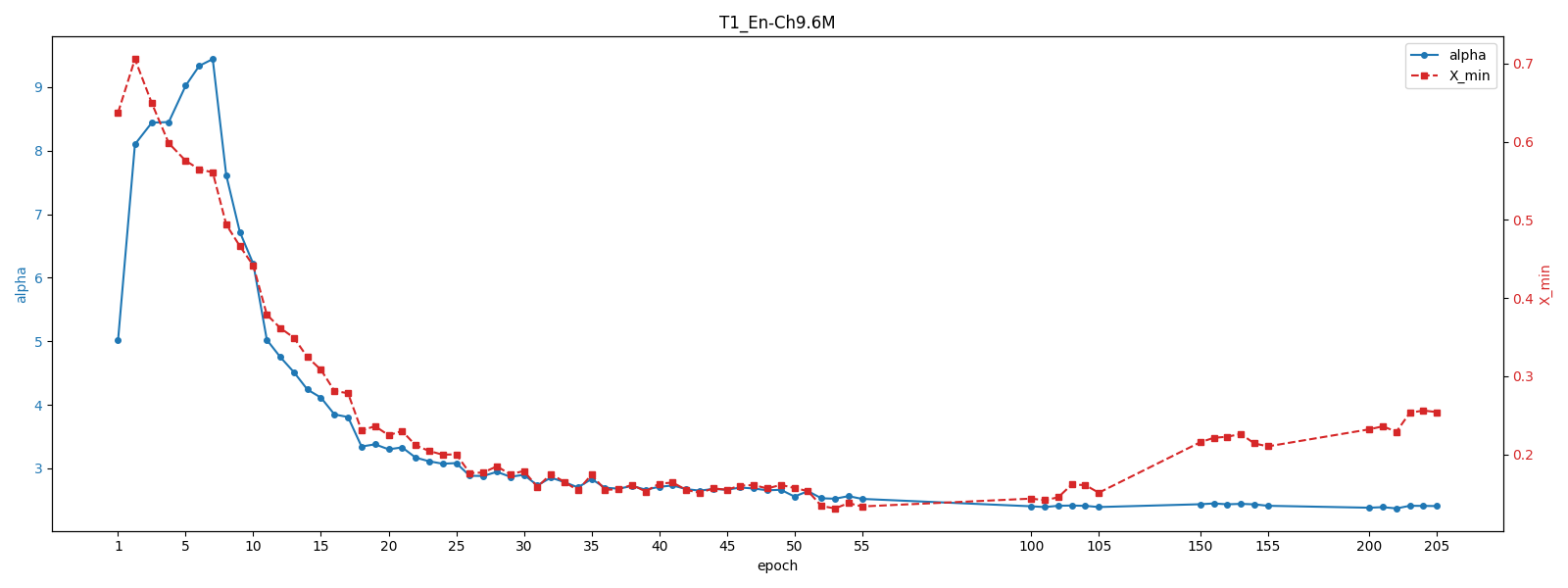}
		\caption{}
	\end{subfigure}
	
	\begin{subfigure}[b]{1\linewidth}
		\centering
		\includegraphics[width=0.7\linewidth]{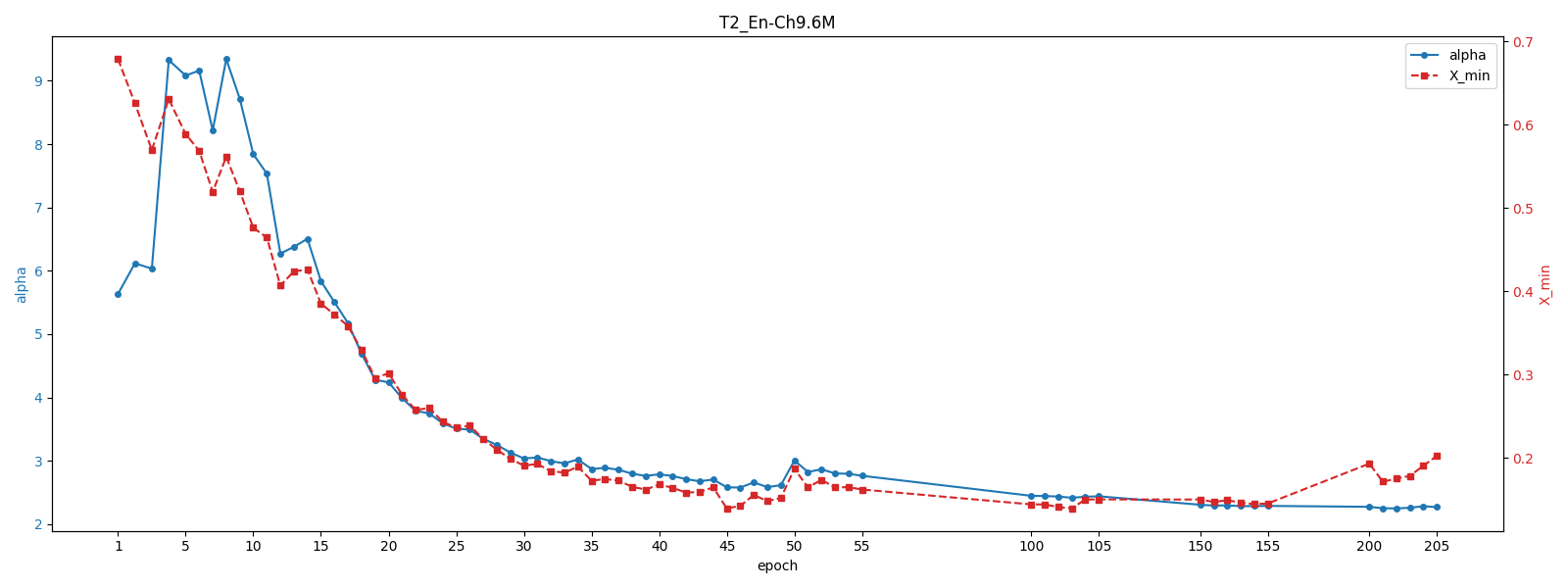}
		\caption{}
	\end{subfigure}
	\caption{Evolution of $\alpha$ and $x_{\text{min}}$ for \texttt{en.0.s.a.v}  matrix tracked across T1\_En-Ch9.60M, T2\_En-Ch9.62M. }
	\label{fig:alpha xmin for en.0.s.a.v in appendix}
\end{figure}

Figure~\ref{fig:alpha xmin for en.0.s.a.v in appendix} illustrates the detailed trajectories of the fitted parameters ($\alpha$ and $x_{\min}$) for the \texttt{en.0.s.a.v} matrix as a function of training epochs, specifically under the experimental configurations \textbf{T1\_En-Ch9.6M} and \textbf{T2\_En-Ch9.6M}.

To further substantiate the correspondence between these internal spectral dynamics and external learning performance, Figure~\ref{fig:loss_accuracy_curves} depicts the evolution of Loss and Accuracy across all training epochs for the three experimental setups. Given that classic early stopping is typically implemented by monitoring validation set performance, this comparison is essential. During \textbf{Phase I}, the model exhibits the most rapid performance improvement, characterized by a sharp decline in loss and a swift increase in accuracy. In \textbf{Phase II}, the rate of improvement decelerates, and the curves smoothen as the model undergoes steady learning and refinement. Finally, in \textbf{Phase III}, the performance curves plateau, with both validation loss and accuracy showing negligible further gains. This saturation aligns consistently with the microscopic phenomena observed in the spectral tail.
\begin{figure}[t]
	\centering
	\begin{subfigure}[b]{1\linewidth}
		\centering
		\includegraphics[width=0.5\linewidth]{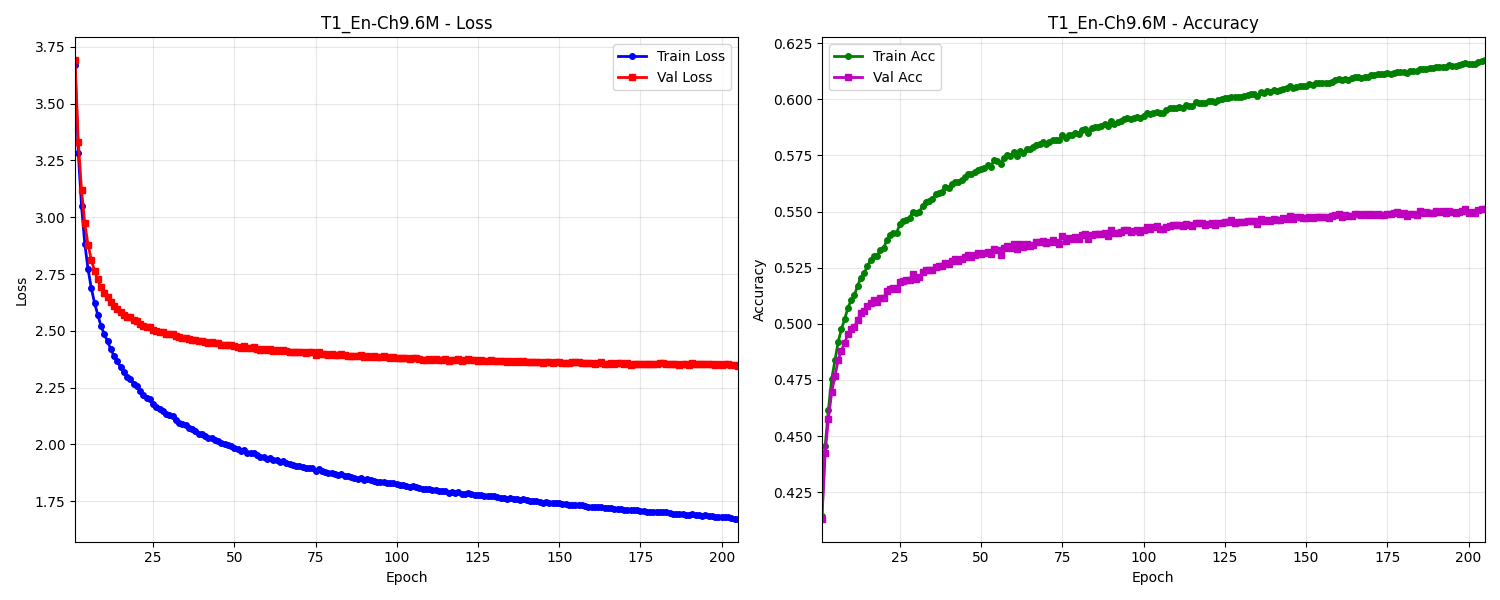}
		\caption{}
	\end{subfigure}
	
	\begin{subfigure}[b]{1\linewidth}
		\centering
		\includegraphics[width=0.5\linewidth]{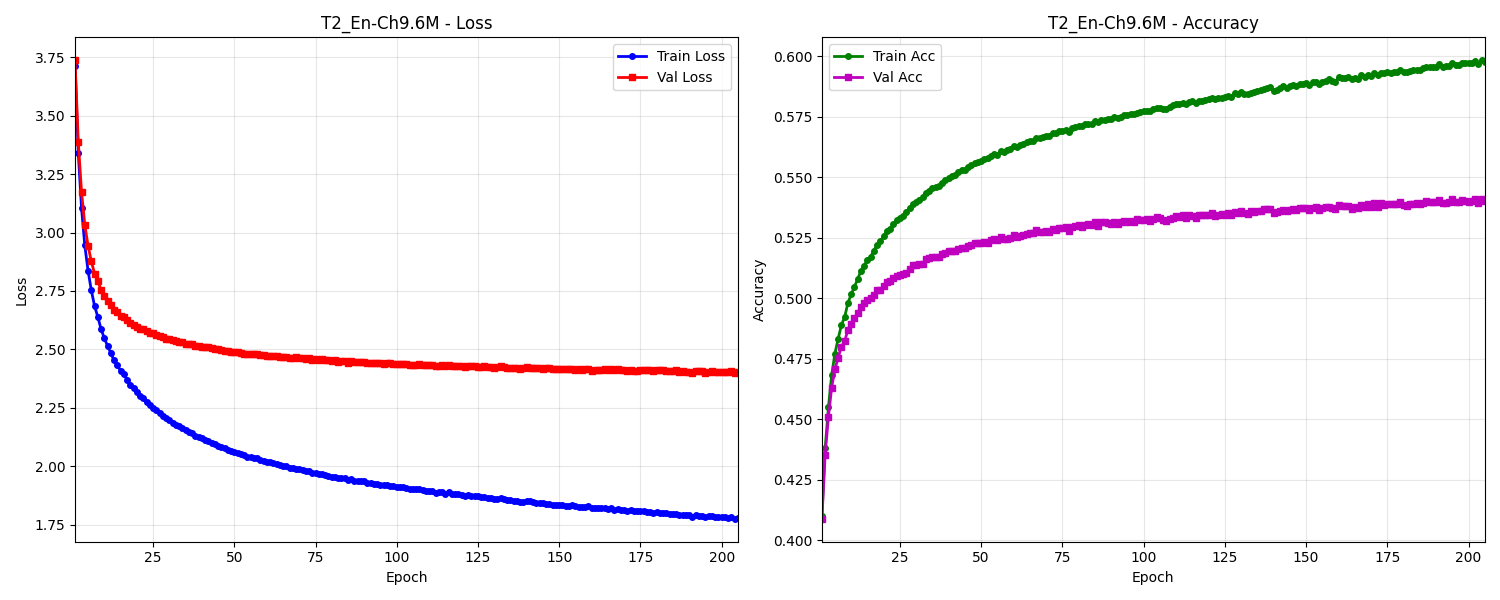}
		\caption{}
	\end{subfigure}
	
	\begin{subfigure}[b]{1\linewidth}
		\centering
		\includegraphics[width=0.5\linewidth]{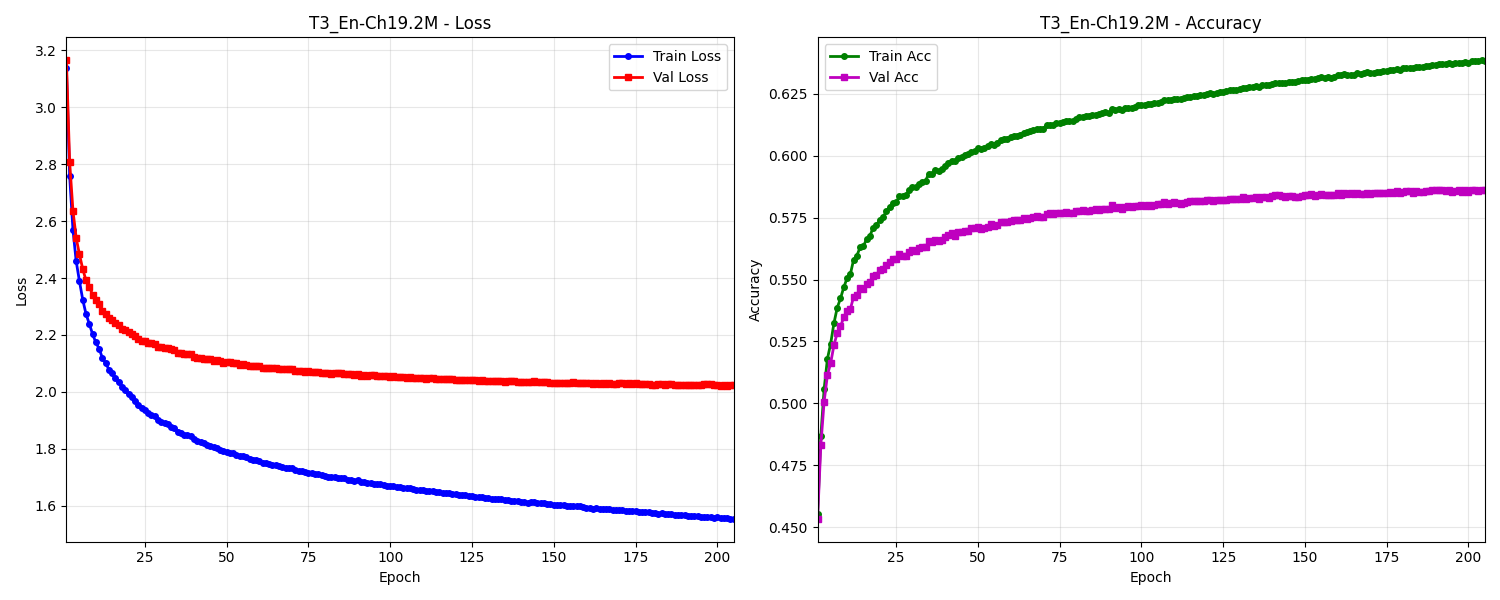}
		\caption{}
	\end{subfigure}
	\caption{Evolution of Loss values and Accuracy values with training epochs for the three experimental setups.}
	\label{fig:loss_accuracy_curves}
\end{figure}

\section{Dynamic Evolution of Spectral Parameters and Training Phase Division}\label{sec:en.0.s.a.v more stable}

This section analyzes the dynamic evolution of spectral parameters ($\alpha$ and $x_{\min}$) in Transformer weight matrices to uncover the intrinsic dynamics across different training phases. By establishing a comprehensive workflow involving data extraction, SVD decomposition, and Power-Law fitting, combined with multi-dimensional visualization techniques, we systematically evaluate the stability of various attention mechanisms. Our analysis reveals three key findings: (1) shallow layers are significantly more stable than deep layers; (2) the cross-attention mechanism exhibits the most pronounced fluctuations; and (3) the shallow encoder $V$ matrix (\texttt{en.0.s.a.v}) demonstrates the most consistent convergence trend and a unique heavy-tailed evolution pattern.(This result aligns with the representative matrix analysis presented in Appendix~\ref{sec: heavy-tail}.) 

On a log-log plot, a Power-Law distribution appears as a straight line, where the parameter $\alpha$ corresponds to the negative slope. The value of $\alpha$ directly characterizes the distribution's ``tail heaviness'': a larger $\alpha$ implies a steeper slope and a faster decay. To capture training dynamics, we followed this workflow:
\begin{enumerate}
	\item \textbf{Data Collection:} During the training of the two experimental setups (T1\_En-Ch, and T2\_En-Ch), we periodically extracted the Q, K and V weight matrices from all attention mechanisms at checkpoints at multiple training epochs. Specifically, matrices were extracted every 50 epochs, resulting in six sampled epochs per experiment.
	\item \textbf{Matrix Construction:} For each Q, K and V matrix (denoted as $\mathbf{W}$), we constructed the characteristic matrix $\mathbf{X} = \mathbf{W}^T\mathbf{W}$ and applied Singular Value Decomposition (SVD) to obtain its spectral distribution.
	\item \textbf{Parameter Fitting:} A power-law fit was applied to each spectral distribution. For every matrix at each sampling point, we recorded two parameters: $\alpha$ and $x_{min}$, where $x_{min}$ represents the minimum boundary for which PL fitting can be performed, and $\alpha$ denotes the negative value of the slope of the function obtained after PL fitting.
	\item \textbf{Data Coverage:} Since each Transformer layer contains distinct attention mechanisms (encoder self-attention, decoder self-attention, and cross-attention), we tracked all Q, K, and V matrices across layers. For example, in the 3-layer T1 model, we collected spectral parameter data from nine attention modules per checkpoint—three encoder self-attention, three decoder self-attention, and three decoder cross-attention layers.
\end{enumerate}

To perform a comprehensive multi-dimensional analysis of the dynamic evolution of the Power-Law fitting parameters ($\alpha$ and $x_{\min}$), we developed a set of four complementary visualization techniques. For each of the two experimental configurations, these methods are presented in subplots (a)–(d) of the corresponding result figures. The objective is to capture the macroscopic trends, local stability, and fine-grained fluctuations of the parameters throughout training. The visualization results for the two experimental setups are shown in Figures~\ref{fig: alpha for T1_En_Ch 9.60M}, \ref{fig: xmin for T1_En_Ch 9.60M}, \ref{fig: alpha for T2_En_Ch 9.62M}, and \ref{fig: xmin for T2_En_Ch 9.62M}.
\begin{itemize}
	\item \textbf{Raw Dynamic Observation (Subplots a \& c):} Directly visualizes the unprocessed fitted parameters over training epochs to show immediate fluctuations.
	\item \textbf{Macroscopic Trend Analysis (Subplot b):} Plots the mean values of parameters over predefined epoch groups, illustrating the long-term trajectory of parameter evolution.
	\item \textbf{Local Stability Analysis (Subplot d):} Depicts the standard deviation of parameters within sampling groups; larger deviations indicate instability, while smaller ones suggest convergence.
\end{itemize}

\begin{figure}[h!]
	\centering
	\begin{subfigure}[b]{0.37\linewidth}
		\centering
		\includegraphics[height=0.7\linewidth]{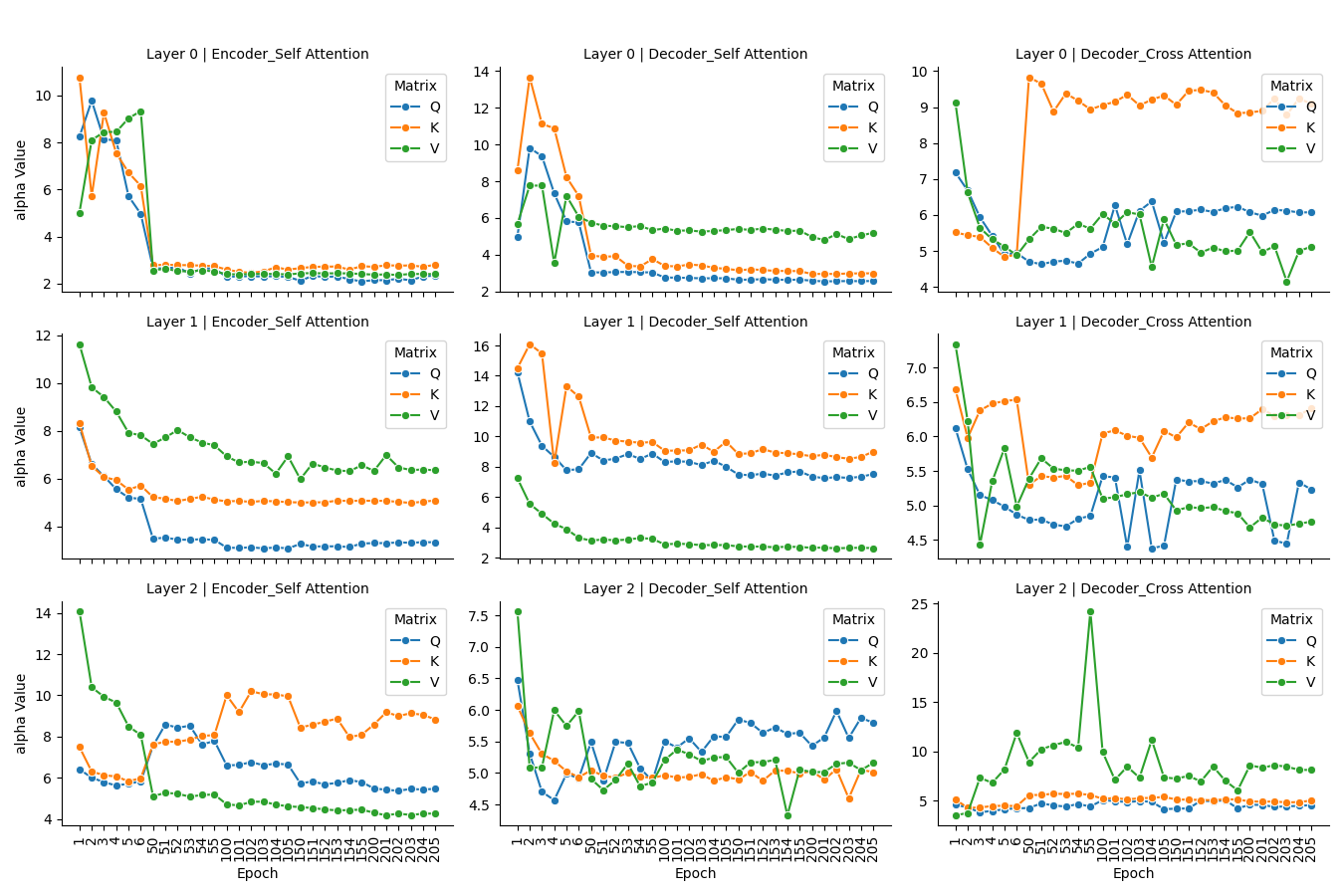}
		\caption{}
	\end{subfigure}
	\hspace{0.1\linewidth}
	\begin{subfigure}[b]{0.37\linewidth}
		\centering
		\includegraphics[height=0.7\linewidth]{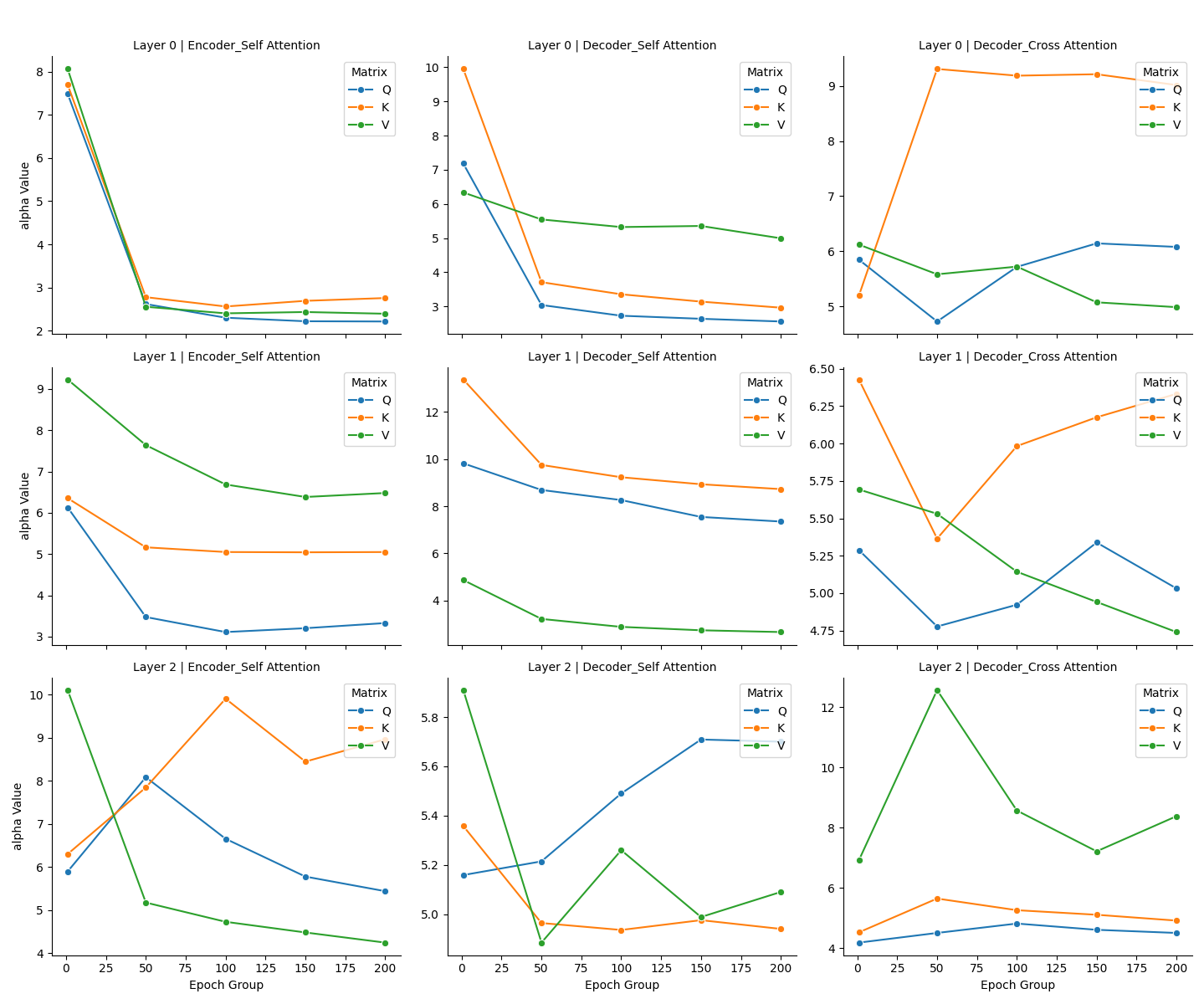}
		\caption{}
	\end{subfigure}
	
	\begin{subfigure}[b]{0.37\linewidth}
		\centering
		\includegraphics[height=0.7\linewidth]{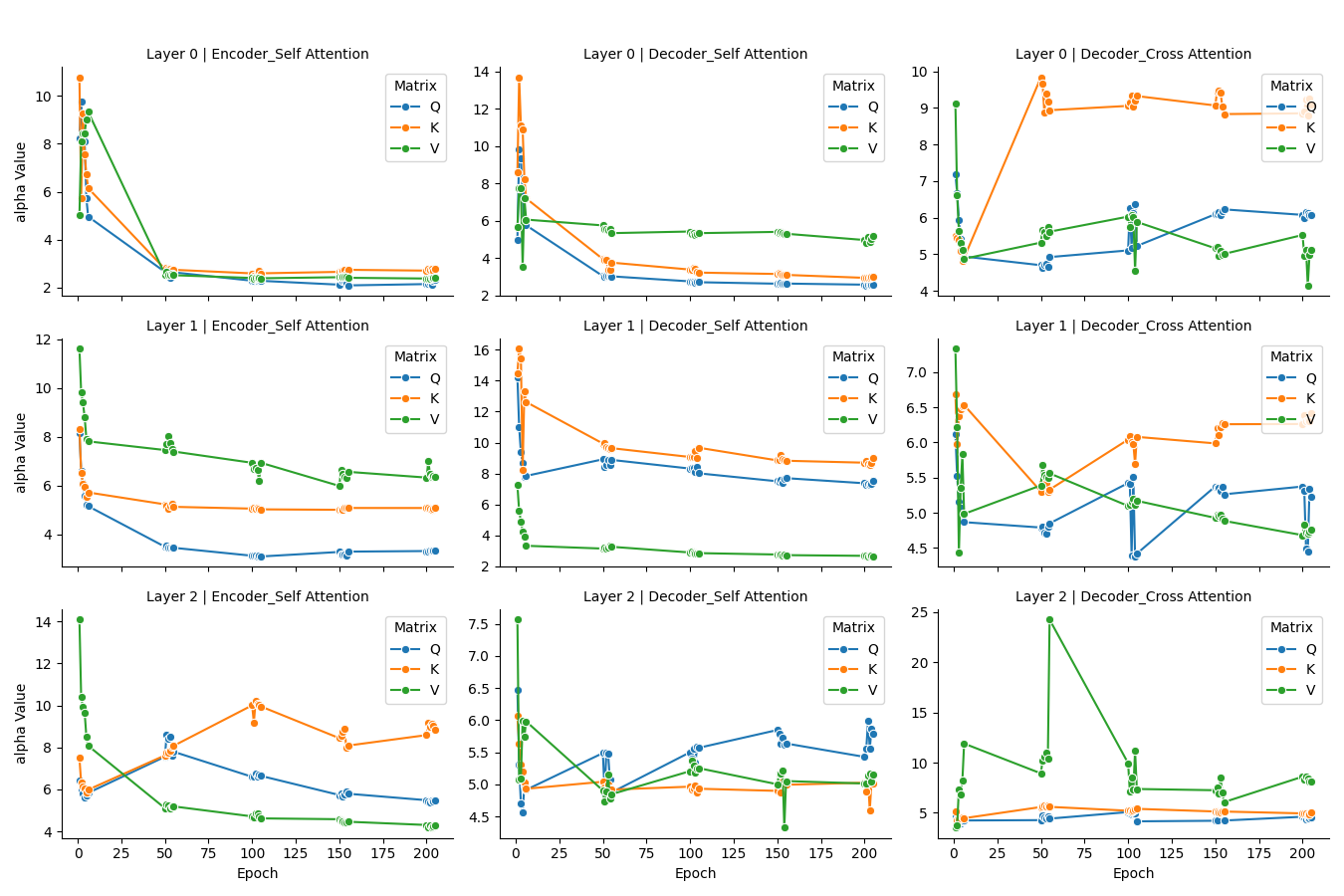}
		\caption{}
	\end{subfigure}
	\hspace{0.1\linewidth}
	\begin{subfigure}[b]{0.37\linewidth}
		\centering
		\includegraphics[height=0.7\linewidth]{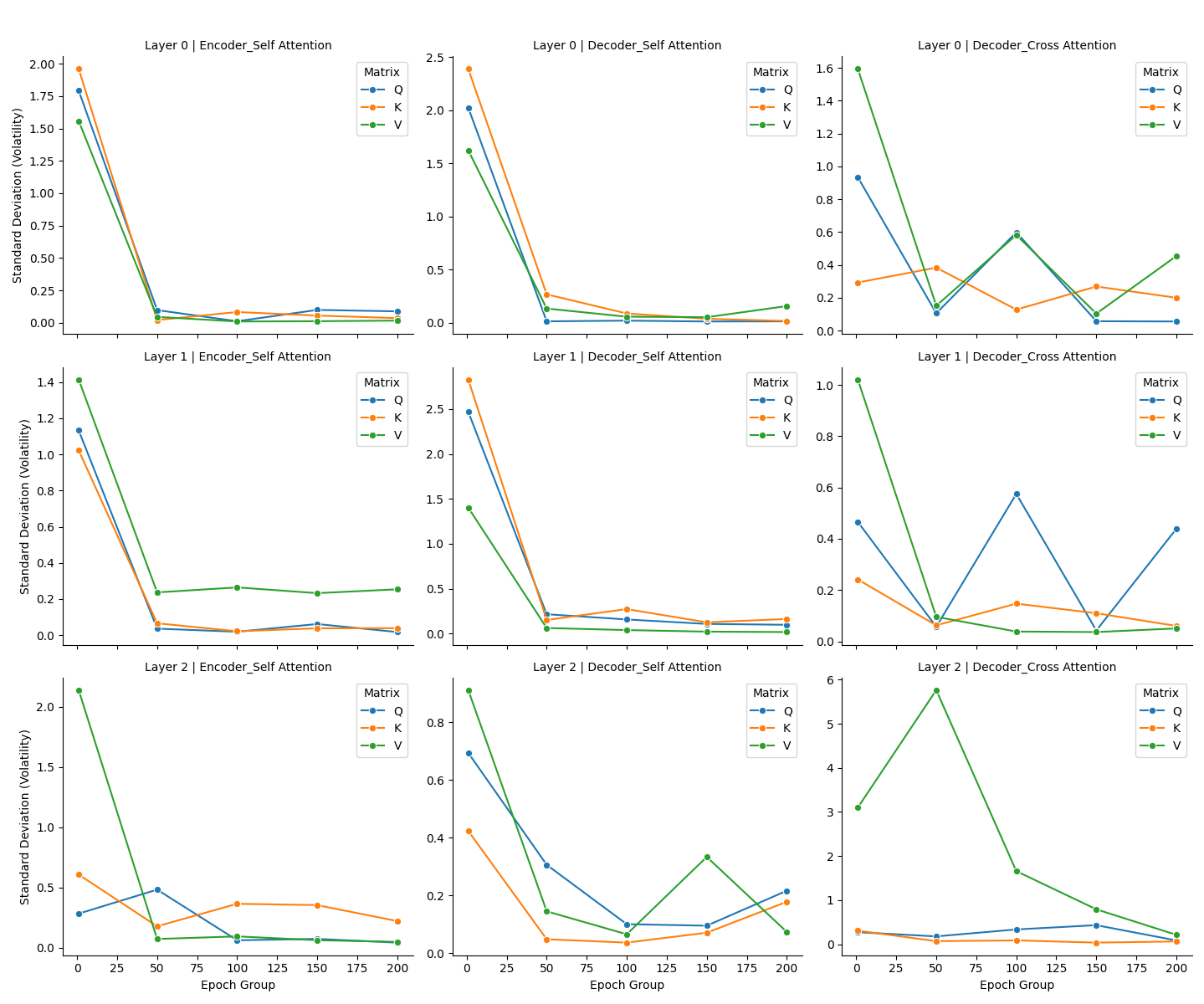}
		\caption{}
	\end{subfigure}
	\caption{ $\alpha$ parameter for T1\_En-Ch 9.60M}
	\label{fig: alpha for T1_En_Ch 9.60M}
\end{figure}
\begin{figure}[H]
	\centering
	\begin{subfigure}[b]{0.37\linewidth}
		\centering
		\includegraphics[height=0.7\linewidth]{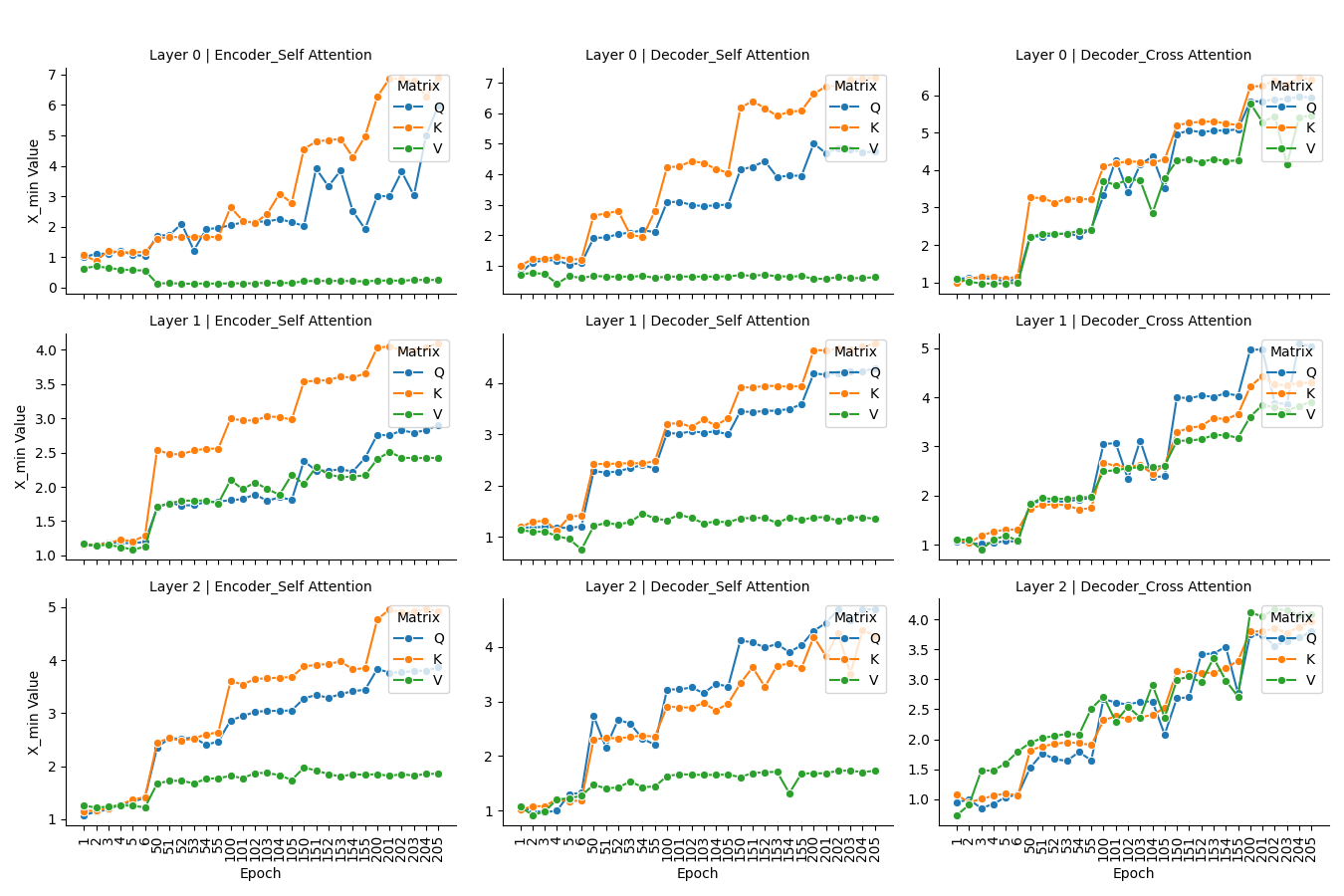}
		\caption{}
	\end{subfigure}
	\hspace{0.1\linewidth}
	\begin{subfigure}[b]{0.37\linewidth}
		\centering
		\includegraphics[height=0.7\linewidth]{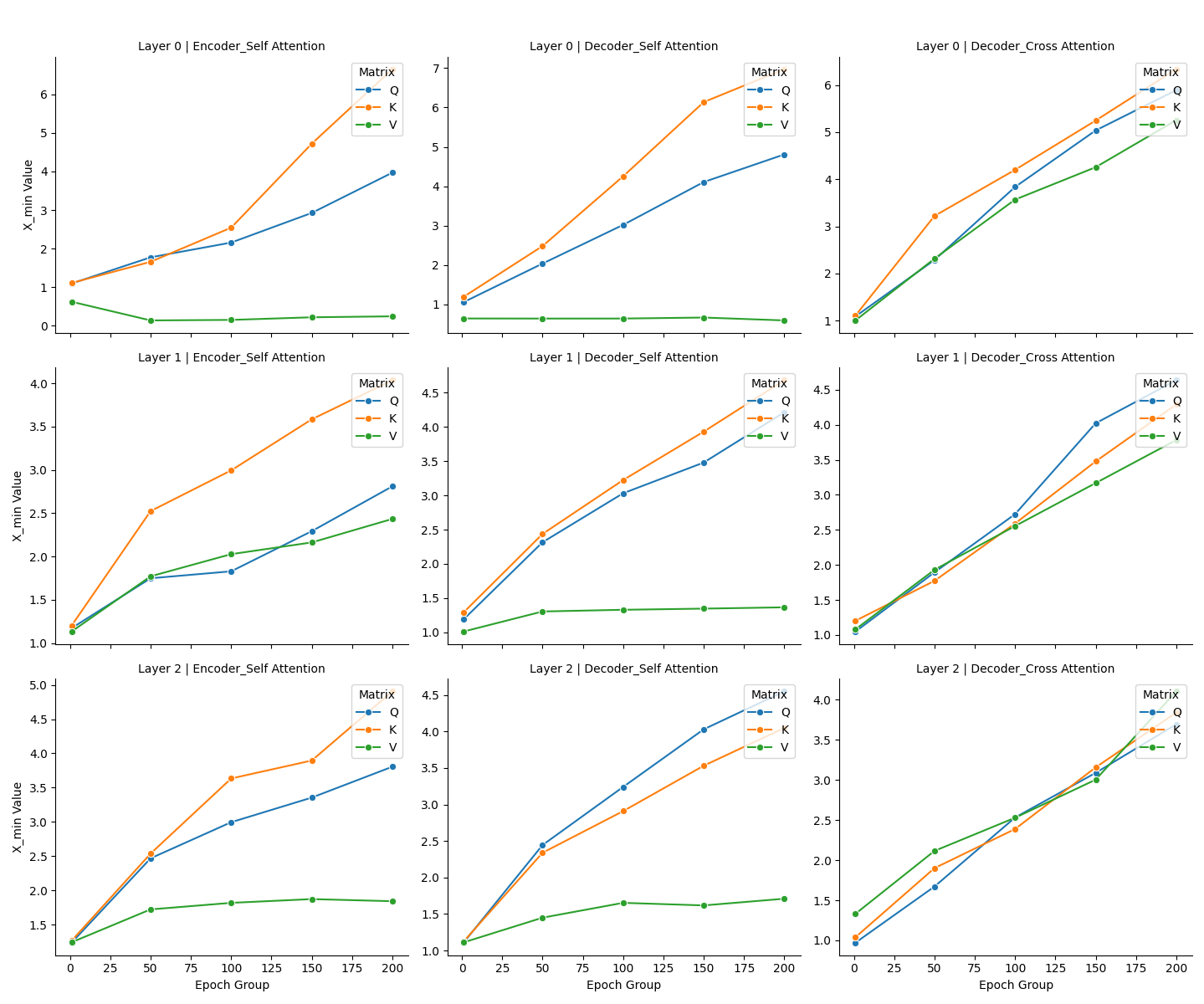}
		\caption{}
	\end{subfigure}
	
	\begin{subfigure}[b]{0.37\linewidth}
		\centering
		\includegraphics[height=0.7\linewidth]{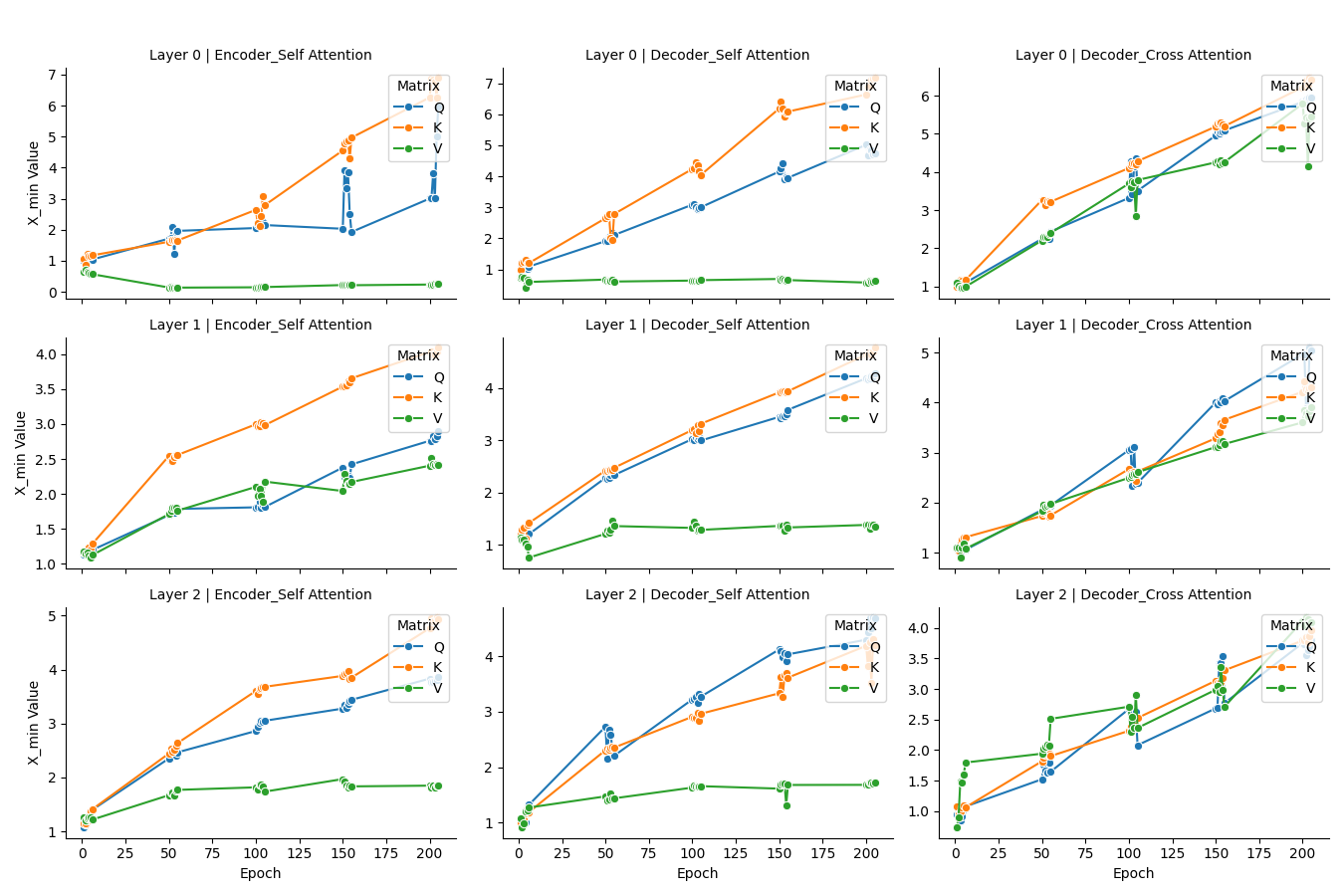}
		\caption{}
	\end{subfigure}
	\hspace{0.1\linewidth}
	\begin{subfigure}[b]{0.37\linewidth}
		\centering
		\includegraphics[height=0.7\linewidth]{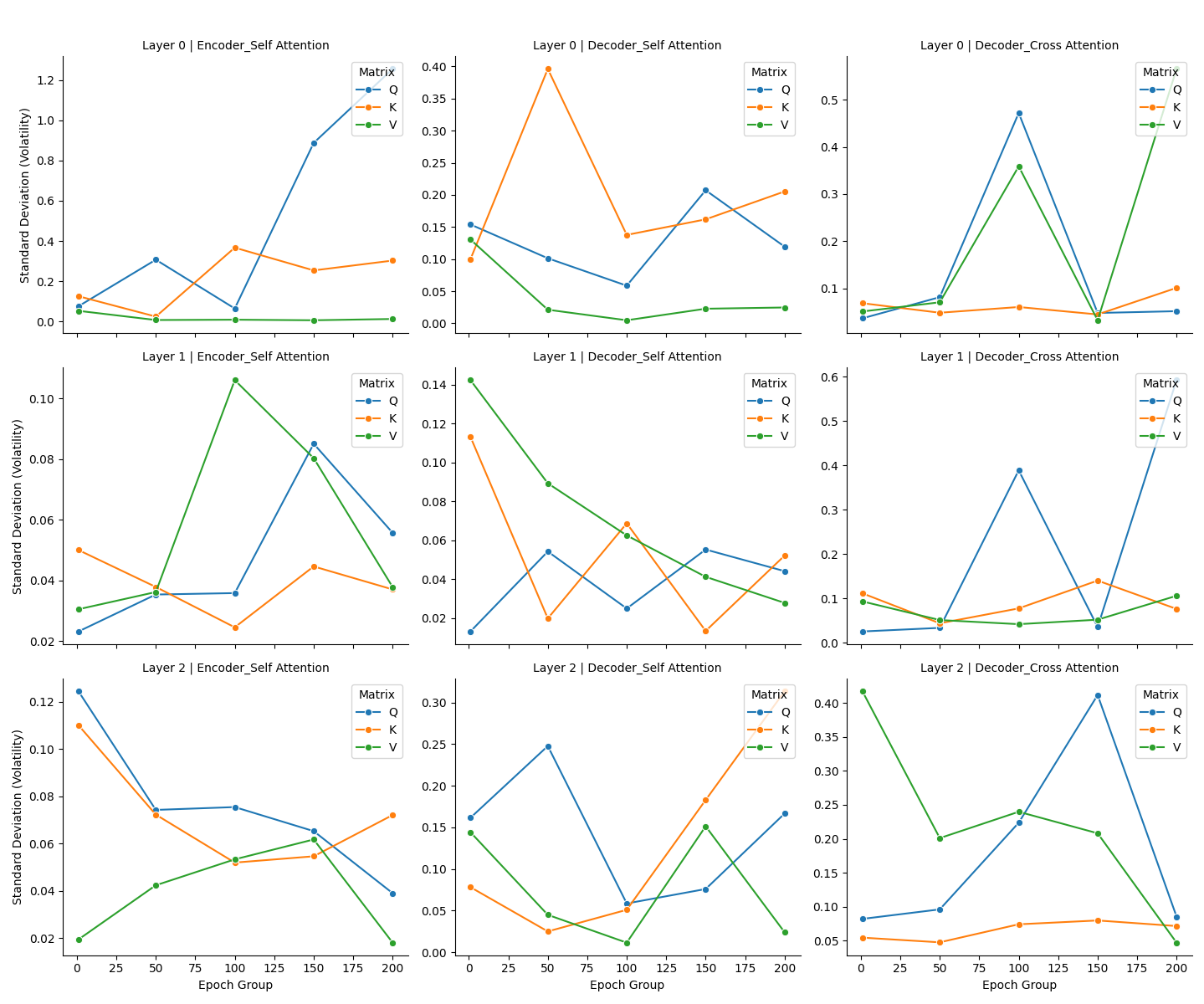}
		\caption{}
	\end{subfigure}
	\caption{$x_{min}$ parameter for T1\_En-Ch 9.60M}
	\label{fig: xmin for T1_En_Ch 9.60M}
\end{figure}
\begin{figure}[H]
	\raggedright
	\begin{subfigure}[b]{0.23\linewidth}
		\includegraphics[height=1\linewidth]{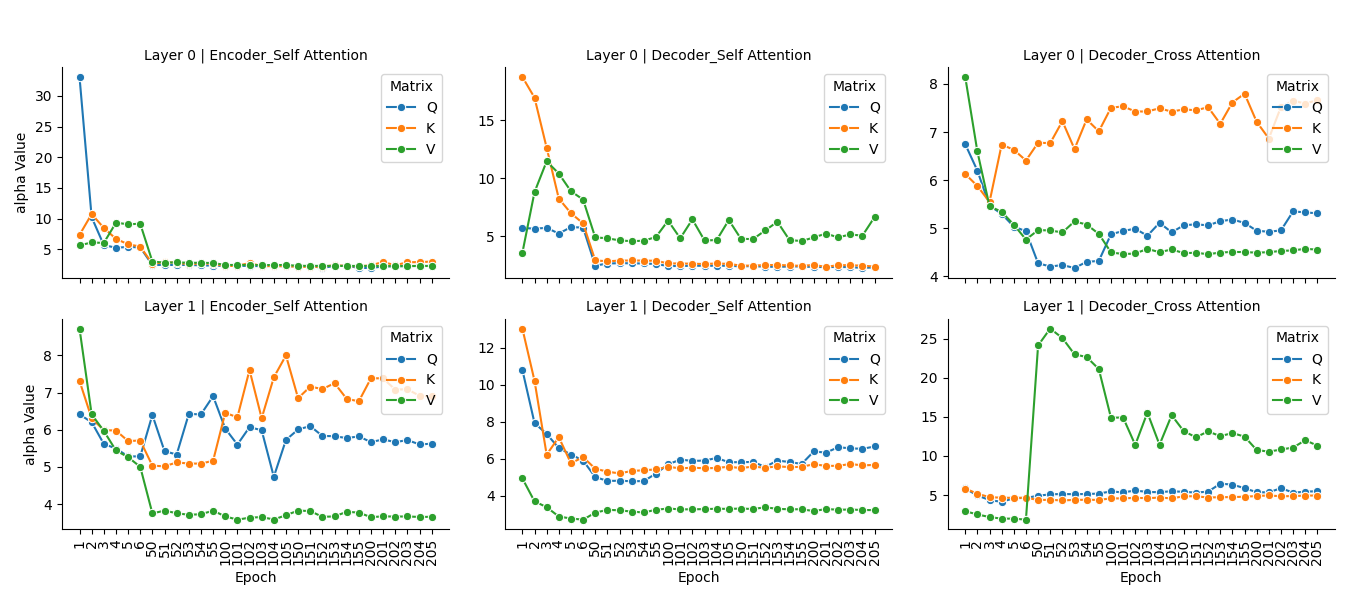}
		\caption{}
		\label{fig: xmin for T2_En-Ch 9.62M(a)}
	\end{subfigure}
	\hspace{0.32\linewidth}
	\begin{subfigure}[b]{0.23\linewidth}
		\includegraphics[height=1\linewidth]{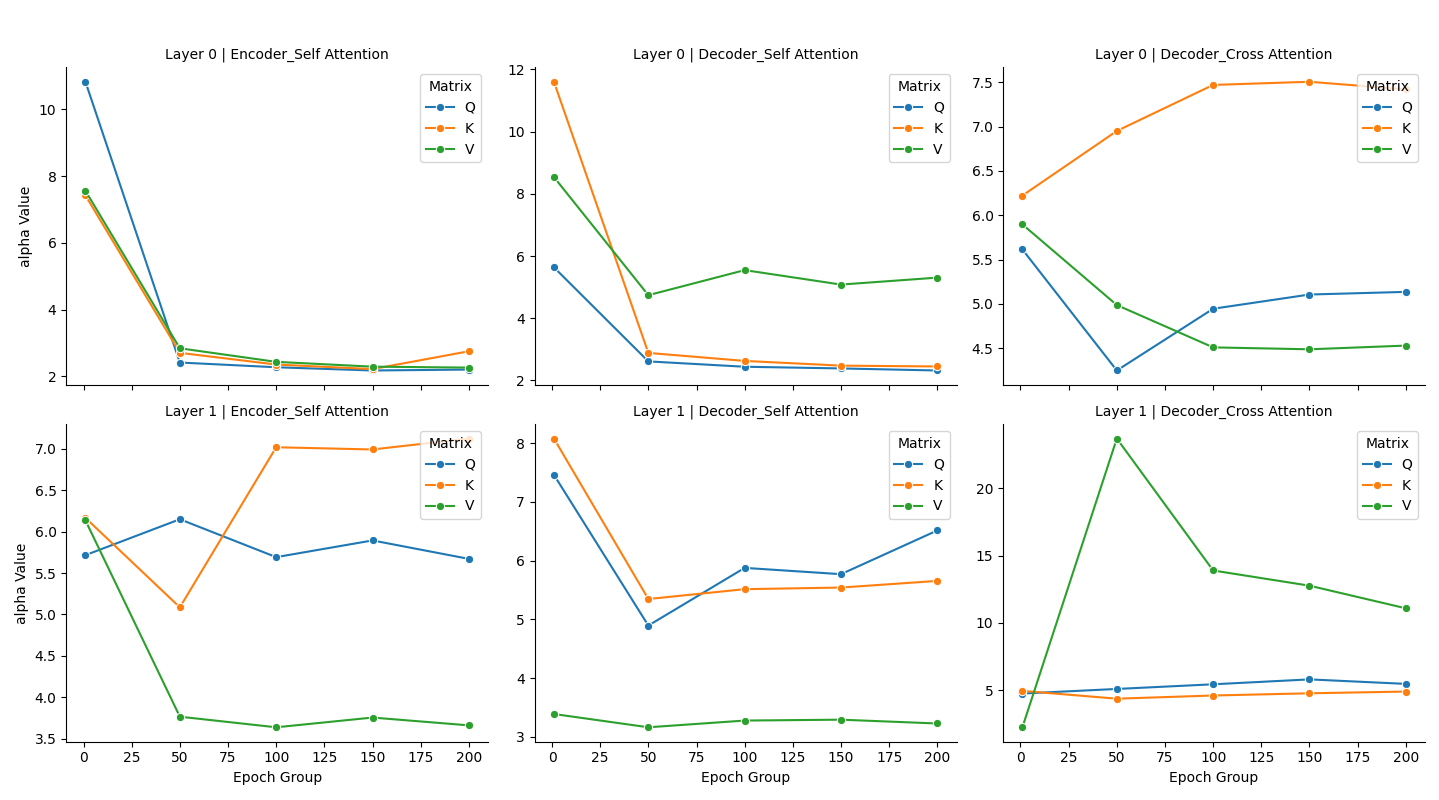}
		\caption{}
	\end{subfigure}
	
	\begin{subfigure}[b]{0.23\linewidth}
		\includegraphics[height=1\linewidth]{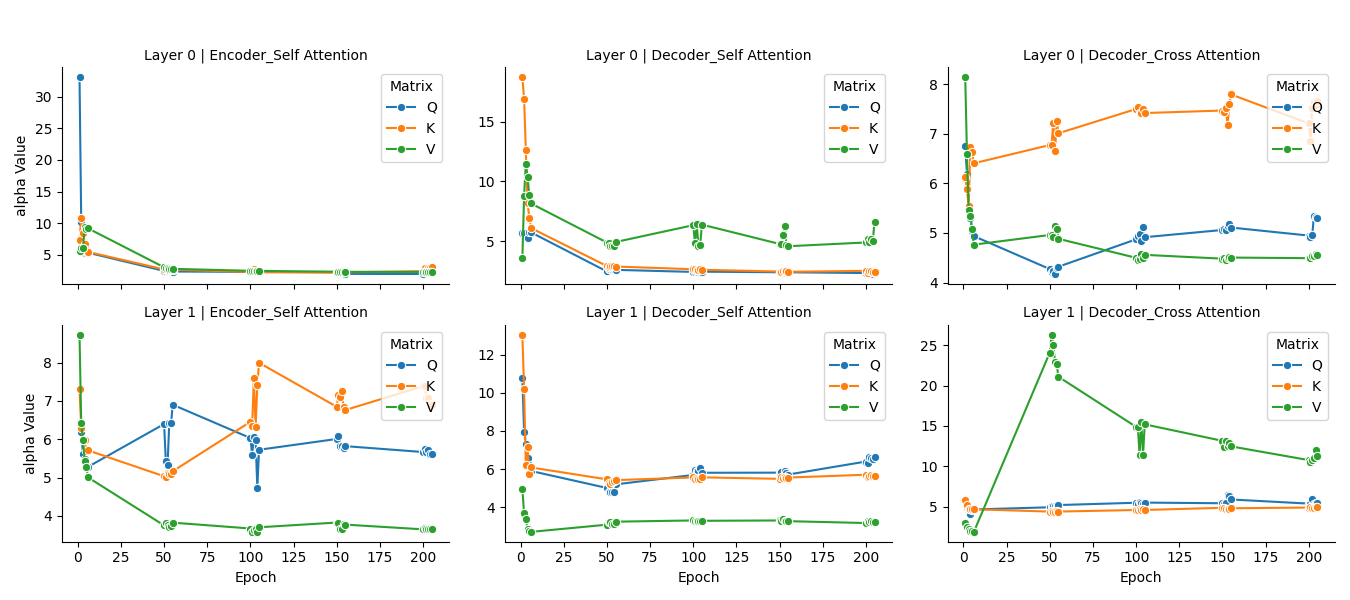}
		\caption{}
	\end{subfigure}
	\hspace{0.32\linewidth}
	\begin{subfigure}[b]{0.23\linewidth}
		\includegraphics[height=1\linewidth]{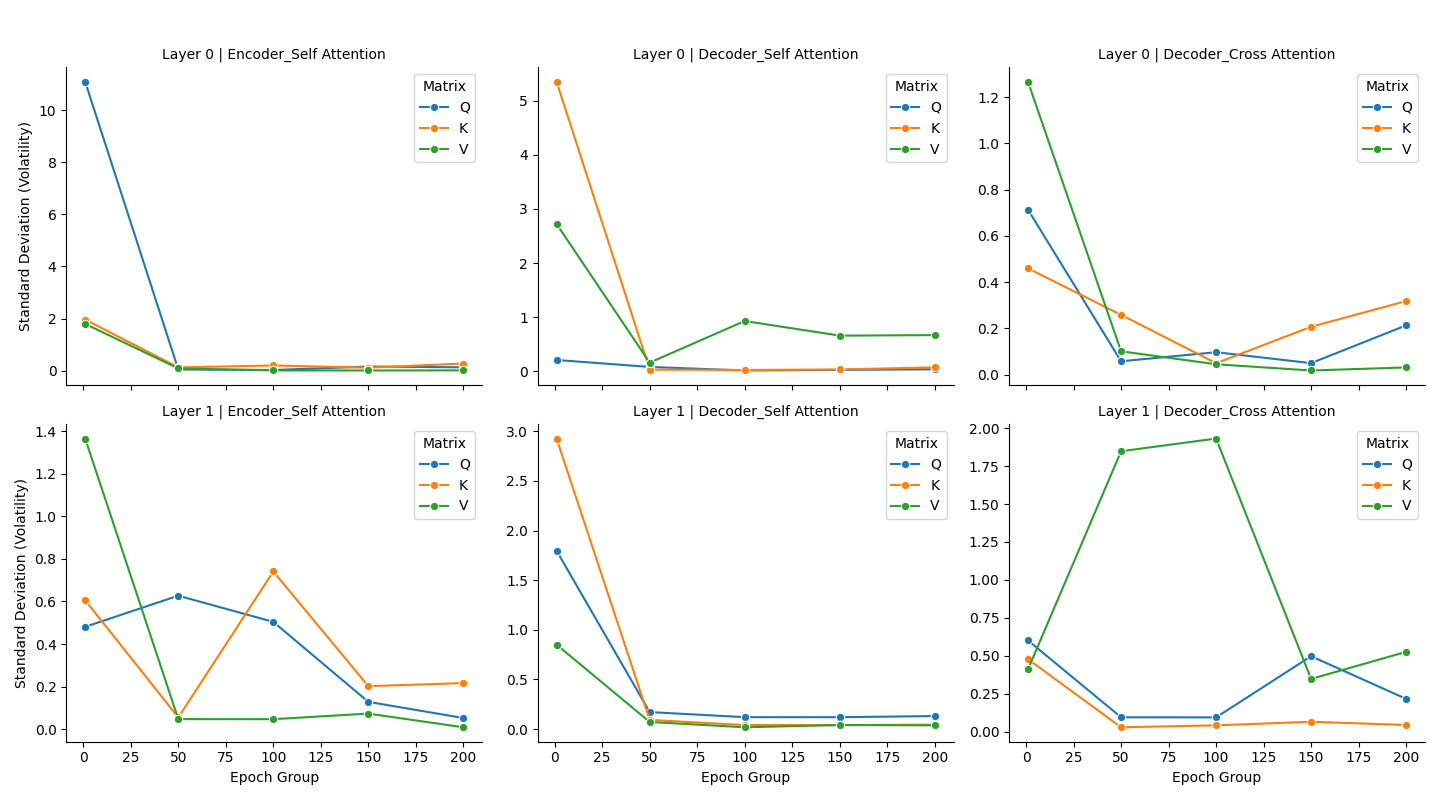}
		\caption{}
	\end{subfigure}
	\caption{$\alpha$ parameter for T2\_En-Ch 9.62M}
	\label{fig: alpha for T2_En_Ch 9.62M}
\end{figure}

\begin{figure}[H]
	\raggedright
	\begin{subfigure}[b]{0.23\linewidth}
		\centering
		\includegraphics[height=1\linewidth]{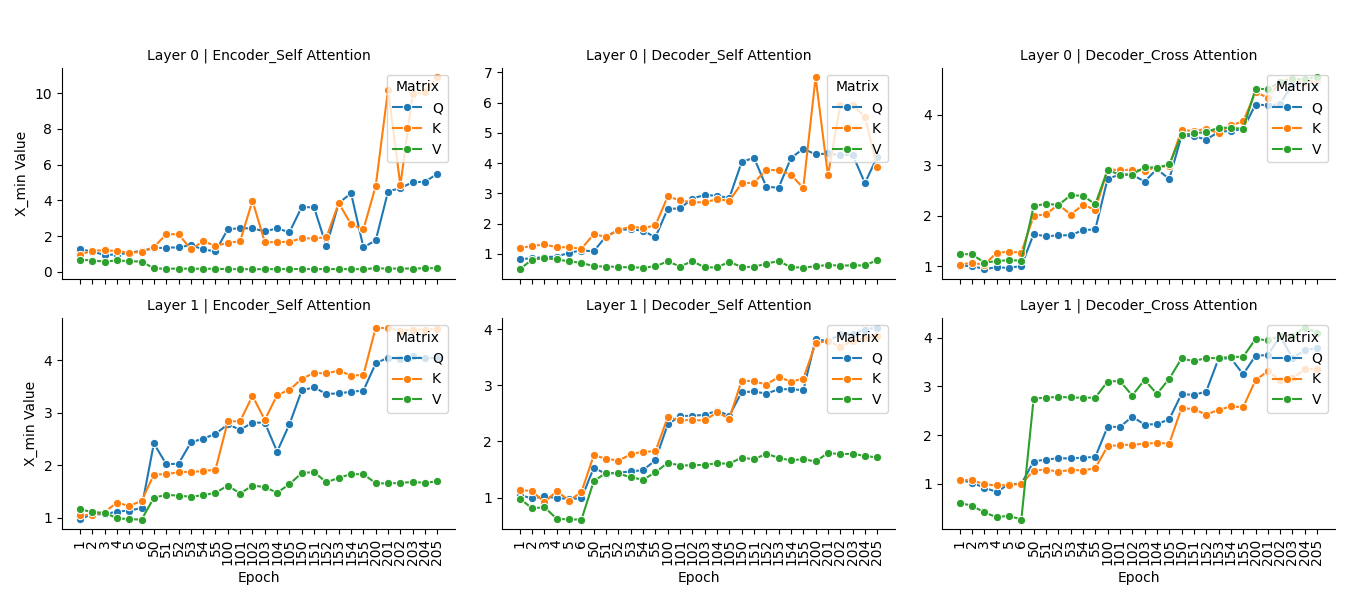}
		\caption{}
	\end{subfigure}
	\hspace{0.32\linewidth}
	\begin{subfigure}[b]{0.23\linewidth}
		\centering
		\includegraphics[height=1\linewidth]{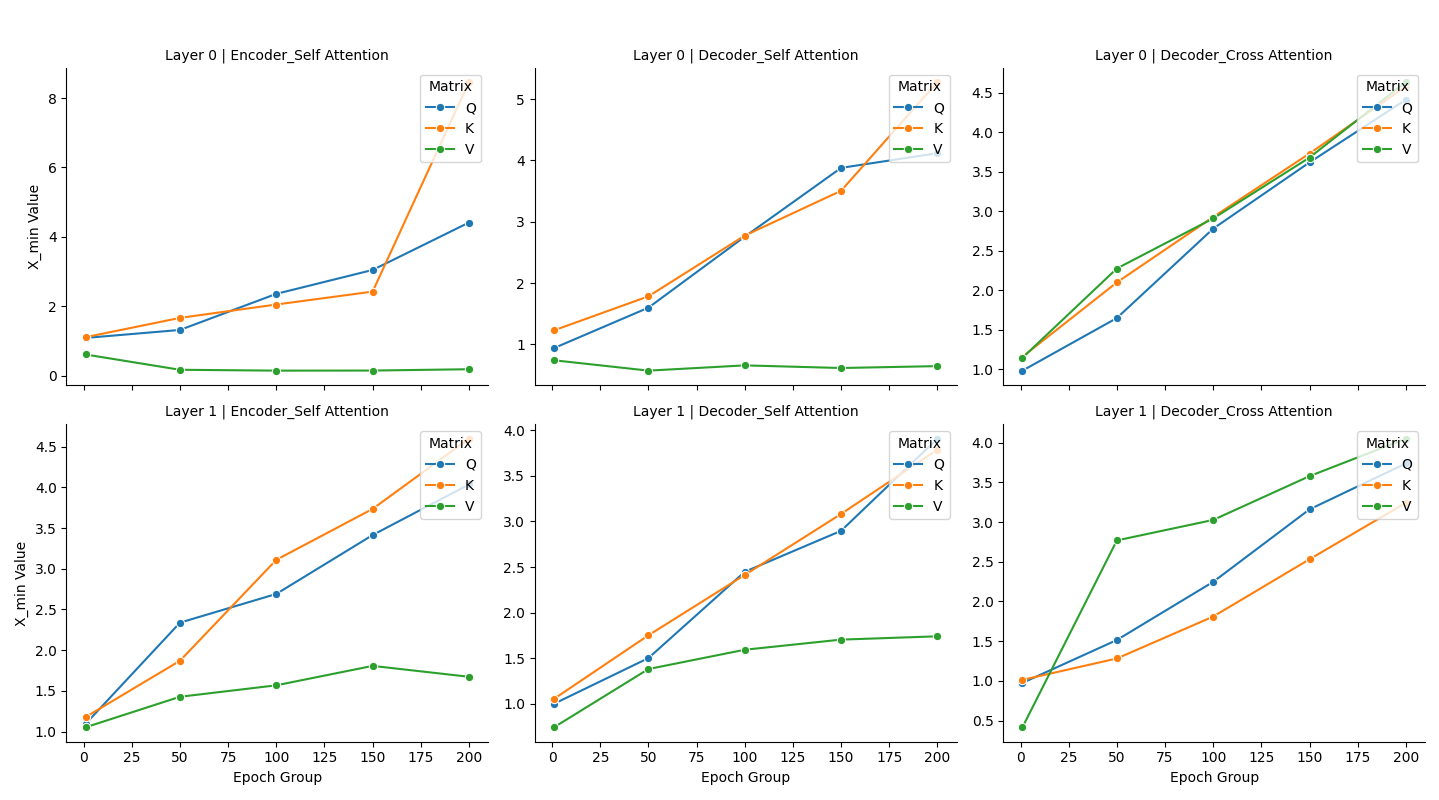}
		\caption{}
	\end{subfigure}
	
	\begin{subfigure}[b]{0.23\linewidth}
		\centering
		\includegraphics[height=1\linewidth]{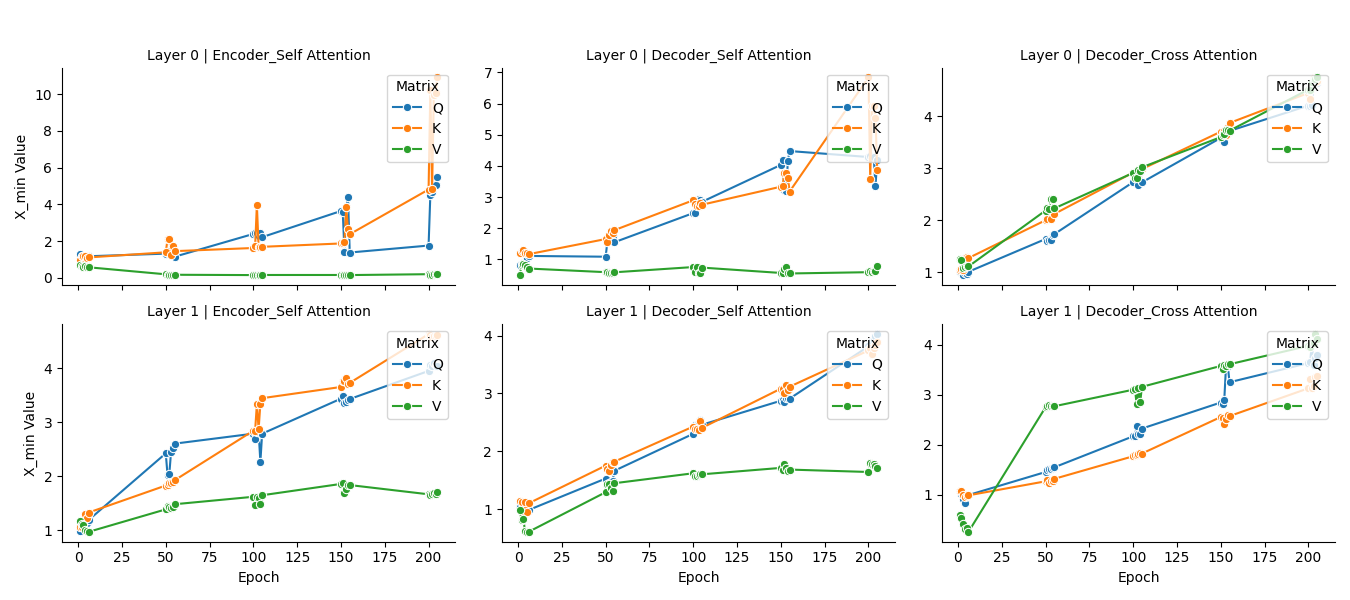}
		\caption{}
	\end{subfigure}
	\hspace{0.32\linewidth}
	\begin{subfigure}[b]{0.23\linewidth}
		\centering
		\includegraphics[height=1\linewidth]{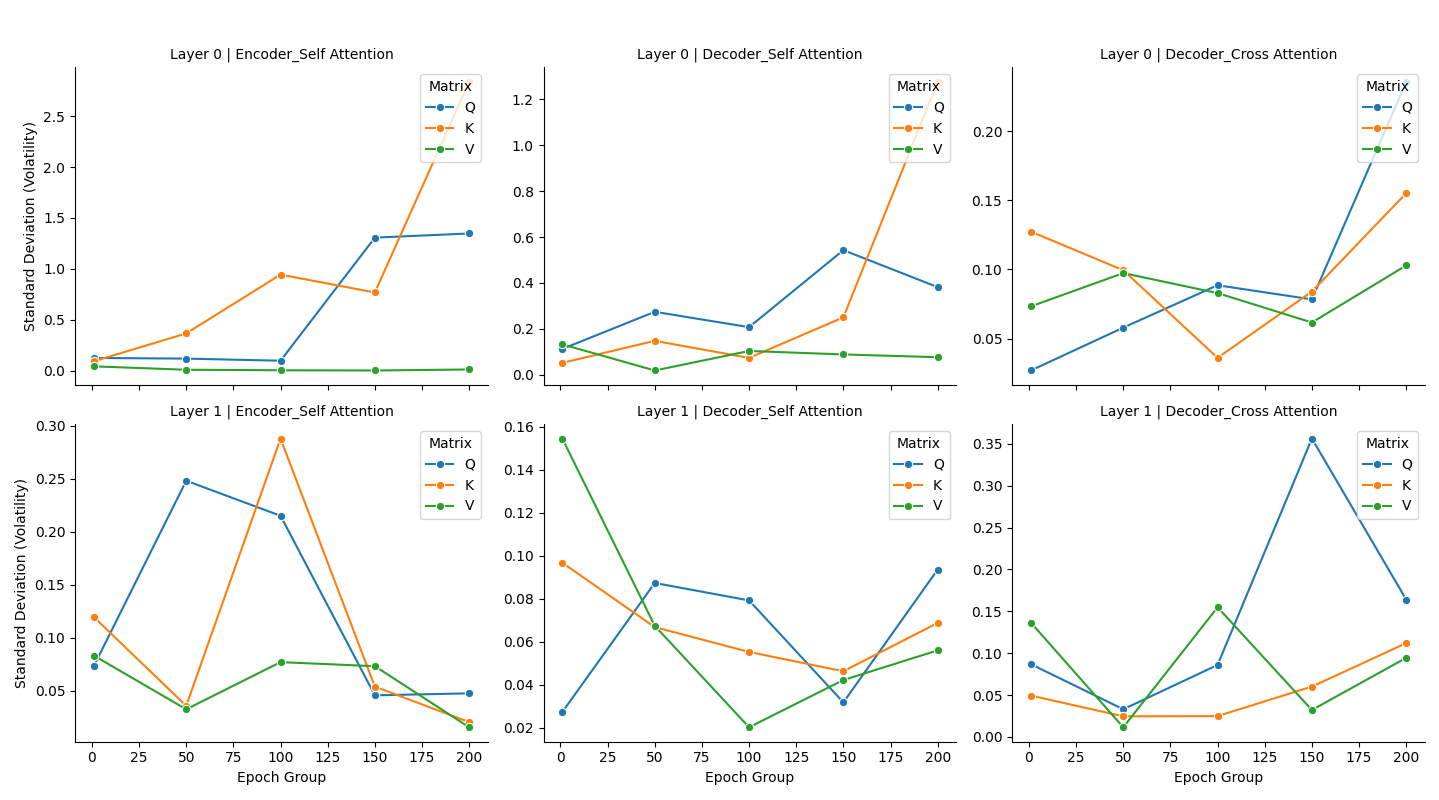}
		\caption{}
	\end{subfigure}
	\caption{$x_{min}$ parameter for T2\_En-Ch 9.62M}
	\label{fig: xmin for T2_En_Ch 9.62M}
\end{figure}

By examining the fitted parameters $\alpha$ and $x_{\min}$ across the two experimental configurations, we identify several salient patterns:

\begin{itemize}
	\item \textbf{Shallow layers are more stable than deep layers, and the cross-attention mechanism exhibits the most significant fluctuations:}
Across both the T1 and T2 models, the $\alpha$ trajectories of shallower layers (e.g., Layer-0) are notably smoother and more stable than those of deeper layers (e.g., Layer-1 and Layer-2) (see Figures~\ref{fig: alpha for T1_En_Ch 9.60M} and~\ref{fig: alpha for T2_En_Ch 9.62M}).In contrast, the $\alpha$ values of the decoder’s cross-attention layers (\texttt{Decoder-cross}) display the strongest randomness and volatility throughout training.This behavior likely reflects the inherently dynamic alignment process between the source and target language representations performed by the cross-attention mechanism, leading to slower convergence of the spectral properties of its weight matrices.
	
	\item \textbf{The encoder's self-attention layers (\texttt{Encoder-self}) at Layer-0 exhibit a highly consistent evolutionary pattern:}
The $\alpha$ values fluctuate substantially during the early training phase but gradually converge thereafter.  
After approximately 50 epochs, the $\alpha$ values stabilize within the range of 2-3.  
A smaller $\alpha$ corresponds to a ``heavier'' spectral tail, confirming that the empirical spectral density (ESD) of the matrix exhibits a pronounced heavy-tailed behavior as training progresses.  	
	\item \textbf{The fitted $x_{min}$ for most Q, K, V matrices shows an overall upward trend, with a few exceptions (notably, \texttt{en.0.s.a.v}):}
An increasing $x_{\min}$ suggests that, as training advances, the power-law behavior emerges among larger eigenvalues in the spectral distribution.  
However, the \texttt{en.0.s.a.v} matrix displays a distinct downward and convergent trend.  
This observation highlights the special role of this shallow V matrix, which may be responsible for encoding more fundamental or global features and could be more directly associated with the model’s overall convergence dynamics.  
\end{itemize}
\section{Technical Proofs for recognition criteria for Heavy-Tail based on PL fitting}
\subsection{Proof of Lemma \ref{lem:covergence of d}}\label{pro:lemma}
We analyze the convergence rate of the KS statistic  \(d\)  under the assumption that the power-law exponent \(\alpha\) is known. According to the Dvoretzky-Kiefer-Wolfowitz (DKW) inequality, a probabilistic bound exists for the maximum deviation between the ECDF and the true CDF for any finite sample. This inequality provides a rigorous characterization of the convergence rate of the KS statistic.
	\begin{equation}
		P(\sup_{x \in \mathbb{R}} |\hat{F}_n(x) - F(x)| > \epsilon) \leq 2e^{-2n\epsilon^2}.
	\end{equation}
	Consequently, we obtain
	\[
	P(d > \epsilon) \leq 2e^{-2n_{\text{tail}}\epsilon^2}.
	\]
	Let \(\epsilon = \frac{C}{\sqrt{n_{\text{tail}}}}\), where C is a constant. Substituting this into the inequality yields
	\[
	P\left(d > \frac{C}{\sqrt{n_{\text{tail}}}}\right) \leq 2e^{-2n_{\text{tail}}\left(\frac{C}{\sqrt{n_{\text{tail}}}}\right)^2} = 2e^{-2C^2}.
	\]
Therefore, the convergence rate of the KS distance \(d\) can be expressed as
	\[
	d = \sup_{x \geq x_{\min}} |\hat{F}_{n_{\text{tail}}}(x) - F(x; \alpha, x_{\min})| = O_p\left(\frac{1}{\sqrt{n_{\text{tail}}}}\right).
	\]
\subsection{Proof of Proposition \ref{prop:5.2}}\label{pro:prop}
Consider the KS distance
	\begin{align}
		\hat{d} &= \sup_{x \geq x_{\min}} |\hat{F}_{n_{\text{tail}}}(x) - F(x; \hat{\alpha}, x_{\min})|\nonumber\\
		&\le \sup_{x \geq x_{\min}} |\hat{F}_{n_{\text{tail}}}(x) - F(x; \alpha, x_{\min})| + \sup_{x\geq x_{min}} |F(x;\alpha,x_{min}) - F(x;\hat{\alpha},x_{min})|\nonumber\\
		&	= d + \sup_{x\geq x_{min}} |F(x;\alpha,x_{min}) - F(x;\hat{\alpha},x_{min})|.
	\end{align}
	We next show that $\sup_{x\geq x_{min}} |F(x;\alpha,x_{min}) - F(x;\hat{\alpha},x_{min})| = O\left(\frac{1}{\sqrt{n_{tail}}}\right)$.
	
	Log-likelihood function is
	\[
	l(\alpha) = \log L(\alpha) = \sum_{i=1}^{n_{\text{tail}}} \log p(x_i; \alpha),
	\]
	the MLE estimate \(\hat{\alpha}\) is the value that maximizes \(l(\alpha)\). Thus, it satisfies the first-order condition, where the score function is zero,
	\[
	S(\hat{\alpha}) = \left.\frac{\partial l(\alpha)}{\partial \alpha}\right|_{\alpha=\hat{\alpha}} = \sum_{i=1}^{n_{\text{tail}}} \left.\frac{\partial \log p(x_i; \alpha)}{\partial \alpha}\right|_{\alpha=\hat{\alpha}} = 0.
	\]
	Performing a first-order Taylor expansion of \(S(\hat{\alpha})\) around the true parameter \(\alpha_0\)
	\[
	0 = S(\hat{\alpha}) \approx S(\alpha_0) + (\hat{\alpha} - \alpha_0)S'(\alpha_0),
	\]
	
	\[
	\sqrt{n_{\text{tail}}}(\hat{\alpha} - \alpha_0) \approx \frac{\frac{1}{\sqrt{n_{\text{tail}}}}S(\alpha_0)}{-\frac{1}{n_{\text{tail}}}S'(\alpha_0)}.
	\]	
	By the Central Limit Theorem, the normalized form of the sum of i.i.d. random variables, \(\frac{1}{\sqrt{n_{\text{tail}}}}S(\alpha_0)\), converges in distribution to a normal distribution with a mean of 0 and a variance equal to the Fisher Information \(I(\alpha_0)\), where \(I(\alpha_0) = -E\left[\frac{d^2l(\alpha_0)}{d\alpha^2}\right]\).
	According to the Law of Large Numbers, the average of i.i.d. random variables, \(-\frac{1}{n_{\text{tail}}}S'(\alpha_0)\), converges in probability to its expected value, which is also the Fisher Information \(I(\alpha_0)\).
	
	Combining these results, we obtain that
	\[
	\sqrt{n_{\text{tail}}}(\hat{\alpha} - \alpha_0) \xrightarrow{d} \mathcal{N}\left(0, \frac{I(\alpha_0)}{I(\alpha_0)^2}\right) = \mathcal{N}\left(0, \frac{1}{I(\alpha_0)}\right).
	\]
	This is the Asymptotic Normality of MLE. This conclusion indicates that \(\sqrt{n_{\text{tail}}}(\hat{\alpha} - \alpha_0)\) converges to a random variable with finite variance. Therefore, \(\hat{\alpha} - \alpha_0\) must scale as \(1/\sqrt{n_{\text{tail}}}\) to counteract the magnifying effect of \(\sqrt{n_{\text{tail}}}\), i.e., $\hat{\alpha} - \alpha_0 = O_p\left(\frac{1}{\sqrt{n_{tail}}}\right)$.
	Hence,
	\begin{align}
		F(x;\alpha,x_{min}) - F(x;\hat{\alpha},x_{min}) =& \left(\frac{x}{x_{min}}\right)^{-\hat{\alpha}+1} - \left(\frac{x}{x_{min}}\right)^{-\alpha+1}\nonumber\\
		=& \left(\frac{x}{x_{min}}\right)^{-\alpha+1}\left(\left(\frac{x}{x_{min}}\right)^{-(\hat{\alpha}-\alpha)}-1\right),
	\end{align}
	Since $\sup_{x,\alpha} \left(\frac{x}{x_{min}}\right)^{-\alpha+1}$ is a constant and $x\ge x_{min}$, we have
	\begin{align}
		F(x;\alpha,x_{min}) - F(x;\hat{\alpha},x_{min}) = O_p\left(\frac{1}{\sqrt{n_{tail}}}\right).
	\end{align}
	
	Hence, the convergence rate of the distance \(\hat{d}\) is
	\[
	\hat{d} = O_p\left(\frac{1}{\sqrt{n_{\text{tail}}}}\right).
	\]
\section{Determination of Threshold C via Monte Carlo Simulation}\label{sec:Monte Carlo simulation}
To establish a robust and principled value for C, we employ a Monte Carlo simulation method for calibration. This approach empirically constructs the distribution of a statistic under the null hypothesis through a large number of repeated simulations, thereby enabling the selection of an appropriate threshold. The specific procedure is as follows.

\textbf{Step 1: Set Simulation Parameters}
\begin{itemize}
	\item \textbf{Theoretical Power-Law Exponent (\(\alpha_0\)):} To ensure the robustness of the threshold C across different tail shapes, we consider several typical values for the PL exponent, e.g., \(\alpha_0 = \{1.5, 2.0, 2.5, 3.0\}\).
	\item \textbf{Fixed Minimum Value (\(x_{\min}\)):} We adopt \(x_{\min} = 0\) as determined in our primary analysis.
	\item \textbf{Tail Data Count (\(n_{\text{tail}}\)):} To examine the sensitivity of C to sample size, several characteristic tail sizes are tested, e.g., 
\(n_{\text{tail}} in \{100, 200, 300\}\).
	\item \textbf{Number of Monte Carlo Iterations (k):} We set \(k = 10,000\) repetitions.
\end{itemize}

\textbf{Step 2: Generate Synthetic Data}

Synthetic datasets are generated under the null hypothesis, i.e., data perfectly follow a Power-Law distribution with the specified parameters (\(\alpha_0, x_{\min}\)). We employ inverse transform sampling:
\[
x = x_{\min} \cdot (1 - u)^{-\frac{1}{\alpha_0 - 1}},
\]
where \(u\) is drawn uniformly from [0, 1]. For each combination of (\(\alpha_0, n_{\text{tail}}\)), this process is repeated k times to obtain k independent synthetic datasets.

\textbf{Step 3: Calculate the Normalized Statistic S}

For each of the k synthetic datasets, we perform the following calculations:

\begin{enumerate}
	\item Estimate the PL exponent \(\hat{\alpha}\) via Maximum Likelihood Estimation (MLE).
	\item Compute the KS distance, \(\hat{d}\), between the \(\hat{F}_{n_{\text{tail}}}(x)\) of the dataset and \(F(x; \hat{\alpha}, x_{\min})\) using the estimated parameter \(\hat{\alpha}\).
	\item Calculate the normalized KS statistic S:
	\[
	S = \hat{d} \cdot \sqrt{n_{\text{tail}}}, \quad \text{where} \ \hat{d} = \sup_{x \geq x_{\min}} |\hat{F}_{n_{\text{tail}}}(x) - F(x; \hat{\alpha}, x_{\min})|.
	\]
\end{enumerate}

\textbf{Step 4: Select Threshold C}

After k iterations, the resulting set of S values constitutes the empirical distribution of the normalized KS statistic under the null hypothesis.
\begin{itemize}
	\item \textbf{Distribution Visualization:} Plot a histogram of the k values of S. This histogram provides a visual representation of the empirical probability distribution of the normalized KS statistic S under the null hypothesis.
	\item \textbf{Selection of the Threshold C:} Determine a ``safe'' upper bound from the histogram. This value, C, should exceed nearly all simulated S values, ensuring a conservative and robust threshold.
\end{itemize}

Following the steps outlined above, the results of our simulation are presented in Figure~\ref{fig:monte_carlo_calibration}.
The simulation results indicate that, across 10,000 Monte Carlo iterations for various values of  $\alpha$ and tail data sizes ($n_{\text{tail}}$), the upper bound of the normalized KS statistic remains highly stable and does not exceed 2. Consequently, we select \textbf{C=2} as the critical threshold.
\begin{figure}[h]
	\centering \includegraphics[width=0.45\textwidth]{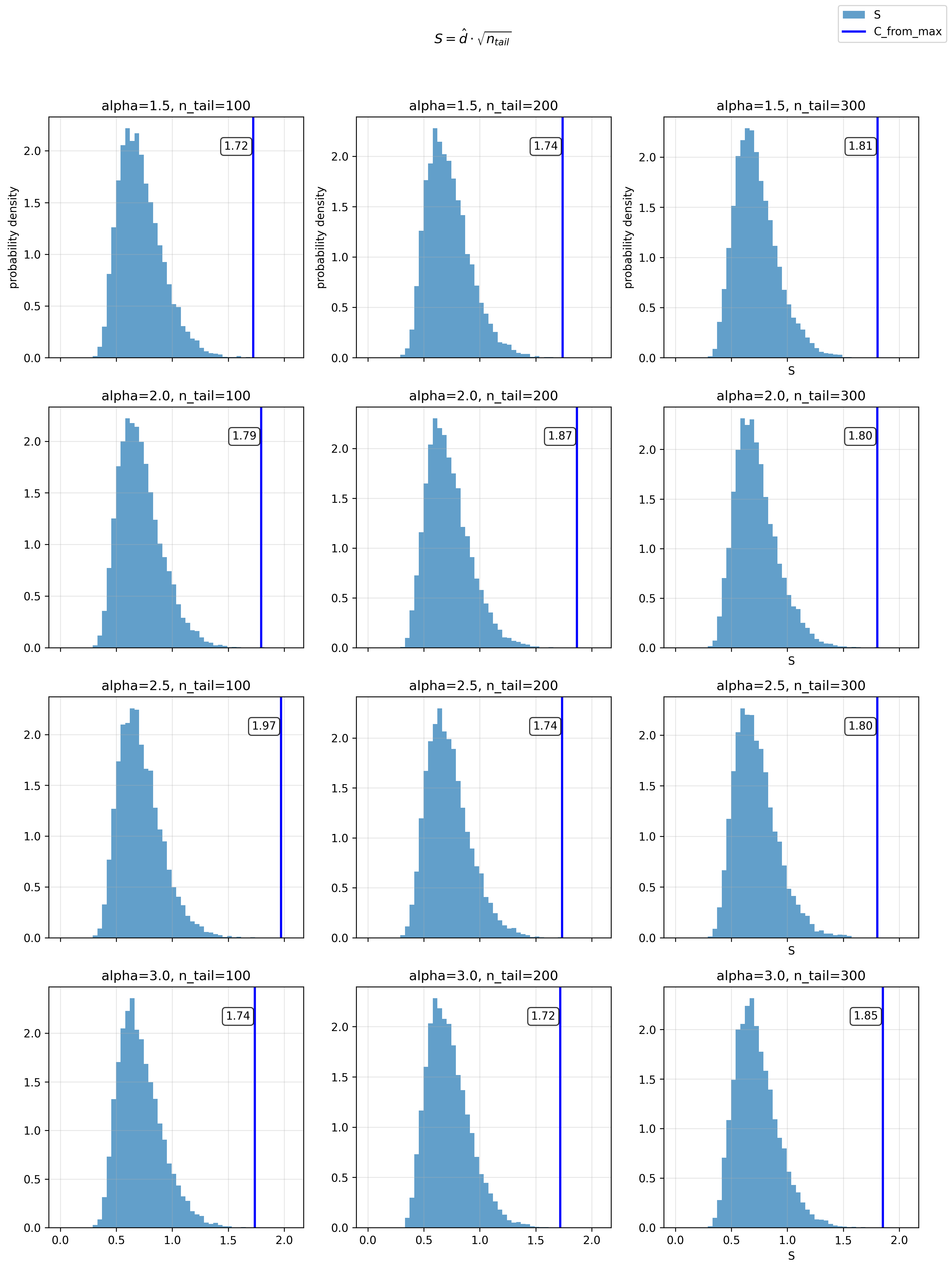}
	\caption{Monte Carlo simulation for calibrating the threshold constant C. Each subplot displays the distribution of the statistic $S = \hat{d} \cdot \sqrt{n_{\text{tail}}}$ over 10,000 runs, for different true values of $\alpha$ and tail sizes ($n_{\text{tail}}$). The vertical blue line indicates the proposed threshold.}
	\label{fig:monte_carlo_calibration}
\end{figure}
\section{Evolutionary Trajectory of the Early-Stopping Metric for \textbf{T2\_En-Ch 9.62M} and \textbf{T3\_En-Ch 19.20M}}\label{sec:Evolutionary_Trajectory}
\begin{figure}[H]
	\centering
	\begin{subfigure}[b]{0.48\linewidth}
		\centering
		\includegraphics[width=0.8\linewidth]{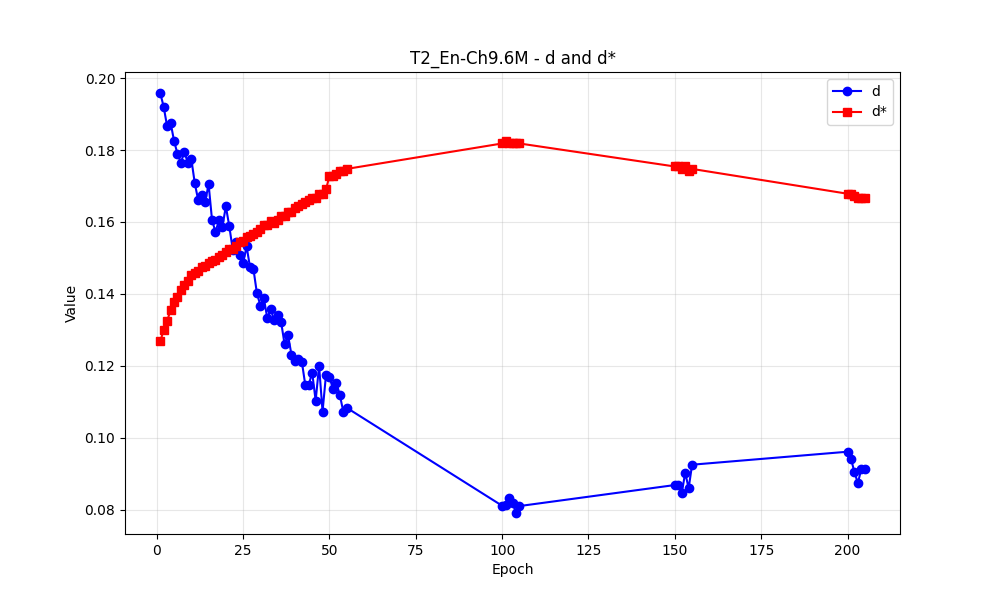}
		\caption{}
	\end{subfigure}
	\begin{subfigure}[b]{0.48\linewidth}
		\centering
		\includegraphics[width=0.8\linewidth]{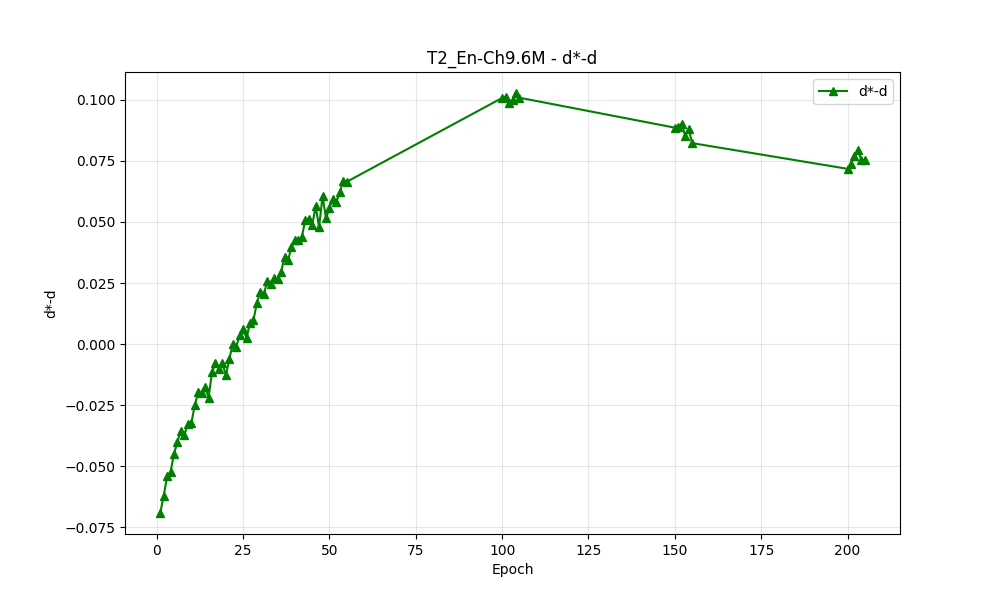}
		\caption{}
	\end{subfigure}
	
	\begin{subfigure}[b]{0.48\linewidth}
		\centering
		\includegraphics[width=0.8\linewidth]{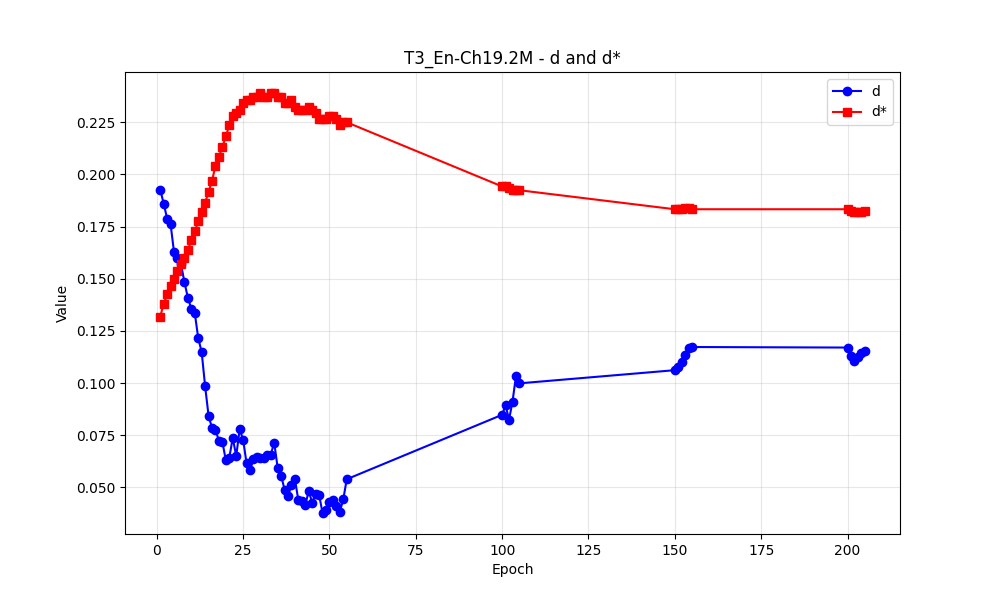}
		\caption{}
	\end{subfigure}
	\begin{subfigure}[b]{0.48\linewidth}
		\centering
		\includegraphics[width=0.8\linewidth]{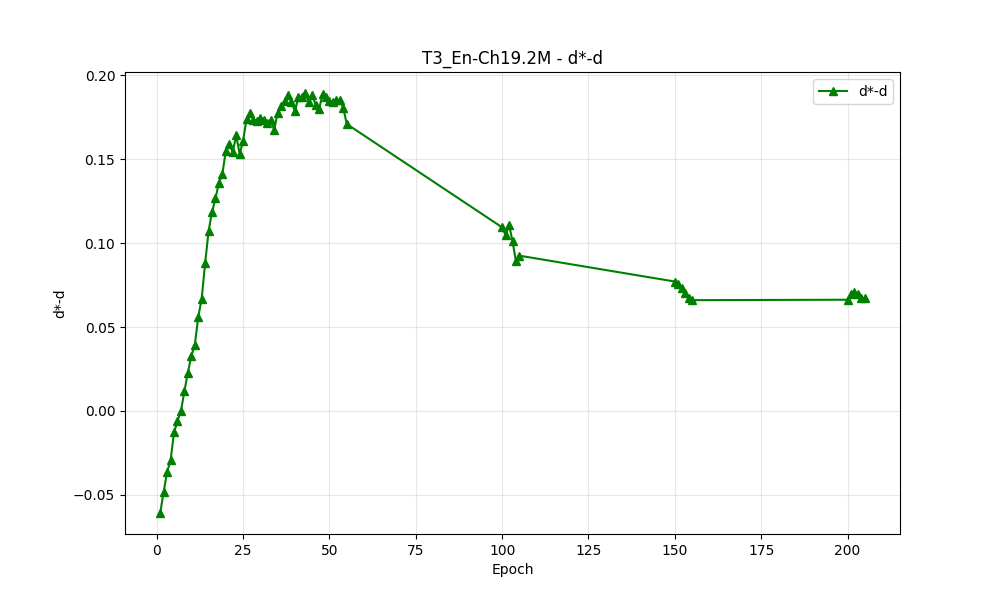}
		\caption{}
	\end{subfigure}
	\caption{Evolutionary Trajectory of the Early-Stopping Metric. (a) and (b) describe the variation of d and d* with training epochs for \textbf{T2\_En-Ch 9.62M} and \textbf{T3\_En-Ch 19.20M}, respectively; while (b) and (d), illustrate the changes in the difference between d* and d across training epochs for these two models, respectively.}
\end{figure}

\end{document}